\theoremstyle{plain}
\newtheorem{lemma}{Lemma}
\newtheorem{theorem}{Theorem}
\newtheorem{corollary}{Corollary}
\newcommand{\methodname}{FairMILE}
\theoremstyle{definition}
\newtheorem{definition}{Definition}
\newtheorem{problem}{Problem}
  \providecommand\BibTeX{{%
    \normalfont B\kern-0.5em{\scshape i\kern-0.25em b}\kern-0.8em\TeX}}}
\renewcommand\footnotetextcopyrightpermission[1]{}
\begin{document}

\title{\methodname{}: Towards an Efficient Framework for Fair Graph Representation Learning}


\author{Yuntian He}
\affiliation{%
  \institution{The Ohio State University}
  \streetaddress{2015 Neil Ave}
  \city{Columbus}
  \state{Ohio}
  \country{USA}
  \postcode{43210}
}
\email{he.1773@osu.edu}

\author{Saket Gurukar}
\authornote{Work done when the author was a graduate student at OSU.}
\affiliation{%
  \institution{Samsung Research America}
  \streetaddress{665 Clyde Ave}
  \city{Mountain View}
  \state{California}
  \country{USA}
  \postcode{94043}
}
\email{saket.gurukar@gmail.com}

\author{Srinivasan Parthasarathy}
\affiliation{%
  \institution{The Ohio State University}
  \streetaddress{2015 Neil Ave}
  \city{Columbus}
  \state{Ohio}
  \country{USA}
  \postcode{43210}
}
\email{srini@cse.ohio-state.edu}

\renewcommand{\shortauthors}{He et al.}

\begin{abstract}
    Graph representation learning models have demonstrated great capability in many real-world applications. Nevertheless, prior research indicates that these models can learn biased representations leading to discriminatory outcomes. A few works have been proposed to mitigate the bias in graph representations. However, most existing works require exceptional time and computing resources for training and fine-tuning. To this end, we study the problem of efficient fair graph representation learning and propose a novel framework \methodname{}. \methodname{} is a multi-level paradigm that can efficiently learn graph representations while enforcing fairness and preserving utility. It can work in conjunction with any unsupervised embedding approach and accommodate various fairness constraints. Extensive experiments across different downstream tasks demonstrate that \methodname{} significantly outperforms state-of-the-art baselines in terms of running time while achieving a superior trade-off between fairness and utility.
\end{abstract}

\begin{CCSXML}
<ccs2012>
   <concept>
       <concept_id>10010147.10010257</concept_id>
       <concept_desc>Computing methodologies~Machine learning</concept_desc>
       <concept_significance>500</concept_significance>
       </concept>
 </ccs2012>
\end{CCSXML}

\ccsdesc[500]{Computing methodologies~Machine learning}

\keywords{Fairness, Machine Learning, Graph Representation Learning}



\maketitle

\section{Introduction}

A critical task in graph learning is to learn the hidden representations of the graph, also known as \textit{graph embedding}. The goal of graph embedding is to preserve both structure properties and node features in the graph. Such embeddings can be used to characterize individual users (e.g. Amazon and Netflix) and to promote new connections (e.g. LinkedIn).
Various  methods have been developed for this purpose~\cite{perozzi2014deepwalk, grover2016node2vec, qiu2018network, gurukar2022benchmarking}, including those based on graph neural networks (GNNs) \cite{kipf2017semi, hamilton2017inductive, xu2019how}. 
Such models have been effective in many real-world applications, such as crime forecasting \cite{jin2020addressing}, fraud detection \cite{wang2019semi}, and recommendation \cite{fan2019graph, gurukar2022multibisage}.

Given the high-stake decision-making scenarios that such models are typically deployed in, it is critical to ensure that the decisions made by these models are fair.  Prior studies \cite{dai2021say, rahman2019fairwalk, agarwal2021towards} reveal that graph representation learning models may inherit the bias from the underpinning graph data. A common source of bias is node features which may contain historical bias in sensitive attributes or other correlated attributes \cite{dwork2012fairness}. Another cause of bias is the \textit{homophily effect} - promoting links that may lead to increased segregation. Such bias can lead to a biased distribution in the embedding space \cite{dong2022fairness} and cause unfair treatment towards particular sensitive attributes such as gender and ethnicity \cite{dwork2012fairness}.
There is a clear need to alleviate such bias, ideally without impacting the bottom line of model performance. 



Recent efforts to address this problem seek to enhance fairness by adapting existing GNN models~\cite{dai2021say,agarwal2021towards,li2020dyadic,kang2020inform, current2022fairmod}. However, such adaptions often add to the models' complexity -- on large-scale graphs, these models either cannot finish execution in a reasonable amount of time or often result in an out-of-memory error. These concerns are amplified by recent articles that suggest that the training time for many AI models is simply becoming unsustainable ~\cite{lohn2022ai, strubell2019energy} with respect to both compute and emission costs. To tackle this issue, a naive solution is to apply scalability improvement techniques such as the multi-level framework ~\cite{karypis1998multilevel, liang2021mile, he2021distmile}. However, these solutions lack fairness considerations. \autoref{fig:preliminary} evaluates these approaches in terms of efficiency, fairness, and utility. The results demonstrate that: (1) Prior fairness-aware models are time-consuming for fair graph representation learning; (2) Scalable approaches like MILE \cite{liang2021mile} cannot enhance the fairness in embeddings. These observations highlight the challenges of balancing efficiency, fairness, and utility in the problem of fair graph representation learning.


In addition to inefficiency, there are some other challenges with existing work. First, some works adapt existing unsupervised graph embedding approaches for fairness~\cite{rahman2019fairwalk, khajehnejad2021crosswalk}, but it is challenging to accommodate all such models. Second, many fair representation learning methods only consider a single, binary sensitive attribute, while real-world graphs usually have multiple multi-class sensitive attributes - limiting their applicability.

To address the above-mentioned issues, we present {\bf Fair} {\bf M}ult{\bf I} {\bf L}evel {\bf E}mbedding framework (\methodname{}). \methodname{} is a general framework for fair and efficient graph representation learning. It adopts a multi-level framework used by recent scalable embedding methods \cite{liang2021mile, akyildiz2020gosh, chen2018harp, he2021distmile, he2022webmile}. However, unlike other multi-level frameworks, our framework incorporates fairness as a first-class citizen. 
Our framework is method agnostic in that it can accommodate any unsupervised graph embedding method treating it as a black box. 
Moreover, unlike a majority of fair graph representation learning models, \methodname{} can learn fair embeddings with respect to multiple multi-class sensitive attributes simultaneously.
To summarize, our main contributions are:

\begin{figure}[t]
\vskip -0.1in
\centering
\includegraphics[width=\linewidth]{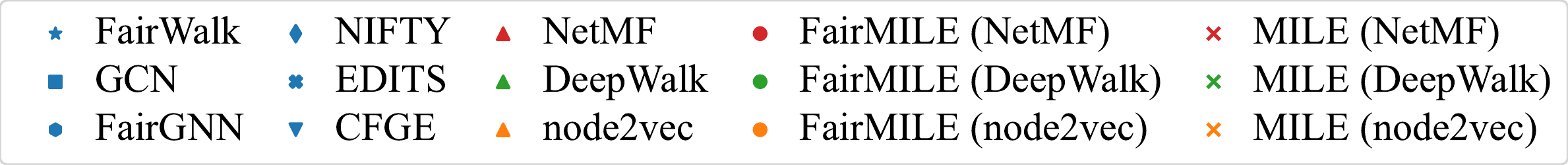}\\
\includegraphics[width=\linewidth]{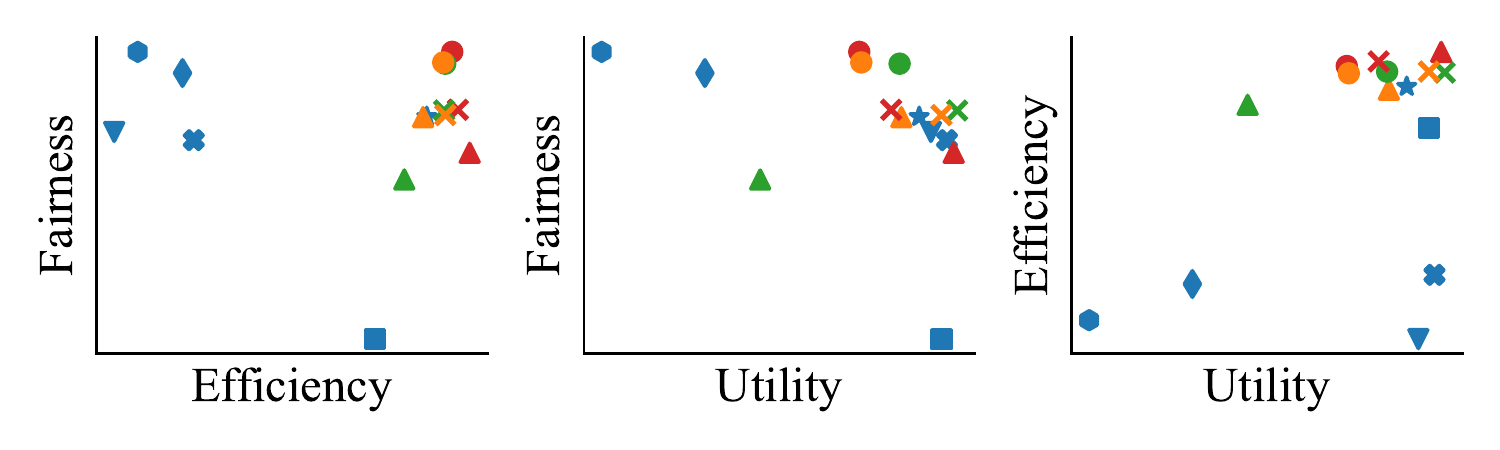}
\vskip -0.1in
\caption{Evaluation of fairness-aware methods, graph embedding methods, MILE, and this work (\methodname{}) in terms of efficiency, fairness, and utility. The top right corner of each plot corresponds to the best performance. This demonstrates that existing fair representation learning works are inefficient and scalable embedding methods are unable to enforce fairness, which highlights our novelty and contributions. } 
\label{fig:preliminary}
\Description[Evaluation of the tradeoff between efficiency, fairness, and utility.]{Evaluation of the tradeoff between efficiency, fairness, and utility. The top right corner of each plot corresponds to the best performance.}
\vskip -0.15in
\end{figure}

\begin{itemize}
    \item \textbf{Novelty}: To the best of our knowledge, this is
the first work that seeks to improve the efficiency issue present in fair graph representation learning. To that end, we develop a general-purpose framework called \methodname{}.
    \item \textbf{Model-agnostic}: \methodname{} can easily accommodate any unsupervised graph embedding methods and improve their fairness while preserving the utility.
    \item \textbf{Efficiency and versatility}: Compared with existing approaches, \methodname{} successfully improves the efficiency of fair graph representation learning. In addition, \methodname{} can achieve fairness towards \textit{multiple} and \textit{non-binary} sensitive attributes, which most prior works fail to consider.
    \item \textbf{Evaluation}: We demonstrate both the efficacy and efficiency of \methodname{} across both node classification and link prediction settings. Our results show that \methodname{} can improve efficiency by up to two orders of magnitude and fairness by several factors while realizing comparable accuracy to competitive strawman. 
\end{itemize}


\section{Preliminaries}

\subsection{Notations}
Let $\mathcal{G} = (\mathcal{V}, \mathcal{E})$ be an undirected graph, where $\mathcal{V}$ is the set of nodes, and $\mathcal{E} \subseteq \mathcal{V} \times \mathcal{V}$ is the set of edges. Let $\bm{A}$ be the graph adjacency matrix, where $\bm{A}_{u, v}$ denotes the weight of edge $(u, v)$. $\bm{A}_{u, v} = 0$ means $u, v$ are not connected. $\delta(u)$ denotes the degree of node $u$. The graph also contains a set of sensitive attributes $\mathcal{F}$ (e.g., gender and race). Each attribute may have a binary or multi-class value associated with a particular demographic group.

\subsection{Fairness Metrics}
In this paper, we focus on group fairness and use two metrics for evaluation. These metrics have been widely adopted in prior works~\cite{agarwal2021towards,dong2022edits,dwork2012fairness,dai2021say,rahman2019fairwalk}. Without loss of generality, we first introduce them in a binary prediction scenario with a binary sensitive attribute, and then we extend them to a general multi-class case.

\begin{definition} \textbf{\textit{Demographic Parity}} (also known as \textit{statistical parity}) \cite{dwork2012fairness} requires that each demographic group should receive an \textit{advantaged} outcome (i.e., $\hat{Y}>0$) at the same rate, which is formulated as $P(\hat{Y} = 1 | S = 0) = P(\hat{Y} = 1 | S = 1)$, where $\hat{Y}$ is the predicted label and $S \in \mathcal{F}$ is a binary sensitive attribute. To quantify how demographic parity is achieved, prior works \cite{dai2021say, lahoti13operationalizing, agarwal2021towards} define $\Delta_{DP, \mathrm{binary}}$ as
\begin{equation}
    \Delta_{DP, \mathrm{binary}} = |P(\hat{Y} = 1 | S = 0) - P(\hat{Y} = 1 | S = 1)| \nonumber
\end{equation}

To extend demographic parity for multi-class sensitive attributes, Rahman et al. \cite{rahman2019fairwalk} measure the variance of positive rates among all groups. Here, we further extend it to a multi-class predicted label scenario by averaging the standard deviations (denoted as $\sigma$) across all advantaged classes. The formulation is given as
\begin{equation}
    \Delta_{DP} = \frac{1}{|\mathcal{Y}^{+}|} \sum_{y \in \mathcal{Y}^{+}} \sigma \left ( \{ P(\hat{Y} = y | S = s ): \forall s \} \right ) \nonumber
\end{equation}
where $\mathcal{Y}^{+}$ denotes a set of advantaged classes.
\end{definition}

\begin{definition}
\textbf{\textit{Equality of Opportunity}} requires that each demographic group has an identical probability to receive a specific advantaged outcome for its members with this advantaged ground-truth label. In binary classification tasks with respect to a binary sensitive attribute, existing works \cite{dai2021say, agarwal2021towards, dong2022edits} compute the difference of true positive rates across two groups to measure the equality of opportunity, which is formulated as
\begin{equation}
\begin{split}
    \Delta_{EO, \mathrm{binary}} = ~ & |P(\hat{Y} = 1 | Y = 1, S = 0) - P(\hat{Y} = 1 | Y = 1, S = 1)| \nonumber
\end{split}
\end{equation}
where $Y$ is the ground truth label. Similarly,
we define a new metric $\Delta_{EO}$ to extend equality of opportunity to a general scenario with multi-class labels and attributes:
\begin{equation}
    \Delta_{EO} = \frac{1}{|\mathcal{Y}^{+}|} \sum_{y \in \mathcal{Y}^{+}} \sigma \left ( \{ P(\hat{Y} = y | Y = y, S = s): \forall s \} \right ) \nonumber
\end{equation}
\end{definition}

Note that the previously adopted metrics for binary predicted labels and binary sensitive attributes are a special case of our proposed measures, i.e., $\Delta_{DP} = \frac{1}{2}\Delta_{DP, \mathrm{binary}}$ and $\Delta_{EO} = \frac{1}{2}\Delta_{EO, \mathrm{binary}}$.

\subsection{Problem Statement}

\begin{problem} Given a graph $\mathcal{G} = (\mathcal{V}, \mathcal{E})$, the embedding dimensionality $d$, and a set of sensitive attributes $\mathcal{F}$, the problem of \textbf{Fair Graph Representation Learning} aims to learn a fair embedding model $f : \mathcal{V} \rightarrow \mathbb{R}^d$ with less inherent bias towards attributes in $\mathcal{F}$ where the present bias is measured with $\Delta_{DP}$ and $\Delta_{EO}$.
\end{problem}



\section{Related Work}

\subsection{Fairness in Machine Learning}
Since machine learning techniques are deployed to make decisions that have societal or ethical implications~\cite{wang2018billion, cohen2019efficient, wang2019semi}, serious concerns over their fairness are raised. There have been various definitions of fairness in machine learning. In this paper, we focus on the most popular definition \textit{group fairness}~\cite{dwork2012fairness},
which requires that an algorithm should treat each demographic group equally. The groups are associated with a single or multiple \textit{sensitive attributes}, such as gender and race. There are also other definitions of fairness including \textit{individual fairness}~\cite{dwork2012fairness} and \textit{counterfactual fairness}~\cite{kusner2017counterfactual}.


Unfair outcomes are mostly caused by data bias and algorithmic bias~\cite{mehrabi2021survey}. There exists a wide range of biases in data. For example, features like home address can be associated with specific races and lead to unfair decisions indirectly~\cite{dong2022fairness}. The design of machine learning algorithms may also unintentionally amplify the bias in data. To address this concern, several fair machine learning algorithms have been proposed in recent years~\cite{singh2018fairness,asudeh2019designing,fu2020fairness,zafar2017fairness,bolukbasi2016man}. A comprehensive survey on fair machine learning is given in~\cite{mehrabi2021survey}.

\begin{table}[t]
\caption{Summary of fair graph representation learning methods.} 
\label{table:summary_fgrl}
\vspace{-0.1in}  
\begin{center}
\begin{small}
\resizebox{\linewidth}{!}{
\begin{tabular}{l|ccc}
\toprule
Method & Method-agnostic & Multiple Sensitive & Non-binary \\ 
& & Attributes & Attributes \\
\midrule
FairGNN \cite{dai2021say} & $\times$ & $\times$ & $\times$ \\
NIFTY \cite{agarwal2021towards} & $\times$ & $\times$ & $\times$ \\
FairAdj \cite{li2020dyadic} & $\times$ & $\times$ & $\surd$ \\
FairWalk \cite{rahman2019fairwalk} & $\times$ & $\times$ & $\surd$ \\
CFGE~\cite{bose2019compositional} & $\times$ & $\surd$ & $\surd$ \\
EDITS \cite{dong2022edits} & $\surd$ & $\times$ & $\surd$ \\
\methodname{} (This work) & $\surd$ & $\surd$ & $\surd$\\
\bottomrule
\end{tabular}
}
\end{small}
\end{center}
\vspace{-0.1in}
\end{table}

\subsection{Fair Graph Representation Learning}
In the graph context, models trained to realize representations accounting for the connectivity and topology inherent to the network (e.g. homophily bias) can lead to biased representations.
Downstream tasks that operate on such representations can lead to unfair recommendations~\cite{linkedinMITreview}, and even biased and unjust outcomes~\cite{maneriker2023online}.  However, the incorporation of fairness with graph-based learning is challenging because of the non-i.i.d nature of the data and the homophily effect of graph data~\cite{dong2022fairness,dai2021say,jalali2020information}. 

Recently, several methods have been proposed to learn fair graph representations.
FairGNN~\cite{dai2021say} leverages adversarial learning to train fair GNNs for node classification. NIFTY~\cite{agarwal2021towards} adds a fairness loss to the GNN objective as regularization. FairAdj~\cite{li2020dyadic} accommodates the VGAE~\cite{kipf2016variational} model for fair link prediction. For task-agnostic embedding, FairWalk~\cite{rahman2019fairwalk} learns fair embeddings by adapting an embedding algorithm node2vec~\cite{grover2016node2vec}. Specifically, it modifies the random walk process and adjusts the probability of selecting nodes in each sensitive group for fairness. To consider multiple sensitive attributes, CFGE~\cite{bose2019compositional} employs a set of adversaries with the encoder for compositional fairness constraints. Unlike these approaches, EDITS~\cite{dong2022edits} is a pre-processing solution that reduces the bias in graph structure and node attributes, then trains vanilla GNNs on the debiased graph. However, since most methods are GNN-based, they require exceptional time for training and fine-tuning.

In addition to inefficiency, there are three other major drawbacks of existing works. First, some of them are not \textit{method-agnostic} which means they require non-trivial modifications to the base model for adaption. Second, most existing works are unable to incorporate fairness constraints towards \textit{multiple} sensitive attributes. Third, some methods cannot handle \textit{non-binary} sensitive attributes. \autoref{table:summary_fgrl} summarizes these approaches together with our proposed work \methodname{}. Compared with these approaches, our work can simultaneously (1) accommodate the base model easily while (2) achieve fairness towards multiple non-binary sensitive attributes. Most importantly, we will demonstrate that (3) \methodname{} significantly outperforms these baselines in terms of efficiency. 

\begin{figure*}[!t]
  \centering
  \subfloat[An illustrative pipeline of \methodname{}]{\includegraphics[width=0.4\linewidth]{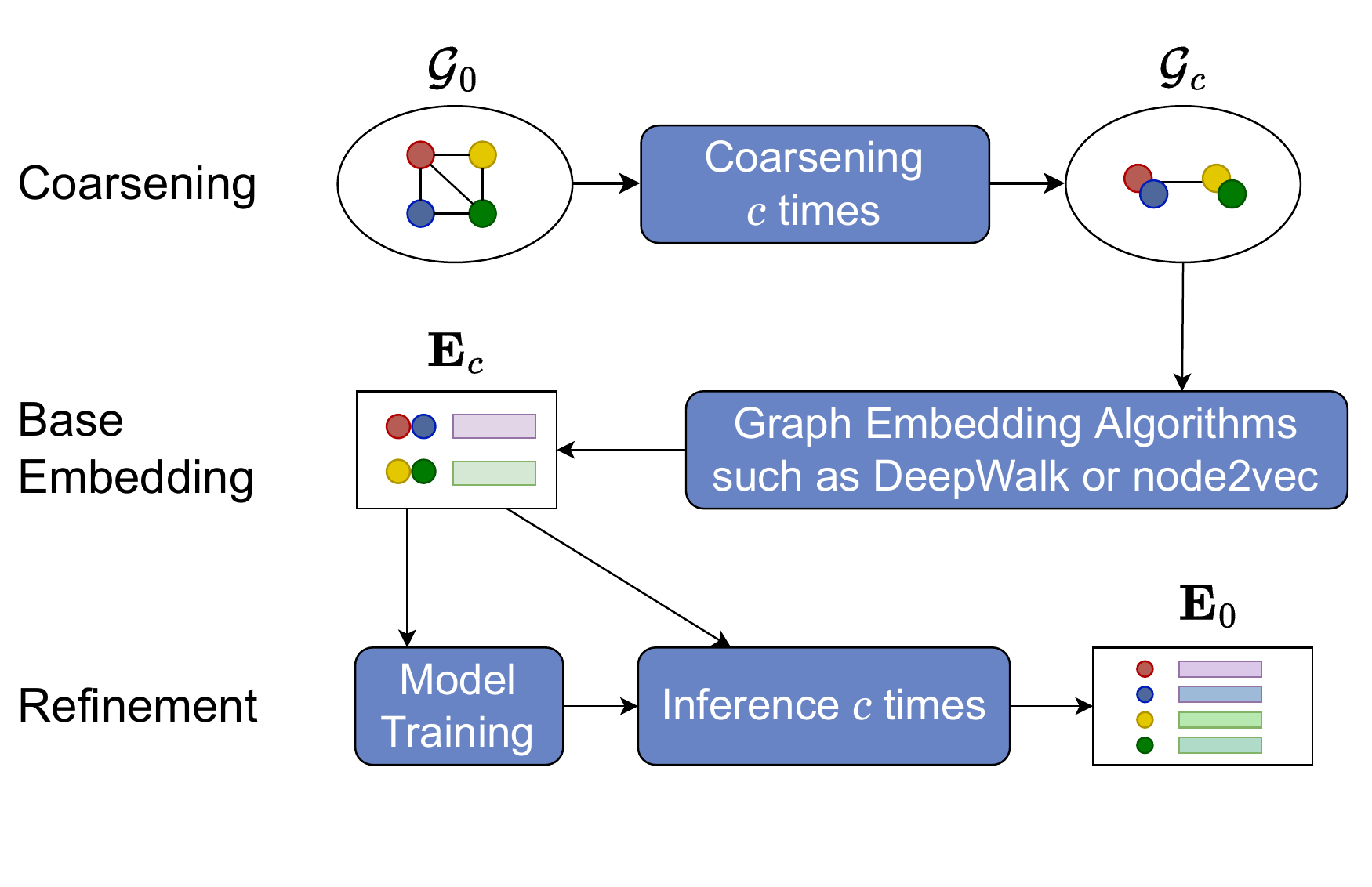}
    \label{fig:overview}}
  \hspace{.05\linewidth}
  \subfloat[Architecture of the refinement model.]{\includegraphics[width=0.4\linewidth]{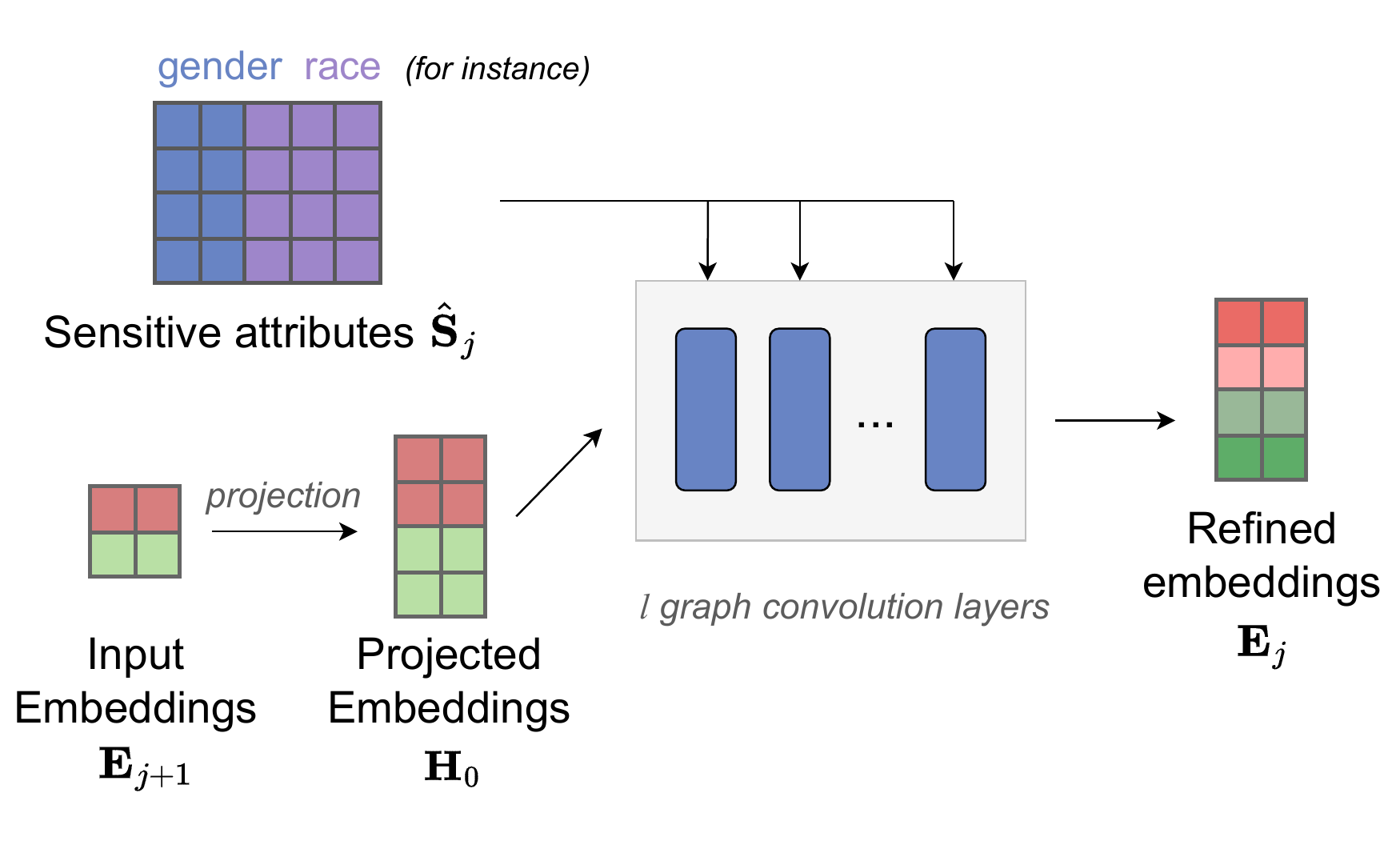}
    \label{fig:refinement}}
  \vspace{-.1in}
  \caption{Overview of \methodname{}}
  \label{fig:overview_all}
  \Description[The pipeline of FairMILE]{FairMILE consists of coarsening, base embedding, and refinement.}
  \vspace{-.1in}
\end{figure*}

\subsection{Scalable Graph Embedding}
Many methods for graph embedding have been proposed in recent years, including NetMF \cite{qiu2018network}, DeepWalk \cite{perozzi2014deepwalk}, and node2vec \cite{grover2016node2vec}. Despite their excellent performance on various machine learning tasks, their lack of scalability prohibits them from processing large datasets. Recent research addressed the scalability issue of graph embedding using different methodologies. Some studies leverage high-performance computing techniques \cite{zhu2019graphvite, lerer2019pytorch, qiu2021lightne}. Another group of studies adopts the multi-level framework for better scalability. This framework is widely used for various graph problems \cite{karypis1998multilevel, liang2021mile, chen2018harp} and the essential idea is to solve the problem from a smaller coarsened graph. However, none of these methods are fairness-aware. We are the first study that considers fairness based on this framework. 

\section{Methodology}

We propose a fairness-aware graph embedding framework \methodname{} (shown in Figure \ref{fig:overview}), which consists of three modules: graph coarsening, base embedding, and refinement. The idea is to first coarsen the original graph into a smaller one, then learn the embeddings of the coarsened graph using the base model, and eventually refine them into the embeddings of the original graph.


There exist several definitions of fairness~\cite{dwork2012fairness,kusner2017counterfactual} - all with the shared principle that all subgroups should receive the positive outcome at the same level in a given measure (e.g., positive rate for $\Delta_{DP}$, true positive rate for $\Delta_{EO}$). Since this paper studies unsupervised representation learning, \textbf{our key idea} is to minimize the variance among the representations of different groups. Intuitively, the downstream models trained with such representations (or embeddings) will lead to a decrease in bias. Next, we introduce the functionality of each module and explain how \methodname{} enforces fairness in the graph representations while improving efficiency and versatility.

\vspace{-.12in}
\subsection{Graph Coarsening}
We develop a new fairness-aware graph coarsening algorithm (shown in Algorithm \ref{alg:coarsen}). Given the initial graph $\mathcal{G}_{0}=\mathcal{G}$, it shrinks the graph size by collapsing a group of nodes into a supernode in the output graph $\mathcal{G}_{1}$. If two nodes are connected in $\mathcal{G}_{0}$, there exists an edge between their supernodes in $\mathcal{G}_{1}$. 
As a result, the numbers of nodes and edges are reduced in $\mathcal{G}_{1}$. After repeating this process $c$ times, we can finally get the coarsened graph $\mathcal{G}_{c}$.


\begin{algorithm}[!t]
\caption{Graph coarsening for fair embedding}\label{alg:coarsen}
\begin{algorithmic}[1]
\REQUIRE Graph $\mathcal{G}_{i} = (\mathcal{V}_{i}, \mathcal{E}_{i})$
\ENSURE Coarsened graph $\mathcal{G}_{i+1}$
\STATE Sort $\mathcal{V}_{i}$ in an increasing order of node degrees
\FOR{unmatched $u \in \mathcal{V}_{i}$}
    \IF{All neighbors of $u$ are matched}
        \STATE $u' \gets \{u\}$
    \ELSE
        \STATE Find unmatched $v$ s.t. $(u, v) \in \mathcal{E}_{i}$ maximizing Equation (\ref{eqn:matching_policy}), then let $u' \gets \{u, v\}$
    \ENDIF
    \STATE Add supernode $u'$ to $\mathcal{V}_{i+1}$
\ENDFOR
\STATE Connect supernodes in $\mathcal{V}_{i+1}$ based on $\mathcal{E}_{i}$
\STATE Build $\mathcal{G}_{i+1}$ from $\mathcal{V}_{i+1}$ and $\mathcal{E}_{i+1}$
\end{algorithmic}
\end{algorithm}

The key challenges in graph coarsening are two-fold: retaining the structural information for better utility while incorporating fairness toward the sensitive attributes. For utility, we adopt a high-utility coarsening approach Normalized Heavy-Edge Matching (NHEM) \cite{karypis1998multilevel}, which merges two nodes if their normalized edge weight is maximum. Formally, given a node $u \in \mathcal{V}$, NHEM computes the normalized weight of the edge $(u, v) \in \mathcal{E}$ defined as:
\begin{equation}
    w(u, v) = \frac{\bm{A}_{u, v}}{\sqrt{\delta(u) \delta(v)}}
\end{equation}
Intuitively, it encourages nodes with fewer connections to match other nodes and penalizes the hub nodes. Node matching allows a supernode to have structural properties from different nodes. However, due to the homophily in many real-world networks, the bias in graph structure can be reinforced if two nodes from the same group are merged. 

In light of this, we add a new term in the node matching function to reflect the divergence of the sensitive attribute distributions of two nodes for fairness consideration. In \methodname{}, we use a matrix $\bm{S} \in \mathbb{R}^{N \times M}$ to denote the sensitive attribute values, where each row $s_{u}$ is the attribute value distribution vector for node $u$, and $M$ is the dimensionality of sensitive attribute values. We use these attribute distribution vectors to quantify the divergence between nodes.

Initially, each node in the original graph $\mathcal{G}_{0}$ has a concatenation of one-hot vectors for the attributes. For example, $\mathcal{F}$ has two attributes: `gender' (female, male) and `race' (African, Asian, white). The attributes of a female African $u$ can be modeled as $s_{u} = [\underline{1, 0}, \underline{1, 0, 0}]$. For a node $v$ in a coarser graph $\mathcal{G}_{i} (i > 0)$, we use $s_{v}$ to denote the distribution of all nodes in $\mathcal{G}_{0}$ merged into it. For example, $s_{v} = [\underline{1, 1}, \underline{1, 1, 0}]$ in $\mathcal{G}_{1}$ indicates that the supernode $v$ contains one female and one male on attribute `gender', and one African and one Asian on `race'. To measure the difference between the sensitive attribute distributions of two nodes, we define the following function based on Kullback–Leibler divergence \cite{kullback1951information}:
\begin{equation}
    \phi(u, v) = 1 - \left ( 1 + \sum_{j = 1}^{M} \frac{s_{u, j}}{\|s_{u}\|} \log \left( \frac{s_{u, j} / \|s_{u}\|}{s_{v, j} / \|s_{v}\|} \right) \right)^{-1}
\end{equation}
It essentially maps the divergence of two normalized attribute distributions to $[0, 1]$. The higher the score is, the more different their sensitive attributes are. Finally, given a node $u$ of $\mathcal{G}_{i}$, we formulate the node matching policy as follows:
\begin{equation}
    \max_{v: (u, v) \in \mathcal{E}_{i}} (1 - \lambda_{c}) w(u, v) + \lambda_{c} \phi(u, v)
\label{eqn:matching_policy}
\end{equation}

where $\lambda_{c}$ is the weight of fairness in node matching. The objective here is to find the neighbor of node $u$ that maximizes the edge weight and attribute divergence together. Intuitively, a large value of $\lambda_{c}$ generates more inter-group matching in graph coarsening. Section 5 will empirically show that Algorithm \ref{alg:coarsen} can improve the fairness in graph representations.

\subsection{Base Embedding}

Like other multi-level frameworks \cite{karypis1998multilevel, liang2021mile}, \methodname{} applies the base model on the coarsest graph in an agnostic manner. Since the input is the coarsened graph and the output is its node embeddings, it is straightforward that \methodname{} accommodates any unsupervised graph embedding method such as DeepWalk \cite{perozzi2014deepwalk} or node2vec \cite{grover2016node2vec} with no modification required. This step generates the embeddings $\bm{E}_{c}$ on the coarsened graph.


\subsection{Refinement}
In the last phase of \methodname{}, we seek to learn the representations of graph $\mathcal{G}_{0}$ from 
the embeddings of the coarsest graph $\mathcal{G}_{c}$. Generally, we train a fairness-aware refinement model based on graph convolution networks (GCN)~\cite{kipf2017semi} to infer the embeddings $\bm{E}_{c-1}$ of $\mathcal{G}_{c-1}$ from $\bm{E}_{c}$. Then we iteratively apply it until we get $\bm{E}_{0}$. 

\subsubsection{Model Architecture}
Figure \ref{fig:refinement} shows the architecture of our refinement model.
Without loss of generality, the refinement model has a projection layer followed by $l$ GNN layers where the input and output of layer $i \in [1, l]$ are denoted as $\bm{H}_{i-1}$ and $\bm{H}_{i}$. Given two graphs $\mathcal{G}_{j}$ and $\mathcal{G}_{j+1}$, we initialize $\bm{H}_{0}$ by projecting the embeddings of supernodes in $\mathcal{G}_{j+1}$ to its associated nodes in $\mathcal{G}_{j}$. Note that if two nodes that have different sensitive attribute values are merged into $\mathcal{G}_{j+1}$, they share the same initial embeddings in $\mathcal{G}_{j}$, which mitigates the potential bias in learned representations. 

In each layer, we use a normalized adjacency matrix $\tilde{\bm{D}}^{-\frac{1}{2}} \tilde{\bm{A}} \tilde{\bm{D}}^{-\frac{1}{2}} $ for message passing. Here we drop the notation referring to a specific graph for clarity. To take the sensitive attributes into consideration, we concatenate the input of each layer with the row-normalized sensitive attribute matrix $\tilde{\bm{S}}$. Formally, the $i$-th convolution layer in our refinement model can be formulated as
\begin{equation}
    \bm{H}_{i} = \mathrm{tanh} \left ( \tilde{\bm{D}}^{-\frac{1}{2}} \tilde{\bm{A}} \tilde{\bm{D}}^{-\frac{1}{2}}~(\bm{H}_{i-1} \parallel \tilde{\bm{S}})~\Theta_{i} \right )
\end{equation}
where $\Theta_{i}$ is the trainable linear transformation matrix of layer $i$. Finally, we can infer $\mathcal{G}_{j}$'s node representations $\bm{E}_{j} = \bm{H}_{l} / \| \bm{H}_{l} \|_{2}$.

\subsubsection{Training objectives}
To achieve a trade-off between utility and fairness, we have two distinct objectives for each of them. For utility, we expect the refined representations to be close to the input so that matched nodes are still close to each other in the embedding space after refinement. Therefore, we minimize the difference between the projected embeddings and the predicted ones generated by the refinement model, which is defined as
\begin{equation}
    L_{u} = \frac{1}{|\mathcal{V}_{c}|} \| \bm{H}_{0} - \bm{H}_{l} \|^{2}
\end{equation}

$L_{u}$ is the utility loss in our framework. To improve fairness, we encourage nodes with different sensitive attribute values to be closer in the embedding space. 
Specifically, we create a subset of edges $\mathcal{E}'_{c}$ that consists of links between two nodes with significantly different sensitive attributes, which can be formulated as
\begin{equation}
    \mathcal{E}'_{c} = \{(u, v): (u, v) \in \mathcal{E}_{c} \wedge \phi(u, v) \geq \gamma \} \nonumber
\end{equation}
where $\gamma$ is a threshold parameter for attribute divergence. Then we use the learned representations to reconstruct an adjacency matrix that represents the node distances in the embedding space. To reward the links between diverse attribute groups, we use the Hadamard product of the reconstruction matrix and the adjacency matrix of $\mathcal{E}'_{c}$ (denoted as $\mathcal{A}'_{c}$) as our fairness objective. As a result, the trained model generates similar embeddings for inter-group nodes. The fairness loss ($L_{f}$)  is formally defined as
\begin{equation}
    L_{f} = - \frac{1}{|\mathcal{E}'_{c}|} \left [ 
    \mathrm{sigmoid}(\bm{H}_{l}{\bm{H}_{l}}^{\top}) \odot \mathcal{A}'_{c} \right ]
\label{eqn:fairloss}
\end{equation}

The multiplication of the embedding matrix with a sigmoid activation (denoted by $\mathrm{sigmoid}(\bm{H}_{l}{\bm{H}_{l}}^{\top})$) reconstructs the entire $N \times N$ matrix. Its element-wise product with $\mathcal{A}'_{c}$ performs the masking operation to select the node pairs with diverse sensitive attributes. The negative minimization of $L_{f}$ ensures that the similarity of the selected node pairs -- measured by the dot products of their embeddings -- is increased. This loss ensures that the embeddings learned by the refinement model of the node pairs with diverse sensitive attributes are {\it proximal} to each other.

Combining the utility and fairness loss functions, the overall training objective of our refinement model is 
\begin{equation}
    \min_{\{\Theta_{i}, \forall i \in [1,~l]\}} (1 - \lambda_{r}) L_{u} + \lambda_{r} L_{f}
\end{equation}
where $\lambda_{r} \in [0, 1]$ controls the weight of fairness objective.


\subsection{Theoretical analysis}
\label{sec:analysis}
\textbf{Time complexity:} The time complexity of \methodname{} depends on the selected embedding approach. Note that such approaches typically have a time complexity of at least $O(d|\mathcal{V}_{0}|)$~\cite{cui2018survey}, for example, the time complexity of DeepWalk is $O(d|\mathcal{V}_{0}|\log |\mathcal{V}_{0}|)$. Considering that the number of nodes can be reduced by up to half after each time of coarsening (observed in Section \ref{sec:ablation}), the efficiency is significantly improved by embedding the coarsened graph. Apart from embedding the coarsened graph, \methodname{} spends additional $O \left ( cl(d+M) \left ( |\mathcal{E}_{0}| + d |\mathcal{V}_{0}| \right ) \right )$ time on coarsening and refinement. Given that $d, M \ll |\mathcal{V}_{0}|$, the additional time of these two phases is typically much less than the reduced time of base embedding - empirically verified in Section \ref{sec:ablation}. 
Additional details on complexity analysis are included in our supplement. 

\noindent \textbf{Fairness:} We prove that the difference between the mean representations of different demographic groups is bounded depending on the network topology. 
\begin{theorem}
When $L_f$ is minimized, the 2-norm of the difference between the mean embeddings of any two demographic groups regarding a given sensitive attribute is bounded by
\begin{equation}
    \left \| \bm{\mu}_{p} - \bm{\mu}_{q} \right \|_{2} \leq 2 (1 - \min(\beta_{p}, \beta_{q})) \nonumber
\end{equation}
where $p, q$ are any two different values of the given sensitive attribute. For $i \in \{p, q\}$, $\bm{\mu}_{i}$ denotes the mean embedding values of nodes from group $i$, and $\beta_{i}$ denotes the ratio of nodes from group $i$ that have at least one inter-group edge. 
\label{thm:fairness_main}
\end{theorem}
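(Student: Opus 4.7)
The plan is to use two ingredients: (i) the pair-equality of embeddings enforced by minimizing $L_f$, and (ii) a clean decomposition of each group mean into contributions from inter-group-connected nodes and their complement.

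First, I would observe that the refinement module row-normalizes its output, so $\|e_u\|_2 = 1$ for every node $u$. Under this unit-norm constraint, $e_u \cdot e_v$ attains its maximum of $1$ exactly when $e_u = e_v$, and the sigmoid in each summand of Equation~(\ref{eqn:fairloss}) is monotone in this dot product. Since $L_f$ is the negated average of these sigmoids over $\mathcal{E}'_c$, its global minimum under the unit-norm constraint is attained iff $e_u = e_v$ for every $(u, v) \in \mathcal{E}'_c$. This pair-equality is the only structural fact I will use from the loss; everything downstream is algebra on the means.

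Next, for each group $i \in \{p, q\}$, I would partition $V_i = V_i^* \cup V_i^-$ into nodes with and without at least one inter-group edge, so that $|V_i^*| = \beta_i |V_i|$. Setting $\bar{e}_i^* = \frac{1}{|V_i^*|} \sum_{u \in V_i^*} e_u$ and $\bar{e}_i^- = \frac{1}{|V_i^-|} \sum_{u \in V_i^-} e_u$, the group mean factors as $\bm{\mu}_i = \beta_i \bar{e}_i^* + (1-\beta_i)\bar{e}_i^-$. Because each $\bar{e}_i^-$ is a convex combination of unit vectors, the triangle inequality immediately bounds the unmatched piece by $\|(1-\beta_p)\bar{e}_p^- - (1-\beta_q)\bar{e}_q^-\|_2 \leq (1-\beta_p) + (1-\beta_q)$. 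The remaining task is the matched residual $\beta_p \bar{e}_p^* - \beta_q \bar{e}_q^*$.

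Assuming WLOG $\beta_p \le \beta_q$, the idea is to leverage $e_u = e_v$ on $\mathcal{E}'_c$ to identify $\beta_p \bar{e}_p^*$ with a corresponding weighted average extracted from $V_q^*$ via a (possibly fractional) matching on the bipartite subgraph of $\mathcal{E}'_c$. After this identification, the leftover weight in $\bar{e}_q^*$ is $\beta_q - \beta_p$, contributing at most $|\beta_q - \beta_p|$ in norm. Summing the two estimates yields $\|\bm{\mu}_p - \bm{\mu}_q\|_2 \le (1-\beta_p) + (1-\beta_q) + |\beta_q - \beta_p| = 2 - 2\min(\beta_p, \beta_q)$, as claimed.

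The main obstacle is exactly the matched-part step: the inter-group bipartite graph does not generally admit a literal one-to-one matching identifying $V_p^*$ with a subset of $V_q^*$, because inter-group neighbors may be shared by several nodes or may differ in multiplicity. The appropriate formalism is a fractional flow that distributes each node's weight across its inter-group neighbors; making this argument rigorous, and showing that the aggregated identification error is controlled by $|\beta_q - \beta_p|$ rather than blowing up due to imbalanced connectivity, is the technical core of the proof and the point where the ``network topology'' dependence flagged in the theorem statement enters.
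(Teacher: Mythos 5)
Your setup---the unit-norm observation, the split of each group into inter-group-connected nodes and the rest, and the triangle-inequality bound of $(1-\beta_p)+(1-\beta_q)$ on the unconnected piece---is exactly the paper's decomposition. The divergence, and the gap, is in the matched residual. The paper does not attempt a matching argument at all: it asserts that at the global minimum of $L_f$ (taking $\gamma=0$) \emph{all} nodes with at least one inter-group edge collapse to a single common embedding $\bar{\bm{h}}$, so that your $\bar{e}_p^*$ and $\bar{e}_q^*$ are literally equal and the matched residual is $(\beta_p-\beta_q)\bar{\bm{h}}$, of norm at most $|\beta_p-\beta_q|$. That single-vector collapse is strictly stronger than the pair-equality $e_u=e_v$ for $(u,v)\in\mathcal{E}'_c$ that you (correctly) derive from the loss, and the extra strength is essential.

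Concretely, your proposed fractional-matching step cannot be made to work from pair-equality alone, because the bound it is supposed to deliver is false at that level of generality. Take $\mathcal{U}_p=\{u_1,u_2,u_3\}$, $\mathcal{U}_q=\{v_1,v_2\}$, with inter-group edges $(u_1,v_1),(u_2,v_1),(u_3,v_2)$ and nothing else in $\mathcal{E}'_c$. Every node has an inter-group edge, so $\beta_p=\beta_q=1$ and the theorem demands $\bm{\mu}_p=\bm{\mu}_q$. Pair-equality only forces $e_{u_1}=e_{u_2}=e_{v_1}=a$ and $e_{u_3}=e_{v_2}=b$ with $a,b$ arbitrary unit vectors, giving $\bm{\mu}_p-\bm{\mu}_q=(a-b)/6$, which can have norm up to $1/3$. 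So no identification of $\beta_p\bar{e}_p^*$ with a reweighting of $V_q^*$ can bound the residual by $|\beta_p-\beta_q|=0$; the degree imbalance you flagged as the technical core really does break the estimate rather than merely complicating it. To recover the theorem you must either adopt the paper's (itself under-justified) premise that minimization produces one shared $\bar{\bm{h}}$ across all inter-group-connected nodes, or add a topological hypothesis (e.g., that the inter-group subgraph of $\mathcal{E}'_c$ is connected) under which your pair-equality propagates to that collapse.
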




\begin{table*}[!t]
\caption{Statistics of datasets (NC denotes Node Classification, and LP denotes Link Prediction).}
\label{table:dataset}
\vspace{-0.1in}
\begin{center}
\begin{small}

\begin{tabular}{l|llllll}
\toprule
Dataset & Task & \# Nodes & \# Edges & \# Features & Label ($|\mathcal{Y}|$) & Sensitive Attributes ($|S|$) ~\\
\midrule
German & NC & 1,000 & 22,242 & 27 & credit risk (2) & gender (2) \\
Recidivism & NC & 18,876 & 321,308 & 18 & crime (2) & race (2) \\
Credit & NC & 30,000 & 1,436,858 & 13 & payment default (2) & age (2) \\
Pokec-n & NC & 66,569 & 729,129 & 59 & field of work (4) & region (2), gender (2) \\
Cora & LP & 2,708 & 5,278 & 1433 & citation (2) & paper category (7) \\
Citeseer & LP & 3,312 & 4,660 & 3703 & citation (2) & paper category (6) \\
Pubmed & LP & 19,717 & 44,338 & 500 & citation (2) & paper category (3) \\
\bottomrule
\end{tabular}
\end{small}
\end{center}
\vspace{-0.12in}
\end{table*}

Theorem \ref{thm:fairness_main} shows that the difference between the mean embeddings of two groups depends on the ratio of inter-group connected nodes in each group, which is typically large. For example, among the datasets used in our experiments, the minimum $\beta$ is $0.676$ in Credit and $0.958$ in German, respectively. When the mean embeddings of different demographic groups are close to each other, they have similar representations and therefore with a high likelihood, they will receive similar outcomes in the downstream task. The full proof of Theorem \ref{thm:fairness_main} is provided in our supplement.



\section{Experiments}
\label{sec:experiments}
\subsection{Experiment Setup}
\noindent \textbf{Datasets}:
We examine the performance of \methodname{} on both node classification and link prediction tasks. Our experiments are conducted on seven real-world datasets from different application scenarios widely used in the fairness literature~\cite{agarwal2021towards, dai2021say, dong2022edits, li2020dyadic, spinelli2021fairdrop}. Statistics of the datasets are shown in \autoref{table:dataset}, where $|\mathcal{Y}|$ denotes the number of predicted labels, and $|S|$ denotes the number of sensitive attribute values. For details, please refer to our supplement.

\noindent \textbf{Metrics}: To quantify the prediction performance in node classification, we use AUROC, F1-score (for binary class problems), and Micro-F1 (for multi-class problems) as our utility metric. To measure the group fairness, we use $\Delta_{DP}$ and $\Delta_{EO}$ described in Section 3 as our fairness metrics. We also report the end-to-end running time in seconds to show the efficiency of all methods.

For link prediction, following prior works~\cite{li2020dyadic, spinelli2021fairdrop}, we use AUROC, Average Precision (AP), and accuracy as the utility metrics, and compute the disparity in expected prediction scores between intra-group and inter-group links. Specifically, the fairness metrics are formulated as:
\begin{equation}
\begin{split}
    \Delta_{DP,~\mathrm{LP}} = & ~|\mathbb{E}_{(u, v) \sim \mathcal{V} \times \mathcal{V}} [\hat{Y} | S(u) = S(v)]  \nonumber \\
    & - \mathbb{E}_{(u, v) \sim \mathcal{V} \times \mathcal{V}} [\hat{Y} | S(u) \neq S(v)]| \nonumber \\
    \Delta_{EO,~\mathrm{LP}} = & ~|\mathbb{E}_{(u, v) \sim \mathcal{V} \times \mathcal{V}} [\hat{Y} | (u, v) \in \mathcal{E}, S(u) = S(v)] \nonumber \\
    & - \mathbb{E}_{(u, v) \sim \mathcal{V} \times \mathcal{V}} [\hat{Y} | (u, v) \in \mathcal{E}, S(u) \neq S(v)]| \nonumber
\end{split}
\end{equation}
where $\hat{Y} \in [0, 1]$ is the link prediction score. 

\noindent \textbf{Baselines}: Our baselines include 1) \textit{Specialized approaches}: For node classification, we use the vanilla GCN~\cite{kipf2017semi} and three state-of-the-art fair node classification methods (NIFTY~\cite{agarwal2021towards}, FairGNN~\cite{dai2021say}, and EDITS~\cite{dong2022edits}) with GCN as their base model. 
For link prediction, we use VGAE~\cite{kipf2016variational} and FairAdj~\cite{li2020dyadic} with VGAE as the base predictor in our comparative experiments. In addition, we adapt CFGE~\cite{bose2019compositional} for both tasks which is the only baseline that accommodates multiple sensitive attributes. 2) \textit{Graph embedding approaches}: We choose three popular unsupervised graph embedding techniques: NetMF~\cite{qiu2018network}, DeepWalk~\cite{perozzi2014deepwalk}, and Node2vec~\cite{grover2016node2vec}. We also include FairWalk~\cite{rahman2019fairwalk}, which is essentially a fairness-aware adaption of Node2vec. Note that these approaches can be used in both downstream tasks. 3) \textit{Our framework}: We let \methodname{} run with the three embedding approaches above for evaluation.

\begin{table*}[!t]
\caption{Comparison on utility, fairness, and efficiency metrics in node classification between \methodname{} and other baselines.}
\label{table:overall_results}
\vspace{-0.1in}
\begin{center}
\begin{small} 
\begin{tabular}{c|l|ll|rr|r}
\toprule
Dataset & \makecell[c]{Method} & \makecell[c]{AUROC ($\uparrow$)} & \makecell[c]{F1 ($\uparrow$)} & \makecell[c]{$\Delta_{DP}~(\downarrow)$} & \makecell[c]{$\Delta_{EO}~(\downarrow)$} & \makecell[c]{Time $(\downarrow)$}\\
\midrule
\multirow{11}{*}{German}
 & NetMF & \textbf{65.16 $\pm$ 2.45} & 80.63 $\pm$ 1.10 & 5.71 $\pm$ 2.89 & 3.66 $\pm$ 2.11 & \textbf{2.48} \\
 & \methodname{}-NetMF & 61.93 $\pm$ 3.38 & \textbf{82.35 $\pm$ 0.00} & \textbf{0.00 $\pm$ 0.00} & \textbf{0.00 $\pm$ 0.00} & 6.31 \\
 \cmidrule{2-7}
 & DeepWalk & 58.54 $\pm$ 4.43 & 75.78 $\pm$ 1.49 & 7.22 $\pm$ 3.86 & 7.69 $\pm$ 3.26 & 16.99 \\
 & \methodname{}-DeepWalk & \textbf{63.31 $\pm$ 3.63} & \textbf{82.40 $\pm$ 0.33} & \textbf{0.67 $\pm$ 0.88} & \textbf{0.26 $\pm$ 0.39} & \textbf{7.84} \\
\cmidrule{2-7}
 & Node2vec & 63.37 $\pm$ 3.77 & 78.69 $\pm$ 1.25 & 3.69 $\pm$ 2.60 & 2.75 $\pm$ 1.34 & 12.76 \\
 & Fairwalk & \textbf{63.98 $\pm$ 2.07} & 77.64 $\pm$ 1.62 & 3.67 $\pm$ 2.74 & 3.28 $\pm$ 2.50 & 11.93 \\
 & \methodname{}-Node2vec & 62.00 $\pm$ 2.59 & \textbf{82.32 $\pm$ 0.20} & \textbf{0.60 $\pm$ 0.96} & \textbf{0.44 $\pm$ 0.41} & \textbf{8.29} \\
 \cmidrule{2-7}
 & Vanilla GCN & 64.75 $\pm$ 7.20 & 77.93 $\pm$ 3.53 & 16.27 $\pm$ 5.86 & 13.28 $\pm$ 5.06 & \textbf{23.75} \\
 & FairGNN & 53.12 $\pm$ 5.73 & \textbf{82.35 $\pm$ 0.00} & \textbf{0.00 $\pm$ 0.00} & \textbf{0.00 $\pm$ 0.00} & 136.29 \\
 & NIFTY & 56.65 $\pm$ 6.84 & 81.35 $\pm$ 1.54 & 1.20 $\pm$ 1.45 & 0.83 $\pm$ 1.20 & 91.05 \\
 & EDITS & \textbf{64.93 $\pm$ 2.90} & 79.64 $\pm$ 2.27 & 5.00 $\pm$ 3.38 & 2.76 $\pm$ 2.26 & 84.24 \\
 & CFGE & 64.38 $\pm$ 0.77 & 81.59 $\pm$ 0.33 & 4.54 $\pm$ 2.91 & 4.30 $\pm$ 1.84 & 3900.11 \\
\midrule
\multirow{11}{*}{Credit}
 & NetMF & \textbf{74.93 $\pm$ 0.43} & \textbf{88.36 $\pm$ 0.08} & 2.66 $\pm$ 0.55 & 1.34 $\pm$ 0.93 & 240.38 \\
 & \methodname{}-NetMF & 74.69 $\pm$ 0.43 & 88.31 $\pm$ 0.08 & \textbf{0.68 $\pm$ 0.50} & \textbf{0.68 $\pm$ 0.66} & \textbf{90.28} \\
\cmidrule{2-7}
 & DeepWalk & \textbf{75.09 $\pm$ 0.39} & \textbf{88.36 $\pm$ 0.15} & 2.50 $\pm$ 0.54 & 1.81 $\pm$ 0.71 & 570.35 \\
 & \methodname{}-DeepWalk & 74.60 $\pm$ 0.53 & 88.31 $\pm$ 0.11 & \textbf{1.34 $\pm$ 0.57} & \textbf{1.05 $\pm$ 0.70} & \textbf{112.39} \\
\cmidrule{2-7}
 & Node2vec & \textbf{74.95 $\pm$ 0.36} & \textbf{88.26 $\pm$ 0.18} & 2.92 $\pm$ 0.41 & 1.93 $\pm$ 0.87 & 249.92 \\
 & Fairwalk & 74.88 $\pm$ 0.49 & 88.25 $\pm$ 0.15 & 2.78 $\pm$ 0.47 & 1.58 $\pm$ 0.81 & 261.23 \\
 & \methodname{}-Node2vec & 74.01 $\pm$ 0.47 & 87.99 $\pm$ 0.16 & \textbf{1.47 $\pm$ 0.74} & \textbf{1.33 $\pm$ 0.64} & \textbf{105.43} \\
\cmidrule{2-7}
 & Vanilla GCN & 72.83 $\pm$ 2.83 & 82.79 $\pm$ 1.76 & 5.77 $\pm$ 0.32 & 4.36 $\pm$ 0.59 & \textbf{2265.58} \\
 & FairGNN & 71.14 $\pm$ 4.58 & 83.29 $\pm$ 3.27 & \textbf{1.40 $\pm$ 0.97} & \textbf{1.21 $\pm$ 0.50} & 3201.72 \\
 & NIFTY & 72.43 $\pm$ 0.68 & 81.80 $\pm$ 0.37 & 5.52 $\pm$ 0.33 & 4.35 $\pm$ 0.68 & 7735.00 \\
 & EDITS & \textbf{73.22 $\pm$ 0.77} & 81.55 $\pm$ 0.30 & 5.36 $\pm$ 0.42 & 4.17 $\pm$ 0.65 & 9078.12 \\
 & CFGE & 70.51 $\pm$ 1.31 & \textbf{87.85 $\pm$ 0.13} & 2.99 $\pm$ 0.47 & 1.47 $\pm$ 0.18 & 12430.37 \\
\bottomrule
\end{tabular}
\end{small}
\end{center}
\vspace{-0.2in}
\end{table*}


\noindent \textbf{Parameters and environments}:
All hyperparameters of specialized approaches are set following the authors' instructions. In particular, unless otherwise specified, we set $\lambda=50$ in CFGE for a better tradeoff between fairness and utility. The dimensionality $d$ of representations for graph embedding approaches and our work is set to 128. In \methodname{}, we set $\lambda_{c}=0.5$ for graph coarsening. For refinement, We train a two-layer model for 200 epochs with $\lambda_{r}=0.5$, $\gamma=0.5$, and the learning rate of $1 \times 10^{-3}$ on all datasets. 
To evaluate unsupervised approaches for node classification and all approaches for link prediction, we train a linear classifier on the learned embeddings (for node classification) or the Hadamard products of embeddings of sampled node pairs (for link predictions). For the task of link prediction, we randomly sample $10\%$ of edges to build test sets and remove them from the training data, then we add the same number of negative samples as positive edges in the training and test sets, respectively. All methods are evaluated on the same test sets and trained on CPUs for fair comparisons. Experiments are conducted on a Linux machine with a 28-core Intel Xeon E5-2680 CPU and 128GB RAM. We report the average results of 5 runs with different data splits. For reproducibility, our codes and data are available\footnote{\url{https://github.com/heyuntian/FairMILE}}. 

\subsection{Results for Node Classification}
\label{sec:exp_nc}
We first compare our work with specialized GNN-based approaches and unsupervised graph embedding baselines on three datasets with a single binary sensitive attribute. Specifically, we address the following questions: \textbf{Q1:} Does \methodname{} improve the fairness of a base embedding method? How is the fairness of \methodname{} compared with specialized approaches? \textbf{Q2:} Does \methodname{} outperform other baselines in terms of efficiency? \textbf{Q3:} Does \methodname{} retain the embedding's utility while improving its fairness and efficiency?
\autoref{table:overall_results} only presents the results on two datasets (German and Credit). The results on Recidivism are included in our supplement. 
Methods are categorized into different groups by their base models, and the optimal performance in each group is highlighted in bold.

\noindent \textbf{A1: \methodname{} achieves better fairness scores.} Compared with graph embedding approaches without fairness consideration (i.e., NetMF, DeepWalk, and Node2vec), \methodname{} on top of them always has lower $\Delta_{DP}$ and $\Delta_{EO}$. 
In German, \methodname{} decreases the $\Delta_{DP}$ scores of Node2vec and DeepWalk by $83.7\%$ and $90.7\%$, respectively. In terms of $\Delta_{EO}$, \methodname{} improves the fairness of Node2vec and DeepWalk by $84.0\%$ and $96.6\%$. When choosing NetMF as the base model, \methodname{} is optimal (zero) on both fairness metrics, which indicates that the learned representations lead to a perfectly fair classification. Compared with FairWalk (the fair adoption of Node2vec), \methodname{}-Node2vec improves $\Delta_{DP}$ by $49.7\%$ on Credit while FairWalk only improves by $4.8\%$. Similar results are also observed on other datasets, which reveals that \methodname{} successfully mitigates the bias in general-purpose graph embedding baselines. 


We also evaluate specialized methods (i.e., FairGNN, NIFTY, EDITS, and CFGE) and observe that \methodname{} achieves comparable or better fairness with respect to these approaches. FairGNN outperforms the other specialized methods on all datasets by gaining the largest improvements on both $\Delta_{DP}$ and $\Delta_{EO}$ with respect to Vanilla GCN. It means that their methodology of adversarial training is more effective than the regularization adopted by NIFTY, and EDITS suffers from its task agnosticity. On German and Credit, the best fairness metrics of \methodname{} are comparable to or better than FairGNN and other specialized techniques. These results demonstrate that \methodname{} is effective in reducing the bias in graph representations compared to the state-of-the-art models. 

\noindent \textbf{A2: \methodname{} is more efficient than other baselines.} First, \methodname{} outperforms the GNN-based specialized approaches on all datasets in terms of efficiency. In German, they take up to 2 minutes for training, while \methodname{} can finish in only 6 seconds. The difference becomes more significant on larger datasets. When CFGE needs more than 3 hours on Credit, \methodname{} finishes within 2 minutes which is $110.6$-$137.7\times$ faster. On the other hand, \methodname{} also improves the efficiency of all base embedding methods. In Credit, \methodname{} on top of NetMF saves up to $80\%$ of the running time of vanilla NetMF. The improvement in German is sometimes invisible because German is a small graph. But \methodname{} can still finish within seconds. Compared with FairWalk, \methodname{}-Node2vec is always faster.

\noindent \textbf{A3: \methodname{} learns quality graph representations.} We observe the quality of learned representations through the AUROC and F1 scores. With respect to the base embedding methods, \methodname{} has a similar performance on both utility metrics which is fairly remarkable given that \methodname{} significantly improves fairness. An interesting observation is that while Vanilla GCN, graph embedding approaches, and FairMILE have a similar performance in terms of utility, the supervised GCN-based fair approaches have lower AUROC or F1 scores on one or both datasets, which reveals that these approaches require fine-tuning to perform well in terms of utility while enforcing fairness.

In summary, \methodname{} in conjunction with popular base embedding approaches can compete or improve on the fairness criteria with various specialized methods while outperforming them significantly in terms of efficiency and retaining comparable utility. 


\begin{table*}[t]
\caption{Case study on Pokec-n consisting of multiple sensitive attributes.}
\label{table:pokec}
\vspace{-0.1in}
\begin{center}
\begin{small} 
\begin{tabular}{l|cc|cc|cc|r}
\toprule
\makecell[c]{\multirow{2}{*}{Method}} & \multirow{2}{*}{AUROC ($\uparrow$)} & \multirow{2}{*}{Micro-F1 ($\uparrow$)} & \multicolumn{2}{c|}{Region} & \multicolumn{2}{c|}{Gender} & \makecell[c]{\multirow{2}{*}{Time $(\downarrow)$}} \\
\cline{4-7}
 & & & $\Delta_{DP}~(\downarrow)$ & $\Delta_{EO}~(\downarrow)$ & $\Delta_{DP}~(\downarrow)$ & $\Delta_{EO}~(\downarrow)$ & \\
\midrule
NetMF & 73.12 $\pm$ 0.46 & 62.18 $\pm$ 0.37 & 0.26 $\pm$ 0.15 & 1.82 $\pm$ 0.48 & 0.39 $\pm$ 0.24 & 1.04 $\pm$ 0.58 & 1542.78 \\
\methodname{}-NetMF & 68.86 $\pm$ 0.91 & 55.96 $\pm$ 0.67 & \textbf{0.17 $\pm$ 0.07} & \textbf{1.18 $\pm$ 0.29} & \textbf{0.31 $\pm$ 0.06} & \textbf{0.56 $\pm$ 0.30} & \textbf{204.99} \\
\midrule
DeepWalk & 72.58 $\pm$ 0.67 & 61.44 $\pm$ 0.68 & 0.38 $\pm$ 0.07 & 1.49 $\pm$ 0.63 & 0.45 $\pm$ 0.22 & 1.61 $\pm$ 0.47 & 1139.30 \\
\methodname{}-DeepWalk & 67.43 $\pm$ 1.16 & 54.98 $\pm$ 0.88 & \textbf{0.13 $\pm$ 0.11} & \textbf{0.70 $\pm$ 0.33} & \textbf{0.22 $\pm$ 0.12} & \textbf{0.59 $\pm$ 0.24} & \textbf{352.38} \\
\midrule
Node2vec & 72.71 $\pm$ 0.46 & 61.71 $\pm$ 0.32 & 0.36 $\pm$ 0.05 & 2.06 $\pm$ 0.51 & 0.44 $\pm$ 0.09 & 1.20 $\pm$ 0.49 & 798.01 \\
Fairwalk (Region) & 72.78 $\pm$ 0.44 & 61.99 $\pm$ 0.40 & 0.32 $\pm$ 0.09 & 1.61 $\pm$ 0.64 & 0.40 $\pm$ 0.15 & 1.61 $\pm$ 0.49 & 829.41 \\
Fairwalk (Gender) & 72.63 $\pm$ 0.49 & 61.79 $\pm$ 0.46 & 0.34 $\pm$ 0.12 & 1.50 $\pm$ 0.67 & 0.42 $\pm$ 0.13 & 0.98 $\pm$ 0.47 & 784.69 \\
\methodname{}-Node2vec & 70.10 $\pm$ 0.61 & 57.77 $\pm$ 0.92 & \textbf{0.23 $\pm$ 0.06} & \textbf{1.22 $\pm$ 0.38} & \textbf{0.14 $\pm$ 0.08} & \textbf{0.87 $\pm$ 0.4}8 & \textbf{230.85} \\
\midrule
CFGE$_{\lambda=1}$ & 71.09 $\pm$ 0.76 & 60.21 $\pm$ 1.03 & 0.28 $\pm$ 0.18 & 0.96 $\pm$ 0.49 & 0.26 $\pm$ 0.14 & 0.68 $\pm$ 0.28 & 9572.75 \\
CFGE$_{\lambda=10}$ & 68.93 $\pm$ 0.95 & 58.98 $\pm$ 0.59 & 0.26 $\pm$ 0.19 & 1.11 $\pm$ 0.59 & 0.31 $\pm$ 0.13 & 0.50 $\pm$ 0.51 & 9447.50 \\
CFGE$_{\lambda=50}$ & 58.49 $\pm$ 1.25 & 52.08 $\pm$ 0.77 & 0.17 $\pm$ 0.10 & 0.54 $\pm$ 0.31 & 0.29 $\pm$ 0.21 & 0.40 $\pm$ 0.28 & 9521.35 \\
CFGE$_{\lambda=100}$ & 56.01 $\pm$ 0.59 & 51.04 $\pm$ 0.31 & 0.16 $\pm$ 0.06 & 0.29 $\pm$ 0.25 & 0.14 $\pm$ 0.06 & 0.23 $\pm$ 0.13 & 9391.33 \\
\bottomrule
\end{tabular}
\end{small}
\end{center}
\vskip -0.12in
\end{table*}

\subsection{Fairness towards Multiple Sensitive Attributes}
\label{sec:case-study}
To explore how \methodname{} learns fair representations towards multiple sensitive attributes, we conduct an experiment on the Pokec-n dataset. Pokec-n has a multi-class predicted label and two sensitive attributes. Most baselines cannot process such datasets since they restrictively only cater to the case of a single binary label or sensitive attribute. 
CFGE~\cite{bose2019compositional} is the only baseline that accommodates multiple sensitive attributes. 
Therefore we compare \methodname{} with CFGE~\cite{bose2019compositional} and graph embedding methods, and we set $c=4$ for our work. Note that FairWalk can consider only one sensitive attribute at a time, thus we run it with each one of the two sensitive attributes and show the results of both runs. In \autoref{table:pokec}, we show the results of all methods on Pokec-n in terms of utility, efficiency, and fairness with respect to two sensitive attributes. First of all, with respect to standard embedding methods, \methodname{} improves the efficiency and the fairness towards both sensitive attributes while the utility remains competitive. For example, \methodname{} reduces the $\Delta_{DP}$ of DeepWalk by $65.8\%$ on `region' and $51.1\%$ on `gender', respectively. Second, although FairWalk also fulfills the fairness towards the assigned attribute, \methodname{}-Node2vec has a better fairness score on both attributes. 
Third, \methodname{} significantly outperforms CFGE on efficiency given that they have comparable utility and fairness performance. To study the performance of CFGE, we tune the hyperparameter $\lambda$ which controls the strength of fairness. We pick $\lambda$ from 1 to 100. Going beyond 100 we find the drop in utility exceeds 20\% which is often unacceptable.  When the constraint is strict ($\lambda = 100$), CFGE has better fairness outcomes at a significant cost to the utility. For $\lambda = 1, 10$ and $50$, CFGE and \methodname{} have competitive performance in terms of fairness and utility tradeoff.  However, while CFGE takes around 9500 seconds, \methodname{} finishes in only 200-300 seconds, which is up to $46 \times$ faster.

\subsection{Ablation Study}
\label{sec:ablation}
In the ablation study, we showcase the impact of coarsen level $c$ on \methodname{}'s performance and the effectiveness of its modules in the fairness of learned representations. In our supplement, 
we also play with hyperparameters $\lambda_{c}$ and $\lambda_{r}$ to study the trade-off between fairness and effectiveness.


\subsubsection{Impact of coarsening} We vary the coarsen level $c$ to observe its impact on graph sizes and model performance.
\autoref{table:ablation_levels} shows the results with NetMF on the Credit dataset. Other results are similar and can be found in our supplement. Specifically, we study:

\noindent \textbf{P1: How the graph changes after each time of coarsening.} We observe that increasing $c$ exponentially reduces the numbers of nodes and edges, which corroborates the analysis in Section \ref{sec:analysis}.

\noindent \textbf{P2: How \methodname{} improves the efficiency by coarsening.} Generally, the efficiency is significantly improved when $c$ increases. A small $c$ (e.g., $c=1$) may make \methodname{} slower because the time of coarsening and refinement outweighs the saved time of learning embeddings when the coarsened graph is not small enough.

\noindent \textbf{P3: How the fairness evolves with varying $c$.} In terms of fairness in the downstream task, we observe that increasing $c$ can visibly improve the fairness of representations. For example, vanilla NetMF has $\Delta_{DP}=2.66$ and $\Delta_{EO}=1.34$, which is improved to $\Delta_{DP}=0.68$ and $\Delta_{EO}=0.68$ by \methodname{} ($c=4$).

\noindent \textbf{P4: How the utility is impacted by the information loss.} We find increasing $c$ leads to a slight decrease in AUROC and F1 scores. The AUROC score only decreases by $0.3\%$ after \methodname{} coarsens the graph 4 times. In some cases, \methodname{} achieves a better utility than the base embedding method (i.e., with $c=1$). Given the little cost of utility, we suggest using a large $c$ for the sake of fairness and efficiency.

\begin{table*}[t]
\caption{Impact of coarsen level on graph sizes and \methodname{}'s performance.} 
\label{table:ablation_levels}
\vspace{-0.1in}
\begin{center}
\begin{small}

\begin{tabular}{r|r|rrrr}
\toprule
 \makecell[c]{Metric} & \makecell[c]{Vanilla NetMF} & $c=1$ & $c=2$ & $c=3$ & $c=4$ \\
\midrule
 \# Nodes & 30000 & 15033 & 7544 & 3789 & 1899 \\
 \# Edges & 1.44M & 550K & 256K & 138K & 83K \\
\midrule

Time $(\downarrow)$ & 240.38 & 487.60 & 208.75 & 139.79 & \textbf{90.28} \\
\midrule
 $\Delta_{DP}~(\downarrow)$ & 2.66 $\pm$ 0.55 & 2.23 $\pm$ 0.38 & 2.07 $\pm$ 0.22 & 1.88 $\pm$ 0.37 & \textbf{0.68 $\pm$ 0.50} \\
 $\Delta_{EO}~(\downarrow)$ & 1.34 $\pm$ 0.93 & 1.22 $\pm$ 0.80 & 1.14 $\pm$ 0.76 & 1.09 $\pm$ 0.69 & \textbf{0.68 $\pm$ 0.66} \\
\midrule
 AUROC ($\uparrow$) & 74.93 $\pm$ 0.43 & \textbf{75.12 $\pm$ 0.52} & 74.95 $\pm$ 0.40 & 74.80 $\pm$ 0.41 & 74.69 $\pm$ 0.43 \\
 F1 ($\uparrow$) & 88.36 $\pm$ 0.08 & 88.34 $\pm$ 0.11 & \textbf{88.41 $\pm$ 0.11} & 88.34 $\pm$ 0.11 & 88.31 $\pm$ 0.08 \\
\bottomrule
\end{tabular}
\end{small}
\end{center}
\vspace{-0.05in}
\end{table*}


\begin{table*}[!t]
\caption{Ablation study of each module's effectiveness in fairness (with NetMF).}
\label{table:ablation_modules_main}
\vspace{-0.1in}
\begin{center}
\begin{small}
\begin{tabular}{c|l|rr|rr|r}
\toprule
Dataset & \makecell[c]{Method} & \makecell[c]{AUROC ($\uparrow$)} & \makecell[c]{F1 ($\uparrow$)} & \makecell[c]{$\Delta_{DP}~(\downarrow)$} & \makecell[c]{$\Delta_{EO}~(\downarrow)$} & \makecell[c]{Time $(\downarrow)$}\\
\midrule
\multirow{4}{*}{German} & FairMILE & 61.93 $\pm$ 3.38 & 82.35 $\pm$ 0.00 & \textbf{0.00 $\pm$ 0.00} & \textbf{0.00 $\pm$ 0.00} & 6.31 \\
 & FairMILE w/o fair coarsening & 63.61 $\pm$ 2.91 & 82.27 $\pm$ 0.58 & 2.20 $\pm$ 0.98 & 1.47 $\pm$ 1.54 & 5.98 \\
 & FairMILE w/o fair refinement & 63.37 $\pm$ 2.91 & 81.08 $\pm$ 0.60 & 2.94 $\pm$ 1.58 & 2.02 $\pm$ 1.06 & 5.64 \\
 & MILE & 63.02 $\pm$ 2.76 & 82.04 $\pm$ 0.76 & 3.28 $\pm$ 2.10 & 2.01 $\pm$ 0.97 & 5.06 \\
\midrule
\multirow{4}{*}{Recidivism} & FairMILE & 89.52 $\pm$ 0.50 & 77.65 $\pm$ 0.47 & \textbf{2.81 $\pm$ 0.50} & \textbf{0.75 $\pm$ 0.55} & 29.66 \\
 & FairMILE w/o fair coarsening & 90.23 $\pm$ 0.43 & 79.62 $\pm$ 0.64 & 3.18 $\pm$ 0.33 & 1.34 $\pm$ 0.61 & 22.36 \\
 & FairMILE w/o fair refinement & 90.25 $\pm$ 0.23 & 79.94 $\pm$ 0.50 & 3.27 $\pm$ 0.36 & 1.30 $\pm$ 0.66 & 26.01 \\
 & MILE & 90.75 $\pm$ 0.30 & 80.46 $\pm$ 0.64 & 3.29 $\pm$ 0.32 & 1.43 $\pm$ 0.65 & 18.42 \\
\midrule
\multirow{4}{*}{Credit} & FairMILE & 74.69 $\pm$ 0.43 & 88.31 $\pm$ 0.08 & \textbf{0.68 $\pm$ 0.50} & \textbf{0.68 $\pm$ 0.66} & 90.28 \\
 & FairMILE w/o fair coarsening & 74.65 $\pm$ 0.34 & 88.30 $\pm$ 0.12 & 1.50 $\pm$ 0.52 & 0.92 $\pm$ 0.74 & 57.21 \\
 & FairMILE w/o fair refinement & 74.42 $\pm$ 0.33 & 88.25 $\pm$ 0.15 & 3.15 $\pm$ 0.49 & 1.71 $\pm$ 0.68 & 82.01 \\
 & MILE & 74.65 $\pm$ 0.41 & 88.33 $\pm$ 0.12 & 2.54 $\pm$ 0.48 & 1.38 $\pm$ 0.81 & 49.98 \\
\bottomrule
\end{tabular}
\end{small}
\end{center}
\vskip -0.05in
\end{table*}


\subsubsection{Effectiveness of fairness-aware modules} To evaluate the effectiveness of our fairness-aware modules for graph coarsening and refinement,
we observe the change in performance when we remove the fairness-aware designs. Additionally, we choose MILE~\cite{liang2021mile} as a baseline of scalable embedding methods without fairness considerations.
\autoref{table:ablation_modules_main} reports the results on three datasets.
The result shows that \methodname{} outperforms MILE in terms of fairness, which demonstrates our effectiveness in fairness improvement. We also notice that \textbf{the fairness scores decline on all datasets when removing either fairness-aware design in \methodname{}}, indicating that these designs effectively mitigate the bias in the learned embeddings. Furthermore, the little differences between the utility scores of these methods demonstrate that \methodname{} is able to improve fairness without impacting utility when compared with MILE. However, this improved fairness does come at some cost to efficiency w.r.t MILE (which is always faster than \methodname{}).


\begin{table*}[!t]
\caption{Comparison on utility, fairness, and efficiency metrics in link prediction between \methodname{} and other baselines.}
\label{table:link_prediction_short}
\vspace{-0.1in}
\begin{center}
\begin{small} 
\begin{tabular}{c|l|lll|rr|r}
\toprule
Dataset & \makecell[c]{Method} & \makecell[c]{AUROC ($\uparrow$)} & \makecell[c]{AP ($\uparrow$)} & \makecell[c]{Accuracy ($\uparrow$)} & \makecell[c]{$\Delta_{DP, ~\mathrm{LP}}~(\downarrow)$} & \makecell[c]{$\Delta_{EO, ~\mathrm{LP}}~(\downarrow)$} & \makecell[c]{Time $(\downarrow)$}\\
\midrule
\multirow{11}{*}{Cora}
 & VGAE & \textbf{90.86 $\pm$ 0.79} & \textbf{92.81 $\pm$ 0.89} & \textbf{82.68 $\pm$ 1.19} & 47.51 $\pm$ 2.47 & 24.12 $\pm$ 3.29 & \textbf{15.17} \\
 & FairAdj$_{\mathrm{T2=2}}$ & 89.56 $\pm$ 1.06 & 91.31 $\pm$ 1.21 & 81.57 $\pm$ 1.22 & 45.47 $\pm$ 2.52 & 20.44 $\pm$ 3.48 & 56.76 \\
 & FairAdj$_{\mathrm{T2=5}}$ & 88.49 $\pm$ 1.30 & 90.34 $\pm$ 1.36 & 80.97 $\pm$ 0.83 & 42.39 $\pm$ 2.95 & 16.75 $\pm$ 3.26 & 70.37 \\
 & FairAdj$_{\mathrm{T2=20}}$ & 85.97 $\pm$ 0.62 & 87.84 $\pm$ 0.53 & 77.68 $\pm$ 0.54 & 35.67 $\pm$ 1.45 & \textbf{11.26 $\pm$ 3.07} & 145.01 \\
 & CFGE & 90.76 $\pm$ 1.27 & 92.59 $\pm$ 1.30 & 82.35 $\pm$ 1.26 & \textbf{32.06 $\pm$ 1.33} & 13.73 $\pm$ 1.63 & 531.79 \\
 \cmidrule{2-8}
 & NetMF & 91.33 $\pm$ 1.35 & 92.96 $\pm$ 1.16 & \textbf{86.41 $\pm$ 1.10} & 42.03 $\pm$ 2.57 & 17.24 $\pm$ 2.48 & 22.80 \\
 & \methodname{}-NetMF & \textbf{92.51 $\pm$ 0.74} & \textbf{93.56 $\pm$ 1.03} & 84.42 $\pm$ 0.87 & \textbf{41.19 $\pm$ 1.47} & \textbf{16.09 $\pm$ 1.46} & \textbf{8.16} \\
 \cmidrule{2-8}
 & DeepWalk & 88.52 $\pm$ 1.01 & 89.84 $\pm$ 0.77 & 77.78 $\pm$ 0.99 & 42.53 $\pm$ 1.55 & 21.58 $\pm$ 2.19 & 47.30 \\
 & \methodname{}-DeepWalk & \textbf{92.00 $\pm$ 1.27} & \textbf{93.67 $\pm$ 0.93} & \textbf{82.26 $\pm$ 0.64} & \textbf{33.17 $\pm$ 1.02} & \textbf{14.31 $\pm$ 1.40} & \textbf{19.40} \\
 \cmidrule{2-8}
 & Node2vec & 90.87 $\pm$ 1.03 & 92.03 $\pm$ 0.77 & 80.15 $\pm$ 0.55 & 45.62 $\pm$ 1.80 & 23.37 $\pm$ 3.09 & 21.30 \\
 & Fairwalk & 89.49 $\pm$ 1.24 & 91.00 $\pm$ 0.88 & 78.92 $\pm$ 0.99 & 42.21 $\pm$ 2.28 & 18.68 $\pm$ 2.68 & 22.71 \\
 & \methodname{}-Node2vec & \textbf{91.77 $\pm$ 0.89} & \textbf{93.13 $\pm$ 1.03} & \textbf{85.16 $\pm$ 0.74} & \textbf{28.99 $\pm$ 0.98} & \textbf{11.65 $\pm$ 0.81} & \textbf{13.71} \\
\midrule
\multirow{11}{*}{Citeseer}
 & VGAE & \textbf{88.24 $\pm$ 1.83} & \textbf{90.53 $\pm$ 2.78} & \textbf{81.67 $\pm$ 3.39} & 24.82 $\pm$ 2.67 & 7.13 $\pm$ 3.32 & \textbf{20.33} \\
 & FairAdj$_{\mathrm{T2=2}}$ & 87.90 $\pm$ 1.81 & 89.99 $\pm$ 2.81 & 81.61 $\pm$ 3.02 & 24.17 $\pm$ 2.65 & 6.49 $\pm$ 3.34 & 84.10 \\
 & FairAdj$_{\mathrm{T2=5}}$ & 87.47 $\pm$ 1.74 & 89.48 $\pm$ 2.67 & 81.50 $\pm$ 3.01 & 23.65 $\pm$ 2.72 & 6.47 $\pm$ 3.47 & 109.07 \\
 & FairAdj$_{\mathrm{T2=20}}$ & 86.90 $\pm$ 2.23 & 88.61 $\pm$ 3.23 & 80.41 $\pm$ 3.35 & 22.74 $\pm$ 3.23 & 6.47 $\pm$ 3.38 & 240.16 \\
 & CFGE & 87.36 $\pm$ 3.29 & 90.45 $\pm$ 3.11 & 71.12 $\pm$ 13.42 & \textbf{13.60 $\pm$ 4.86} & \textbf{3.00 $\pm$ 1.39} & 478.18 \\
 \cmidrule{2-8}
 & NetMF & 87.62 $\pm$ 0.73 & 91.20 $\pm$ 0.46 & 84.53 $\pm$ 0.30 & 23.32 $\pm$ 1.48 & 4.66 $\pm$ 1.72 & 19.35 \\
 & \methodname{}-NetMF & \textbf{89.22 $\pm$ 0.34} & \textbf{91.71 $\pm$ 0.69} & \textbf{85.30 $\pm$ 0.58} & \textbf{20.96 $\pm$ 1.65} & \textbf{2.79 $\pm$ 1.45} & \textbf{7.57} \\
 \cmidrule{2-8}
 & DeepWalk & 88.83 $\pm$ 0.26 & 90.96 $\pm$ 0.31 & 81.59 $\pm$ 1.54 & 23.99 $\pm$ 1.88 & 5.55 $\pm$ 3.14 & 48.10 \\
 & \methodname{}-DeepWalk & \textbf{89.50 $\pm$ 1.02} & \textbf{92.55 $\pm$ 0.70} & \textbf{86.39 $\pm$ 0.37} & \textbf{17.67 $\pm$ 1.18} & \textbf{2.62 $\pm$ 1.31} & \textbf{21.20} \\
 \cmidrule{2-8}
 & Node2vec & 89.00 $\pm$ 0.65 & 91.77 $\pm$ 0.35 & 83.88 $\pm$ 0.81 & 22.85 $\pm$ 1.57 & 3.52 $\pm$ 2.17 & 16.88 \\
 & Fairwalk & 88.86 $\pm$ 0.85 & 91.71 $\pm$ 0.33 & 83.52 $\pm$ 0.61 & 23.59 $\pm$ 1.38 & 4.13 $\pm$ 2.43 & 16.78 \\
 & \methodname{}-Node2vec & \textbf{89.54 $\pm$ 0.95} & \textbf{92.31 $\pm$ 0.52} & \textbf{86.91 $\pm$ 0.78} & \textbf{12.49 $\pm$ 0.91} & \textbf{2.08 $\pm$ 1.08} & \textbf{11.72} \\
\bottomrule
\end{tabular}
\end{small}
\end{center}
\end{table*}

\subsection{Results for Link Prediction}
\label{sec:link_prediction_short}
We next evaluate \methodname{} in the context of link prediction. \autoref{table:link_prediction_short} only shows the results on Cora and Citeseer.
Full results and detailed analysis are in our supplement.
First, \methodname{} effectively mitigates the bias in link prediction. Our framework has lower $\Delta_{DP, ~\mathrm{LP}}$ and $\Delta_{EO, ~\mathrm{LP}}$ than the specialized approaches and graph embedding approaches. On the other hand, \methodname{} is more efficient than other baselines, especially the fairness-aware competitors FairAdj and CFGE. Finally, \methodname{} has a similar or slightly better utility compared with other approaches. This demonstrates that \methodname{} consistently achieves the tradeoff between fairness, efficiency, and utility in a different downstream task.

\section{Conclusion}
In this paper, we study the problem of fair graph representation learning and propose a general framework \methodname{}, which can incorporate fairness considerations with any unsupervised graph embedding algorithms and learn fair embeddings towards one or multiple sensitive attributes. 
We conduct comprehensive experiments to demonstrate that with respect to state-of-the-art techniques for fair graph representation learning, our work achieves similar or better performance in terms of utility and fairness, while \methodname{} can significantly outperform them on the axis of efficiency (up to two orders of magnitude faster). Planned future work includes evaluating the use of \methodname{} for real-world graph-based model auditing \cite{maneriker2023online} in deployed online settings. 








\begin{acks}
This material is supported by the National Science Foundation (NSF) under grants OAC-2018627, CCF-2028944, and CNS-2112471. Any opinions, findings, and conclusions in this material are those of the author(s) and may not reflect the views of the respective funding agency.
\end{acks}

\bibliographystyle{ACM-Reference-Format}
\bibliography{mybib}


\appendix 

\section{Full Theoretical Analysis}
\label{sec:time_analysis}

\begin{corollary}
\label{crl:coarse_results}
Algorithm \ref{alg:coarsen} coarsens a graph $\mathcal{G}_{i} = (\mathcal{V}_{i}, \mathcal{E}_{i})$ into a smaller graph $\mathcal{G}_{i+1} = (\mathcal{V}_{i+1}, \mathcal{E}_{i+1})$ such that $\frac{1}{2}|\mathcal{V}_{i}| \leq |\mathcal{V}_{i+1}| \leq |\mathcal{V}_{i}|$ and $|\mathcal{E}_{i+1}| \leq |\mathcal{E}_{i}|$.
\end{corollary}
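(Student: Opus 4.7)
The plan is to verify the two bounds by reading the pseudocode of Algorithm~\ref{alg:coarsen} directly: the node bound follows from a case analysis on what happens to each node during the matching loop, and the edge bound follows from a simple counting argument based on how supernodes inherit edges.

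\textbf{Node count.} First I would argue that the collection $\{u' : u \in \mathcal{V}_i, \text{ processed}\}$ forms a partition of $\mathcal{V}_i$, where each block has size $1$ or $2$. Concretely, the outer loop iterates through $\mathcal{V}_i$ in increasing degree order, skipping any node that is already matched, and every iteration either forms a singleton supernode $\{u\}$ (lines 3--4) or a pair $\{u,v\}$ with some unmatched neighbor $v$ (lines 5--6). In either branch, every node touched by the branch is marked matched, so a node can enter at most one supernode, and the loop terminates only once every node in $\mathcal{V}_i$ has been matched. Hence $|\mathcal{V}_{i+1}|$ equals the number of blocks in a partition of $\mathcal{V}_i$ into pieces of size $1$ or $2$, which immediately yields $\tfrac{1}{2}|\mathcal{V}_i| \leq |\mathcal{V}_{i+1}| \leq |\mathcal{V}_i|$, with the extremes realized when every supernode is a pair or a singleton, respectively.

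\textbf{Edge count.} For the edge bound I would define, for each edge $e \in \mathcal{E}_{i+1}$ between supernodes $u', v' \in \mathcal{V}_{i+1}$, a representative edge $\rho(e) \in \mathcal{E}_i$ that is any edge of $\mathcal{E}_i$ with one endpoint in $u'$ and the other in $v'$; such an edge exists because (by line~10) supernodes are connected in $\mathcal{G}_{i+1}$ only when at least one edge of $\mathcal{E}_i$ crosses between the corresponding blocks. The map $\rho$ is injective: if $\rho(e_1) = \rho(e_2)$ then the two endpoints of this common edge uniquely determine the pair $(u',v')$ of supernodes containing them, hence $e_1 = e_2$. Therefore $|\mathcal{E}_{i+1}| \le |\mathcal{E}_i|$.

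\textbf{Where the main care is needed.} The only subtle point is the edge step, since one must be explicit that at most one edge is produced in $\mathcal{G}_{i+1}$ per pair of supernodes (so that parallel edges arising from the merge are collapsed rather than multiplied), and that edges of $\mathcal{E}_i$ whose two endpoints end up in the same supernode contribute nothing to $\mathcal{E}_{i+1}$. Both facts are implicit in the ``Connect supernodes based on $\mathcal{E}_i$'' step of line~10, and spelling them out is what makes $\rho$ well-defined and injective. Everything else is essentially a restatement of the loop invariants, so I expect the proof to be short once the partition structure and the injectivity of $\rho$ are made explicit.
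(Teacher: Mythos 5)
Your proposal is correct and follows essentially the same route as the paper's proof: the node bound comes from observing that supernodes partition $\mathcal{V}_i$ into blocks of size one or two, and the edge bound from the fact that each edge of $\mathcal{G}_{i+1}$ corresponds to at least one crossing edge of $\mathcal{G}_i$ while distinct supernode pairs yield distinct such edges. Your explicit injective map $\rho$ is just a more carefully spelled-out version of the paper's one-line counting argument, so there is no substantive difference.
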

\begin{proof}
In the optimal case, all nodes in $\mathcal{G}_{i}$ get matched and therefore $|\mathcal{V}_{i+1}| = \frac{1}{2} |\mathcal{V}_{i}|$. The worst case is that all nodes are isolated so the number of nodes does not decrease.

Our coarsening algorithm adds an edge $(u, v)$ in $\mathcal{G}_{i+1}$ if and only if there exists at least one edge in $\mathcal{G}_{i}$ that connects one of $u$'s child nodes and one of $v$'s child nodes. Therefore $|\mathcal{E}_{i+1}| \leq |\mathcal{E}_{i}|$.
\end{proof}

\begin{lemma}
\label{lem:time_coarsen}
In the phase of graph coarsening, \methodname{} consumes $O\left( M\left( \sum_{i=0}^{c-1} (|\mathcal{V}_{i}| + |\mathcal{E}_{i}|) \right) \right)$ time. 
\end{lemma}

\begin{proof}
Without loss of generality, we assume the input graph is $G_{i}$ ($i < c$). For each edge, Algorithm \ref{alg:coarsen} computes the fairness-aware edge weight in $O(M)$ time. Hence it takes $O\left (M (|\mathcal{E}_{i}| + |\mathcal{V}_{i}|)\right )$ time to match and merge nodes. The time of creating a coarsened graph after node matching is also $O\left (M (|\mathcal{E}_{i}| + |\mathcal{V}_{i}|)\right )$, which is mainly used for computing the attribute distribution of new nodes and the weights of new edges in the coarsened graph. Summing up the $c$ coarsen levels, the total time complexity of graph coarsening is $O\left( M\left( \sum_{i=0}^{c-1} (|\mathcal{V}_{i}| + |\mathcal{E}_{i}|) \right) \right)$.
\end{proof}

\begin{corollary}
\label{crl:time_baseembed}
\methodname{} can reduce the time of graph embedding exponentially in the optimal case since $|\mathcal{V}_{c}| \geq 2^{-c} |\mathcal{V}_{0}|$.
\end{corollary}

\begin{proof}
In Corollary \ref{crl:coarse_results}, we analyze that the number of nodes can be reduced by up to half at each coarsen level. Thus the time complexity of base embedding can also be reduced exponentially when $c$ increases.
\end{proof}

\begin{lemma}
If the refinement model has $l$ layers, the time complexity of refinement is $O \left ( l(d+M) \left [ \sum_{i=0}^{c-1} \left ( |\mathcal{E}_{i}| + d |\mathcal{V}_{i}| \right ) \right ] \right )$. 
\end{lemma}

\begin{proof}
We again assume the input graph is $\mathcal{G}_{i+1}$ ($i > 0$) without loss of generality. Before applying the model, \methodname{} projects the embeddings from the supernodes to the child nodes of $\mathcal{G}_{i}$ in $O(d|\mathcal{V}_{i}|)$ time. In each layer of the model, \methodname{} needs $O((d + M)|\mathcal{V}_{i}|)$ to concatenate the input with the sensitive attributes. The following message passing process and the matrix multiplication take $O((d+M)|\mathcal{E}_{i}|)$ and $O(d(d+M)|\mathcal{V}_{i}|)$ time, respectively. Therefore the time complexity of each layer is $O\left((d+M) (|\mathcal{E}_{i}| + d|\mathcal{V}_{i}|) \right)$. Finally, the total time complexity of applying the refinement model is $O \left ( l(d+M) \left [ \sum_{i=0}^{c-1} \left ( |\mathcal{E}_{i}| + d |\mathcal{V}_{i}| \right ) \right ] \right )$.
\end{proof}

\noindent \textbf{Theorem \ref{thm:fairness_main}}. When $L_f$ is minimized, the 2-norm of the difference between the mean embeddings of any two demographic groups regarding a given sensitive attribute is bounded by
\begin{equation}
    \left \| \bm{\mu}_{p} - \bm{\mu}_{q} \right \|_{2} \leq 2 (1 - \min(\beta_{p}, \beta_{q}))
\label{eqn:fairness_theorem}
\end{equation}
where $p, q$ are any two different values of the given sensitive attribute (e.g., gender or race). For $i \in \{p, q\}$, $\bm{\mu}_{i}$ denotes the mean embedding values of nodes from group $i$, and $\beta_{i}$ denotes the ratio of nodes from group $i$ that have at least one inter-group edge.
\label{thm:fairness}

\begin{proof}
    Our fairness loss $L_f$ in Equation (\ref{eqn:fairloss}) is the average negative cosine similarity of embeddings of all connected node pairs $(u, v) \in \mathcal{E}'_{c}$ with diverse sensitive attributes, which means $\phi(u, v) \geq \gamma$. To reach the global minimum of $L_f$, the parameters $\{\Theta_{i}\}_{i=1}^{l}$ are optimized to completely focus on sensitive attributes and generate identical embeddings for the nodes, i.e., the similarity is 1. In general, using a smaller $\gamma$ will have more edges impacted by the fairness objective. $\gamma=0$ is the strictest value that always leads to all nodes with inter-group connections having the same representations (denoted as $\bm{\bar{h}}$). A special case is when $c=1$, $\phi(u, v) = 1$ holds for any $u, v \in \mathcal{V}_0$ such that $s_u \neq s_v$. Therefore any $\gamma$ enforces fairness in the same strength.

    Recall that $p, q$ are any two different values of the given sensitive attribute (e.g., gender). For $i \in \{p, q\}$, let $\mathcal{U}_{i}$ be the set of nodes with attribute value $i$ (e.g., all nodes in $\mathcal{U}_{i}$ share the same gender $i$), Then let $\mathcal{U}_{ii}$ be the subset of $\mathcal{U}_{i}$ in which the nodes are only connected to nodes in $\mathcal{U}_{i}$. Note that $\beta_{i} = 1 - |\mathcal{U}_{ii}| / |\mathcal{U}_{i}|$. The learned embedding of node $u$ is denoted as $\bm{h}_{u}$. When $L_{f}$ is minimized with $\gamma=0$, we have
    \begin{equation}
    \begin{split}
        & ~\left \| \bm{\mu}_{p} - \bm{\mu}_{q} \right \|_{2} \nonumber \\
        = & ~\| \beta_{p} \bm{\bar{h}} + \frac{1 - \beta_{p}}{|\mathcal{U}_{pp}|} \sum_{u \in \mathcal{U}_{pp}} \bm{h}_{u} -~\beta_{q} \bm{\bar{h}} - \frac{1 - \beta_{q}}{|\mathcal{U}_{qq}|} \sum_{v \in \mathcal{U}_{qq}} \bm{h}_{v}\|_{2} \nonumber \\
        \leq & ~|\beta_{p} - \beta_{q}| \| \bm{\bar{h}} \|_{2} + (1 - \beta_{p}) \| \frac{1}{|\mathcal{U}_{pp}|} \sum_{u \in \mathcal{U}_{pp}} \bm{h}_{u} \|_{2} + (1 - \beta_{q}) \| \frac{1}{|\mathcal{U}_{qq}|} \sum_{v \in \mathcal{U}_{qq}} \bm{h}_{v} \|_{2} \nonumber
    \end{split}
    \end{equation}

    Note that the output embeddings of our refinement model are L2-normalized, hence we finally have
    \begin{equation}
    \begin{split}
        \left \| \bm{\mu}_{p} - \bm{\mu}_{q} \right \|_{2} & \leq |\beta_{p} - \beta_{q}| + (1 - \beta_{p}) + (1 - \beta_{q}) = 2(1 - \min(\beta_{p}, \beta_{q})) \nonumber
    \end{split}
    \end{equation}
    Therefore Equation (\ref{eqn:fairness_theorem}) holds when $L_{f}$ is minimized. 
\end{proof}

\section{Dataset Description}
\label{sec:dataset_description}

In \textit{German}~\cite{agarwal2021towards}, each node is a client in a German bank, and two nodes are linked if their attributes are similar. The task is to classify a client's credit risk as good or bad, and the sensitive attribute is the client's gender. 

\textit{Recidivism}~\cite{agarwal2021towards} is a graph created from a set of bail outcomes from US state courts between 1990-2009, where nodes are defendants and edges connect two nodes if they have similar past criminal records and demographic attributes. The task is to predict if a defendant will commit a violent crime or not, while the sensitive attribute is race. 

\textit{Credit}~\cite{agarwal2021towards} consists of credit card applicants with their demographic features and payment patterns. Each node is an individual, and two nodes are connected based on feature similarity. In this dataset, age is used as the sensitive attribute, and the predicted label is whether the applicant will default on an upcoming payment. 

\textit{Pokec-n}~\cite{takac2012data} is collected from a Slovakia social network. We use both region and gender as the sensitive attributes, and choose each user's field of work as the predicted label. Note that Pokec-n has multiple sensitive attributes and a multi-class target, which \methodname{} can handle by design. However, existing research~\cite{dai2021say, dong2022edits, franco2022deep} has only evaluated the use of this data with one sensitive attribute at-a-time with the target label binarized - a key limitation. We discuss how \methodname{} can redress this limitation in Section~\ref{sec:case-study}.


\begin{table*}[!t]
\caption{Comparison in node classification between \methodname{} and other baselines on Recidivism dataset.}
\label{table:overall_results_techreport}
\vskip -0.1in
\begin{center}
\begin{small}
\begin{tabular}{c|l|ll|rr|r}
\toprule
Dataset & \makecell[c]{Method} & \makecell[c]{AUROC ($\uparrow$)} & \makecell[c]{F1 ($\uparrow$)} & \makecell[c]{$\Delta_{DP}~(\downarrow)$} & \makecell[c]{$\Delta_{EO}~(\downarrow)$} & \makecell[c]{Time $(\downarrow)$}\\
\midrule
\multirow{11}{*}{Recidivism}
 & NetMF & \textbf{94.63 $\pm$ 0.17} & \textbf{85.46 $\pm$ 0.29} & 3.41 $\pm$ 0.21 & 1.62 $\pm$ 0.78 & 141.90 \\
 & \methodname{}-NetMF & 89.52 $\pm$ 0.50 & 77.65 $\pm$ 0.47 & \textbf{2.81 $\pm$ 0.50} & \textbf{0.75 $\pm$ 0.55} & \textbf{29.66} \\
\cmidrule{2-7}
 & DeepWalk & \textbf{93.33 $\pm$ 0.35} & \textbf{83.62 $\pm$ 0.42} & 3.47 $\pm$ 0.37 & 1.28 $\pm$ 0.60 & 303.68 \\
 & \methodname{}-DeepWalk & 86.93 $\pm$ 0.74 & 73.50 $\pm$ 0.99 & \textbf{2.71 $\pm$ 0.58} & \textbf{1.08 $\pm$ 0.77} & \textbf{45.93} \\
\cmidrule{2-7}
 & Node2vec & \textbf{92.56 $\pm$ 0.26} & \textbf{83.31 $\pm$ 0.36} & 3.61 $\pm$ 0.56 & 1.57 $\pm$ 0.97 & 136.33 \\
 & Fairwalk & 92.43 $\pm$ 0.43 & 82.99 $\pm$ 0.51 & 3.32 $\pm$ 0.24 & 1.48 $\pm$ 0.66 & 133.62 \\
 & \methodname{}-Node2vec & 87.00 $\pm$ 0.50 & 71.34 $\pm$ 0.86 & \textbf{2.75 $\pm$ 0.35} & \textbf{1.15 $\pm$ 0.65} & \textbf{38.67} \\
\cmidrule{2-7}
 & Vanilla GCN & \textbf{88.16 $\pm$ 1.72} & \textbf{77.68 $\pm$ 1.63} & 3.83 $\pm$ 0.59 & 1.46 $\pm$ 0.71 & \textbf{474.57} \\
 & FairGNN & 67.26 $\pm$ 7.80 & 44.63 $\pm$ 14.87 & \textbf{0.67 $\pm$ 0.45} & 1.24 $\pm$ 0.40 & 1071.39 \\
 & NIFTY & 77.89 $\pm$ 4.21 & 64.44 $\pm$ 6.11 & 1.34 $\pm$ 1.01 & \textbf{0.63 $\pm$ 0.42} & 1651.09 \\
 & EDITS & 79.48 $\pm$ 13.26 & 69.66 $\pm$ 13.28 & 4.39 $\pm$ 2.10 & 2.52 $\pm$ 2.04 & 1311.42 \\
 & CFGE & 60.92 $\pm$ 1.88 & 25.58 $\pm$ 6.45 & 0.81 $\pm$ 0.58 & 1.45 $\pm$ 0.88 & 2498.52 \\
\bottomrule
\end{tabular}
\end{small}
\end{center}
\end{table*}

The remaining three datasets (namely, \textit{Cora}~\cite{sen2008collective}, \textit{Citeseer}~\cite{sen2008collective}, and \textit{Pubmed}~\cite{namata2012query}) are citation networks widely evaluated in the graph representation learning literature~\cite{li2020dyadic, spinelli2021fairdrop, current2022fairmod}. In these data, each node denotes a paper, and each edge links two nodes if one paper cites the other. 
As in prior work~\cite{li2020dyadic, spinelli2021fairdrop, current2022fairmod}, we treat the category of a paper as its sensitive attribute. The task is to predict whether a paper is cited by the other (or vice versa).

\section{Node Classification on Recidivism}
\label{sec:full_result_node_classification}

We conduct the experiments of node classification on another dataset Recidivism and revisit the questions in Section \ref{sec:exp_nc}. Results are shown in \autoref{table:overall_results_techreport}.

\textbf{A1) Fairness:} \methodname{} improves the fairness of all unsupervised graph embedding approaches. In Recidivism, \methodname{} decreases the $\Delta_{DP}$ scores of NetMF and DeepWalk by $17.6\%$ and $21.9\%$, respectively. In terms of $\Delta_{EO}$, \methodname{} improves the fairness of NetMF and DeepWalk by $53.7\%$ and $15.6\%$. On top of Node2vec, \methodname{} outperforms FairWalk in terms of both $\Delta_{DP}$ and $\Delta_{EO}$. Among the specialized methods, FairGNN has the lowest $\Delta_{DP}$ score and NIFTY has the best $\Delta_{EO}$ score which is slightly better than \methodname{}-NetMF ($0.63\%$ v.s. $0.75\%$). However, this is because these models trade too much utility for fairness (For example, in terms of AUROC, NIFTY $77.89\%$ v.s. \methodname{}-NetMF $89.52\%$). 

\textbf{A2) Efficiency:} \methodname{} is more efficient than other baselines. While GNN-based approaches take up to $2498.5$ seconds, \methodname{} on top of NetMF finishes in only $29.7$ seconds, which is $84.2 \times$ faster. Compared with the unsupervised graph embedding approaches, \methodname{} still improves the efficiency of graph embedding. 

\textbf{A3) Utility:} Compared with the base embedding methods, the utility scores of \methodname{} slightly drop which is fairly remarkable given that \methodname{} significantly improves fairness and efficiency. Among the specialized approaches, all approaches except the vanilla GCN are outperformed by \methodname{} in terms of AUROC and F1. This demonstrates that \methodname{} achieves a better tradeoff between utility and fairness than these GNN-based approaches.

In summary, \methodname{} on top of graph embedding approaches can compete or improve on fairness and utility with various specialized methods while outperforming them significantly in terms of efficiency.  

%
%
\section{Full Ablation Study}
\label{sec:full_ablation_study}
\subsection{Tuning the Coarsen Level}
\label{sec:full_tune_cl}


\begin{figure*}[!t]
\centering
\includegraphics[width=.6\linewidth]{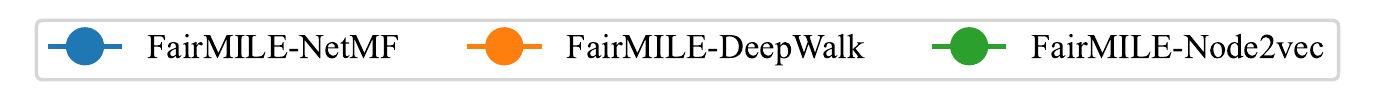}\\
\subfloat[German (AUROC)]{\includegraphics[width=0.180\linewidth]{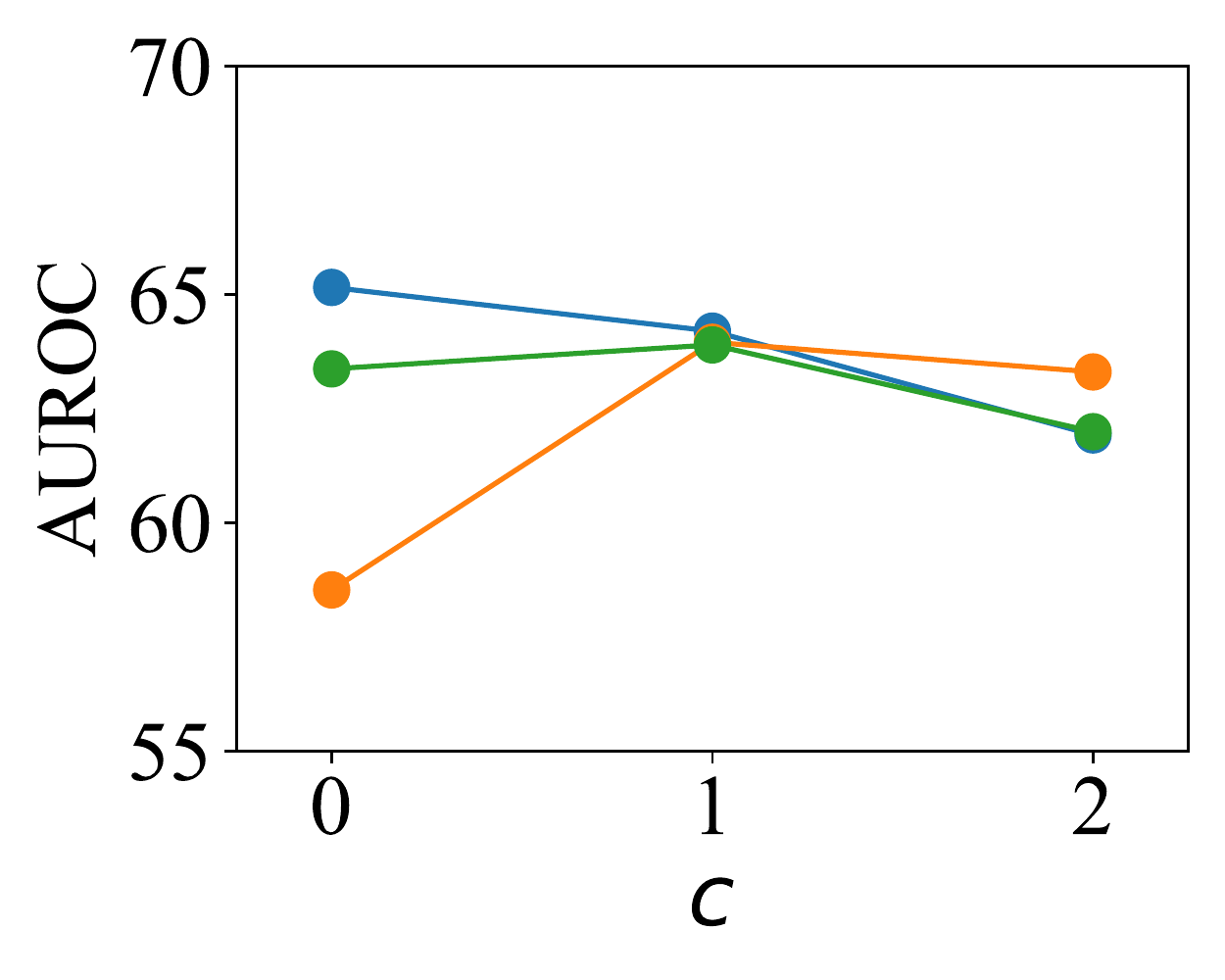}
\label{fig:german_auroc}}
\subfloat[German (F1)]{\includegraphics[width=0.180\linewidth]{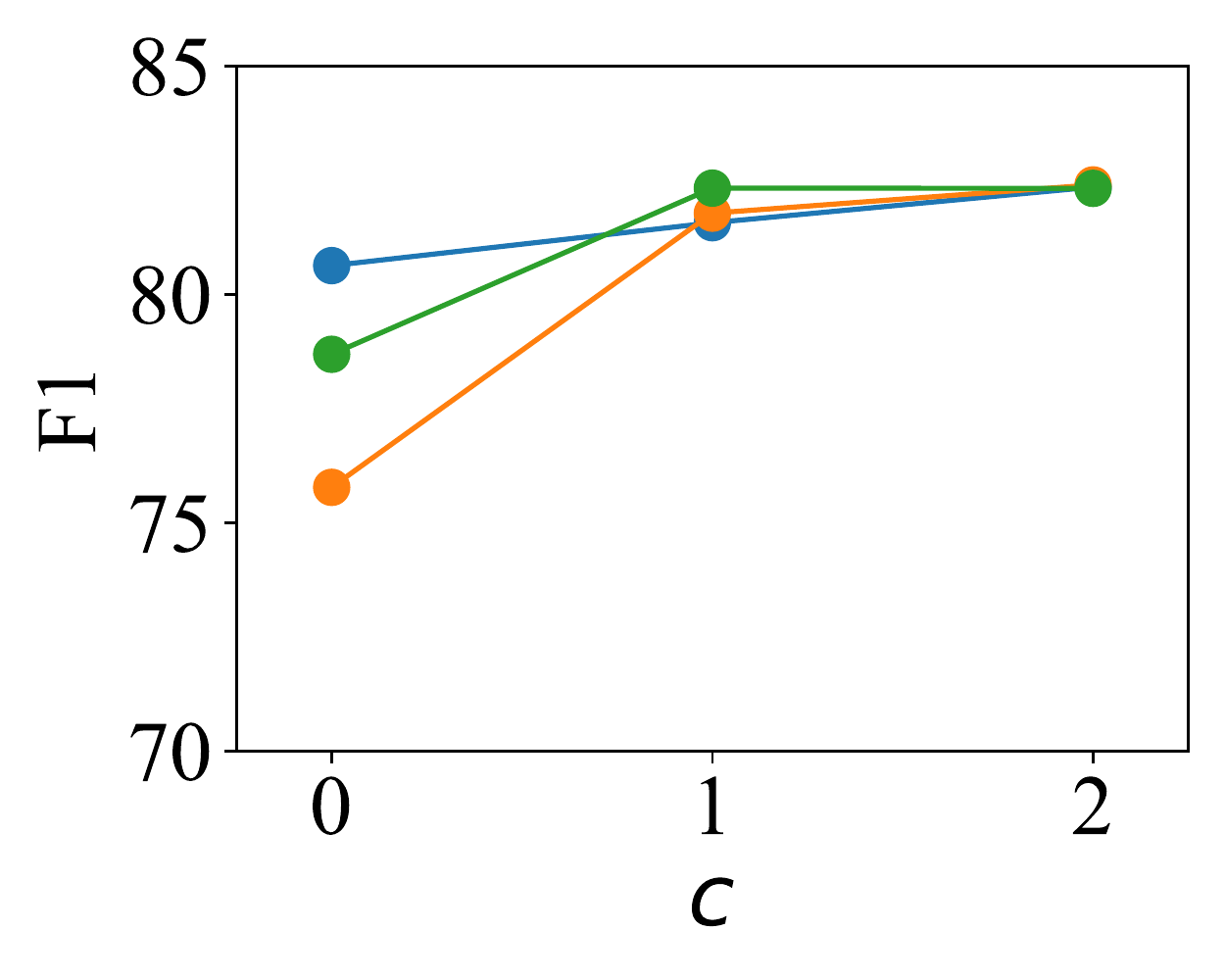}
\label{fig:german_f1}}
\subfloat[German ($\Delta_{DP}$)]{\includegraphics[width=0.180\linewidth]{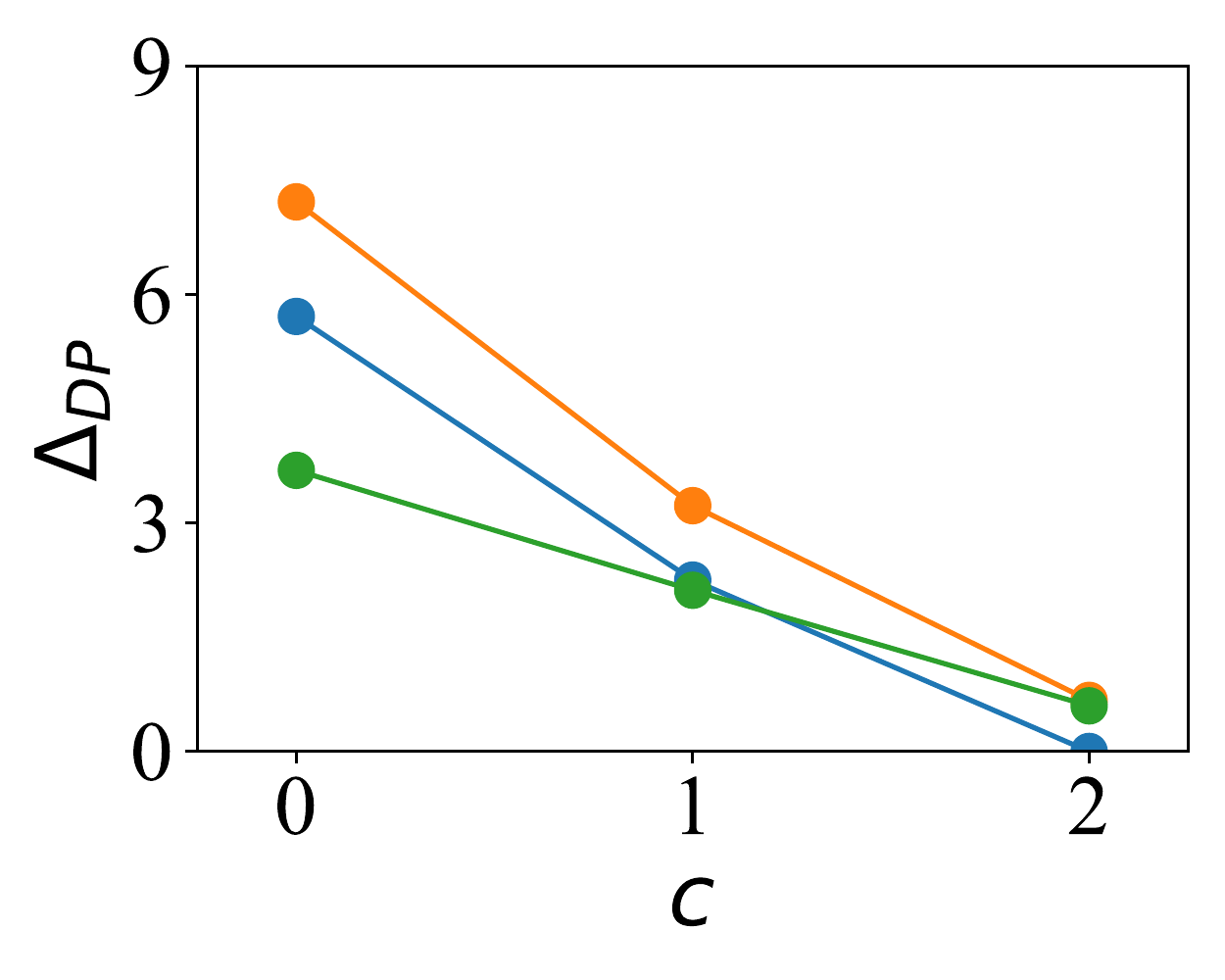}
\label{fig:german_dp}}
\subfloat[German ($\Delta_{EO}$)]{\includegraphics[width=0.180\linewidth]{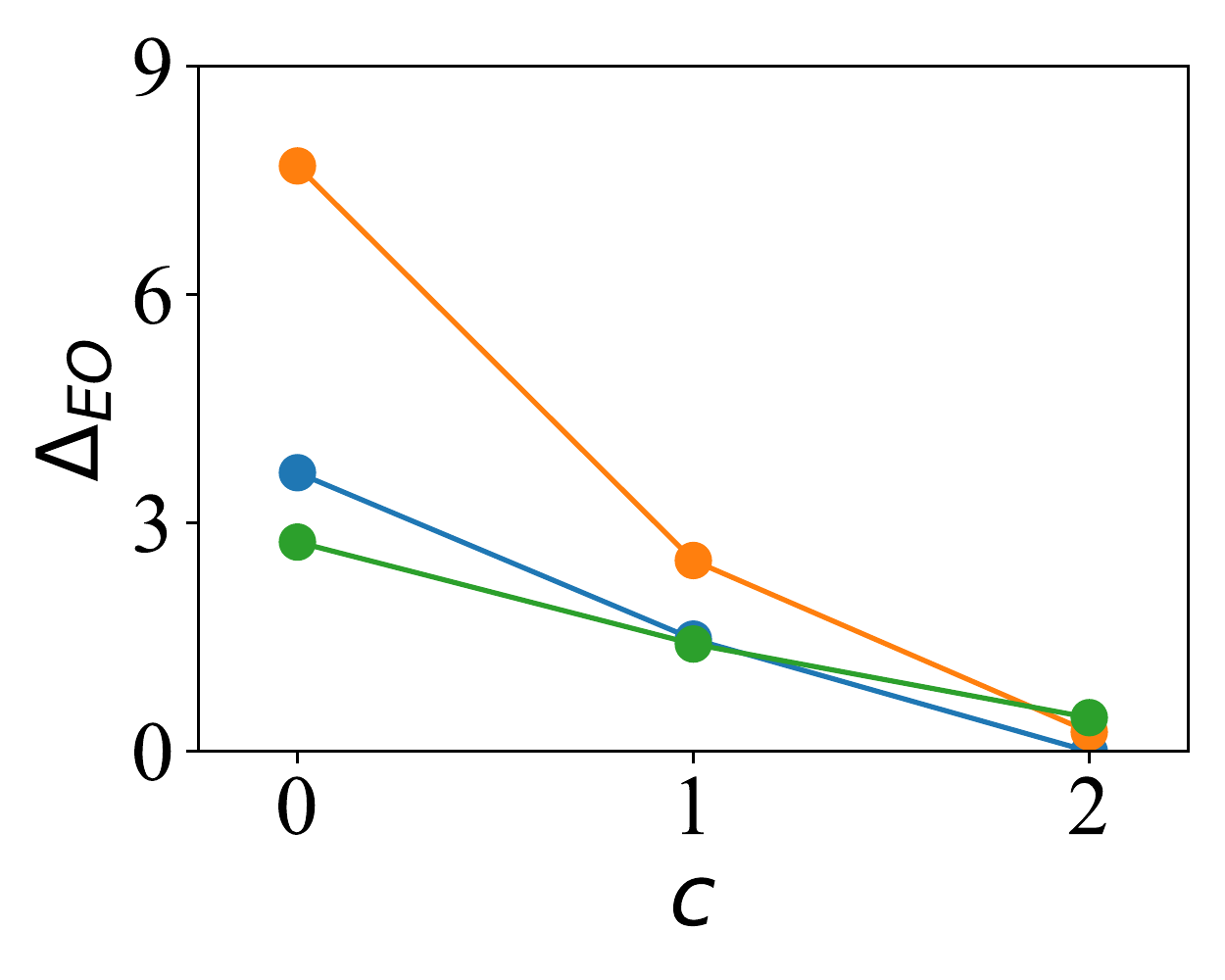}
\label{fig:german_eo}}
\subfloat[German (Time)]{\includegraphics[width=0.180\linewidth]{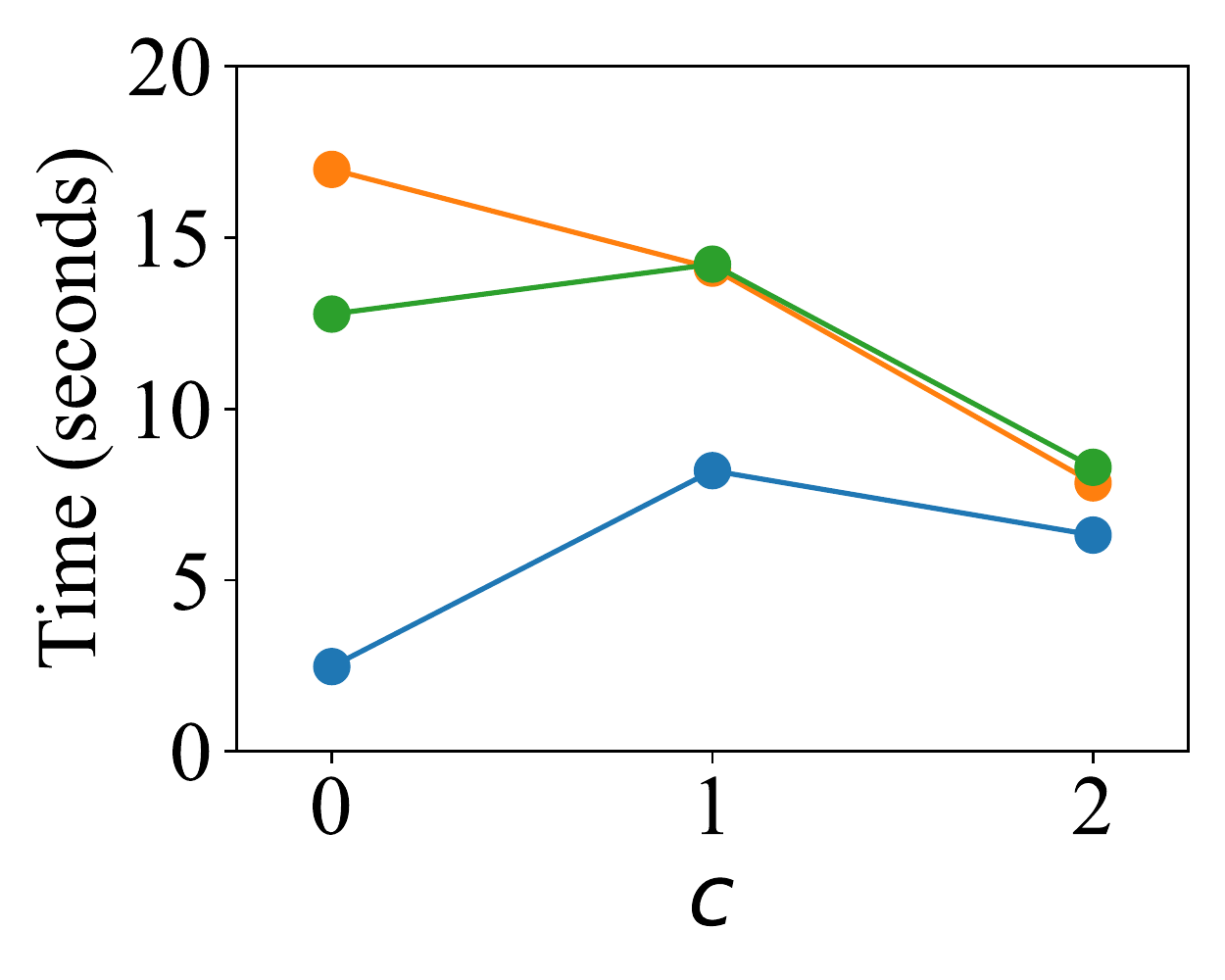}
\label{fig:german_time}} \\

\subfloat[Recidivism (AUROC)]{\includegraphics[width=0.180\linewidth]{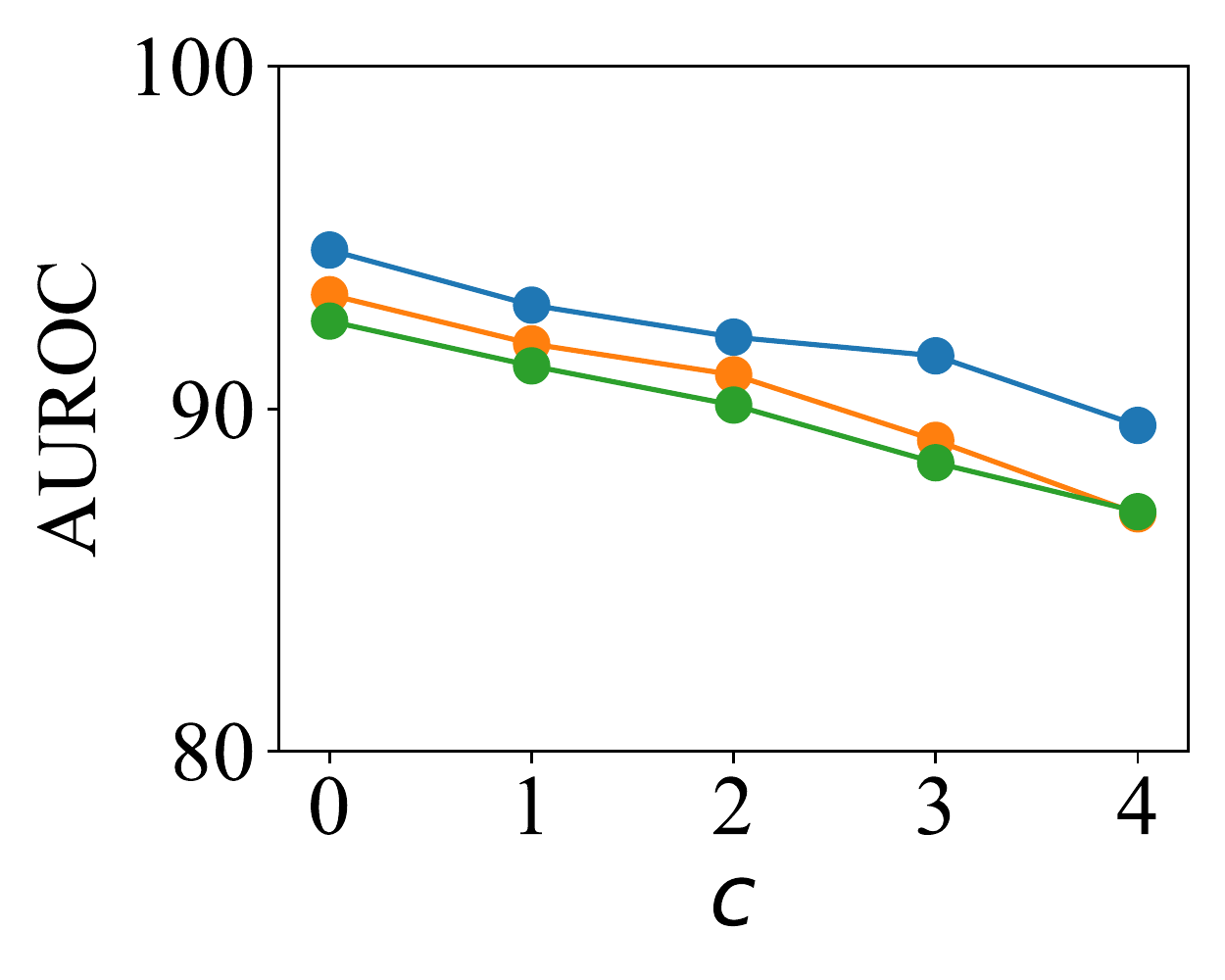}
\label{fig:bail_auroc}}
\subfloat[Recidivism (F1)]{\includegraphics[width=0.180\linewidth]{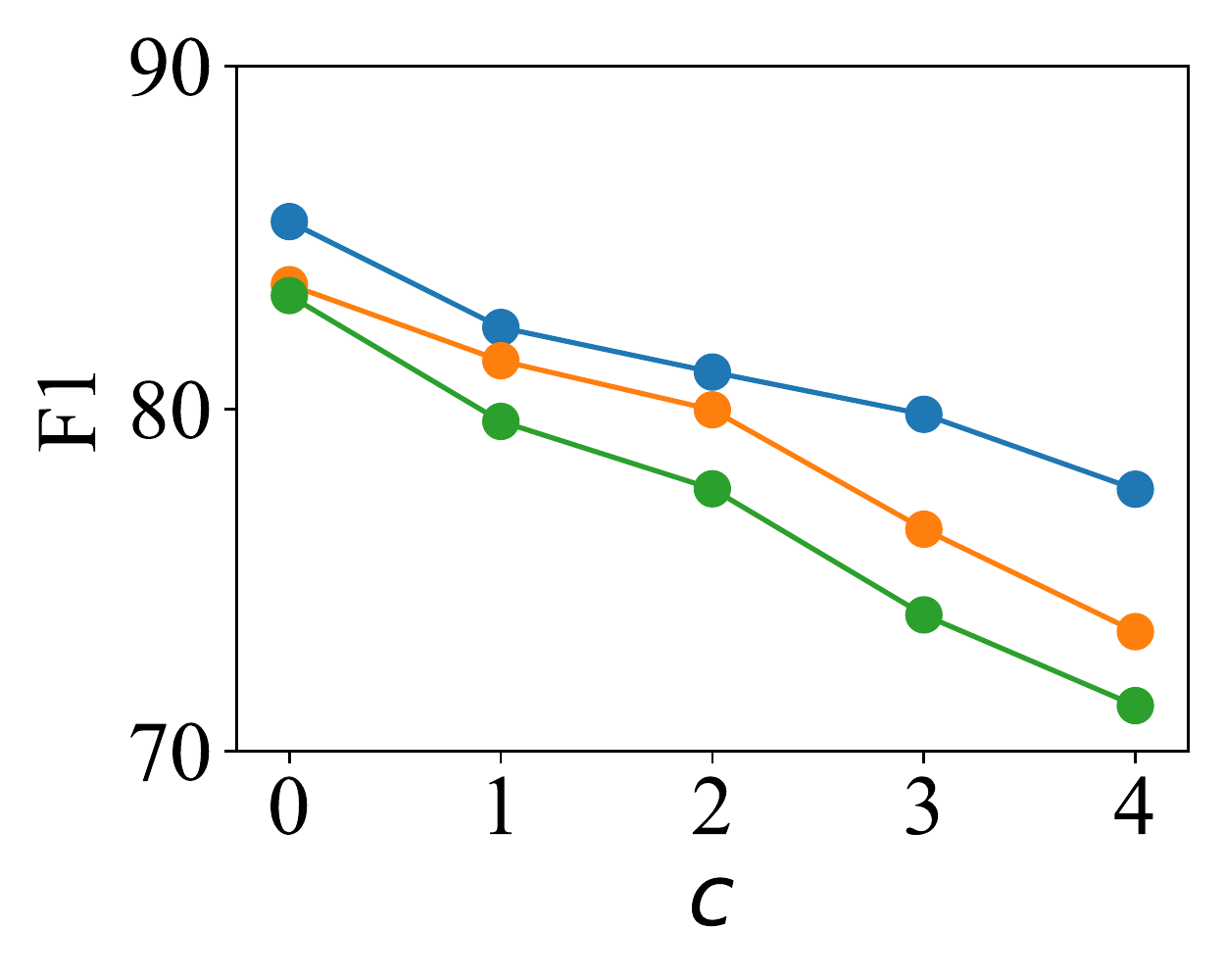}
\label{fig:bail_f1}}
\subfloat[Recidivism ($\Delta_{DP}$)]{\includegraphics[width=0.180\linewidth]{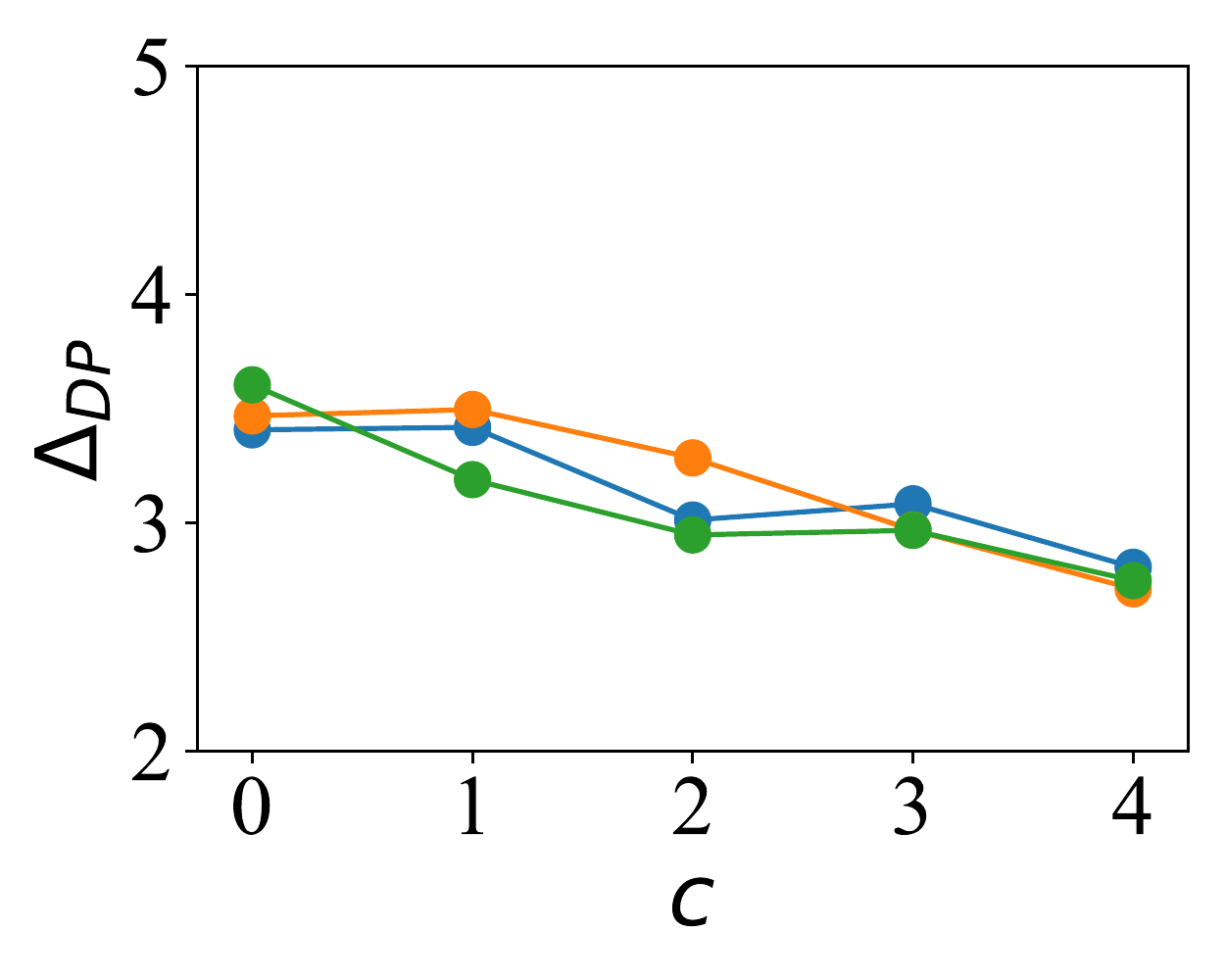}
\label{fig:bail_dp}}
\subfloat[Recidivism ($\Delta_{EO}$)]{\includegraphics[width=0.180\linewidth]{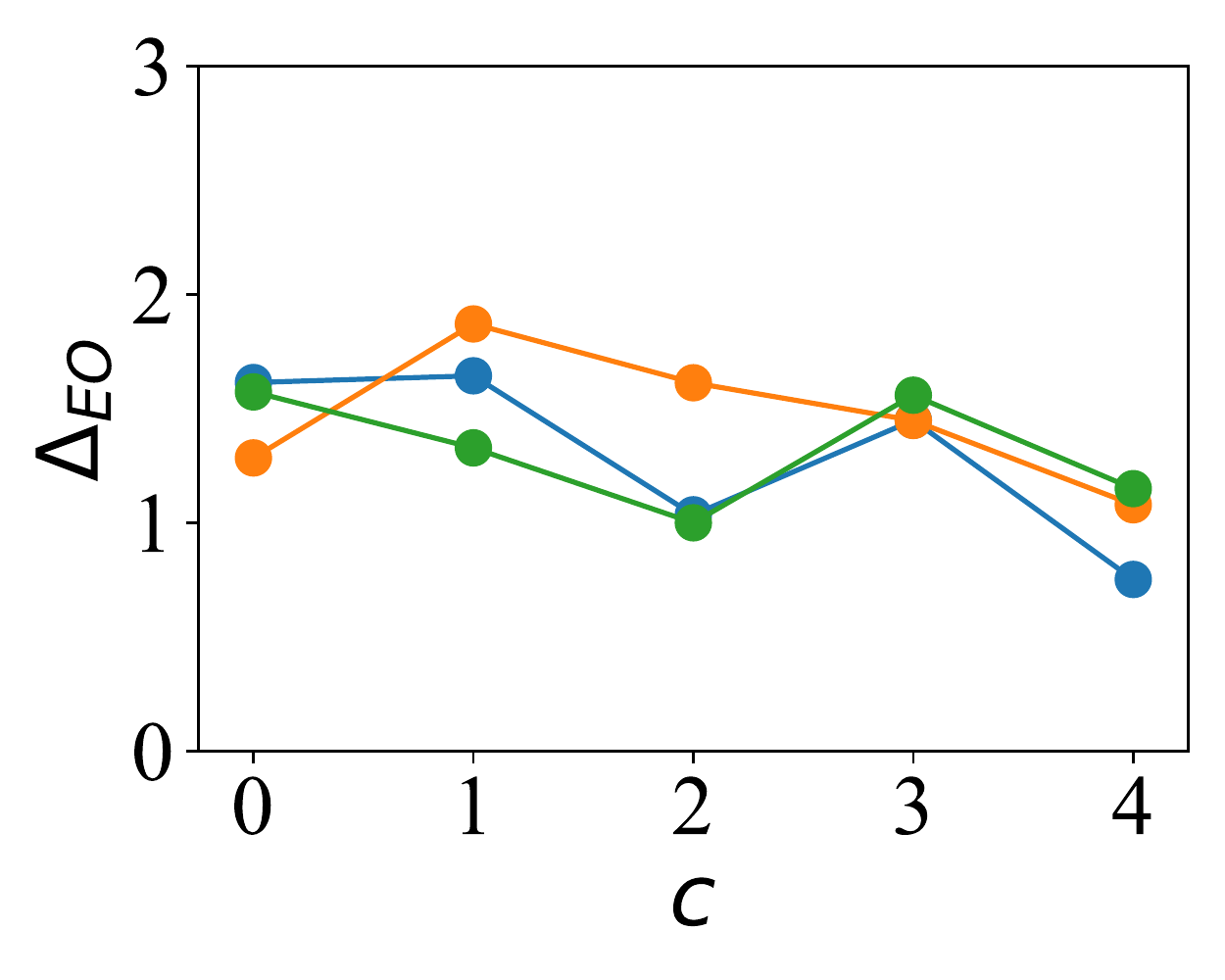}
\label{fig:bail_eo}}
\subfloat[Recidivism (Time)]{\includegraphics[width=0.180\linewidth]{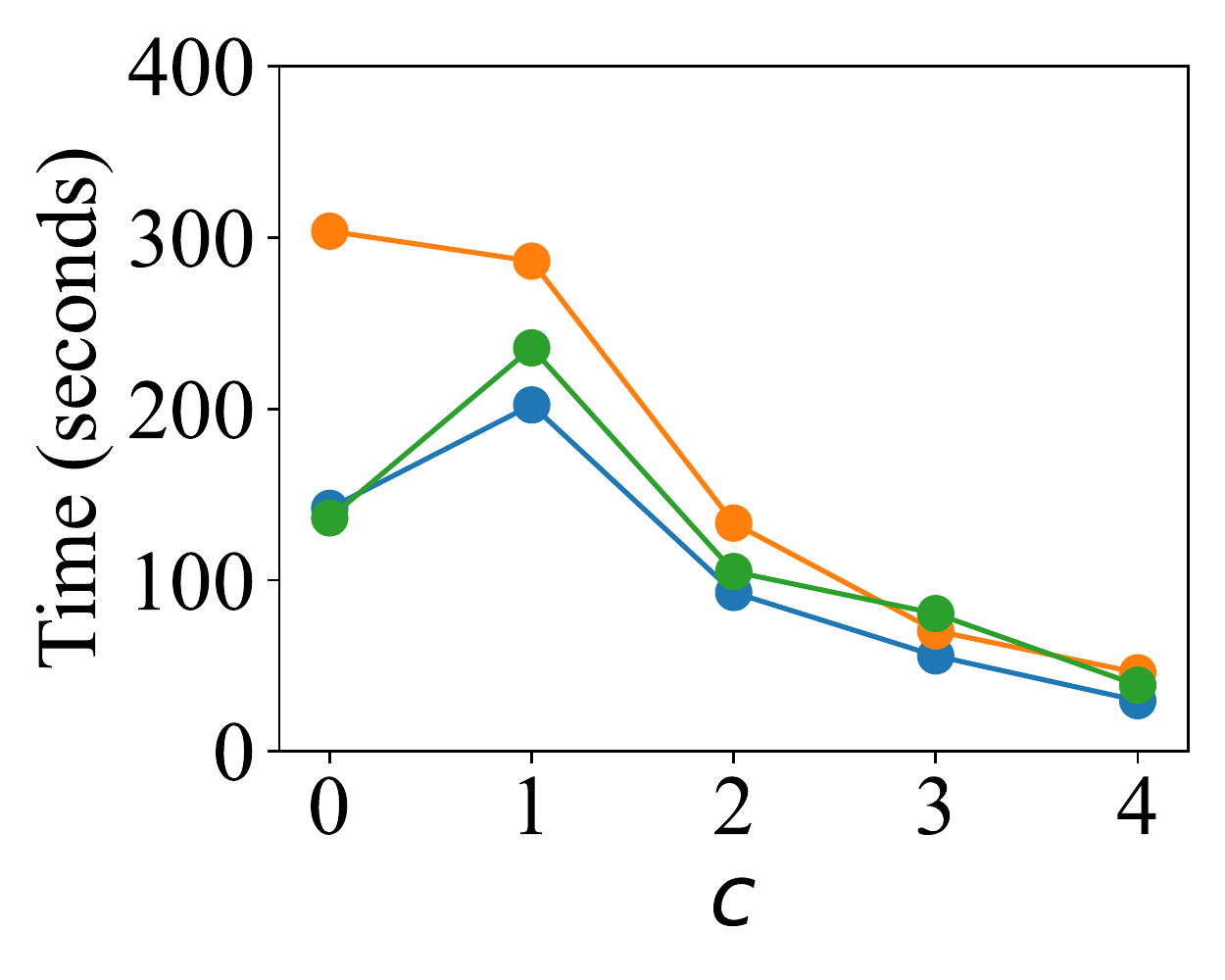}
\label{fig:bail_time}} \\

\subfloat[Credit (AUROC)]{\includegraphics[width=0.180\linewidth]{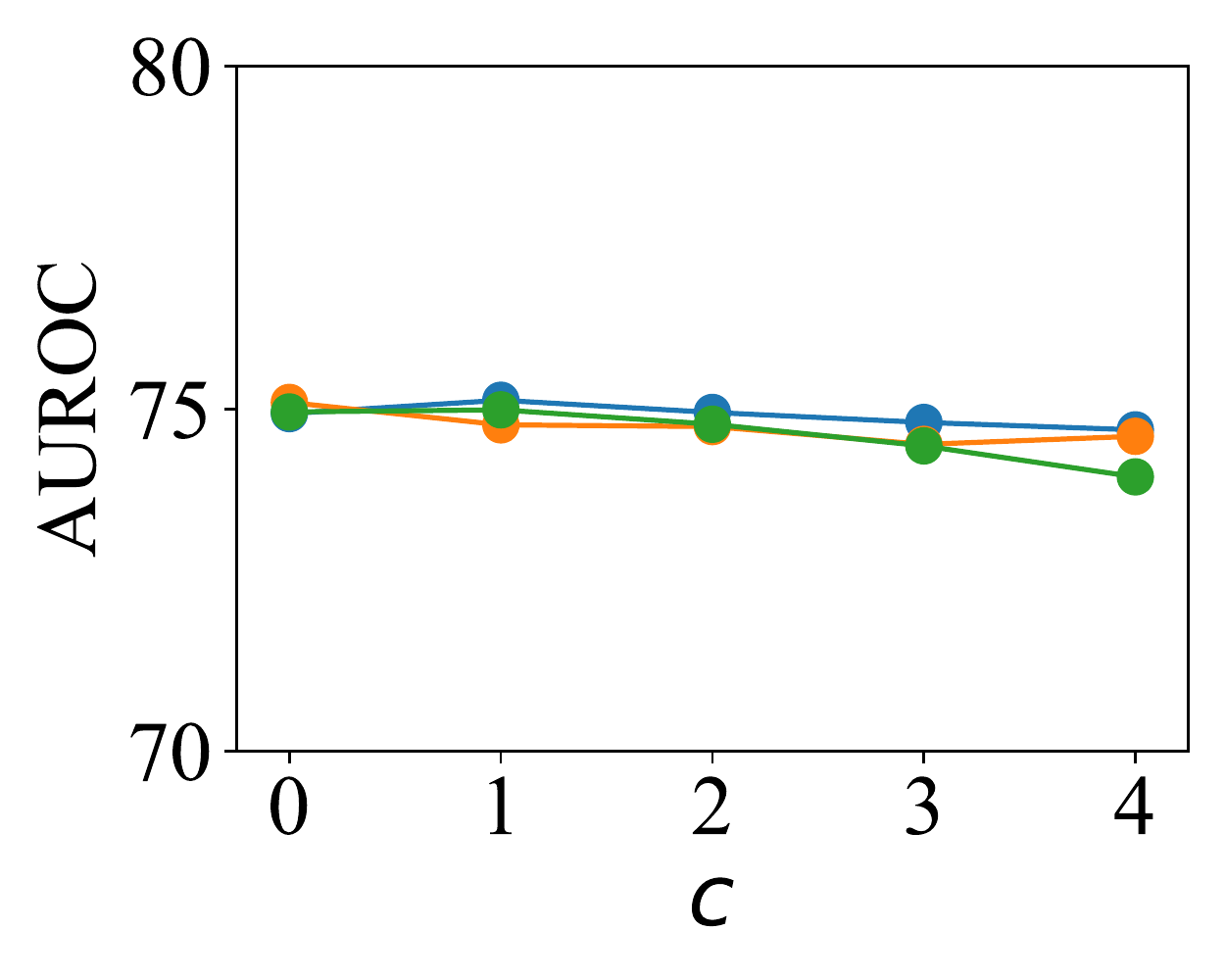}
\label{fig:credit_auroc}}
\subfloat[Credit (F1)]{\includegraphics[width=0.180\linewidth]{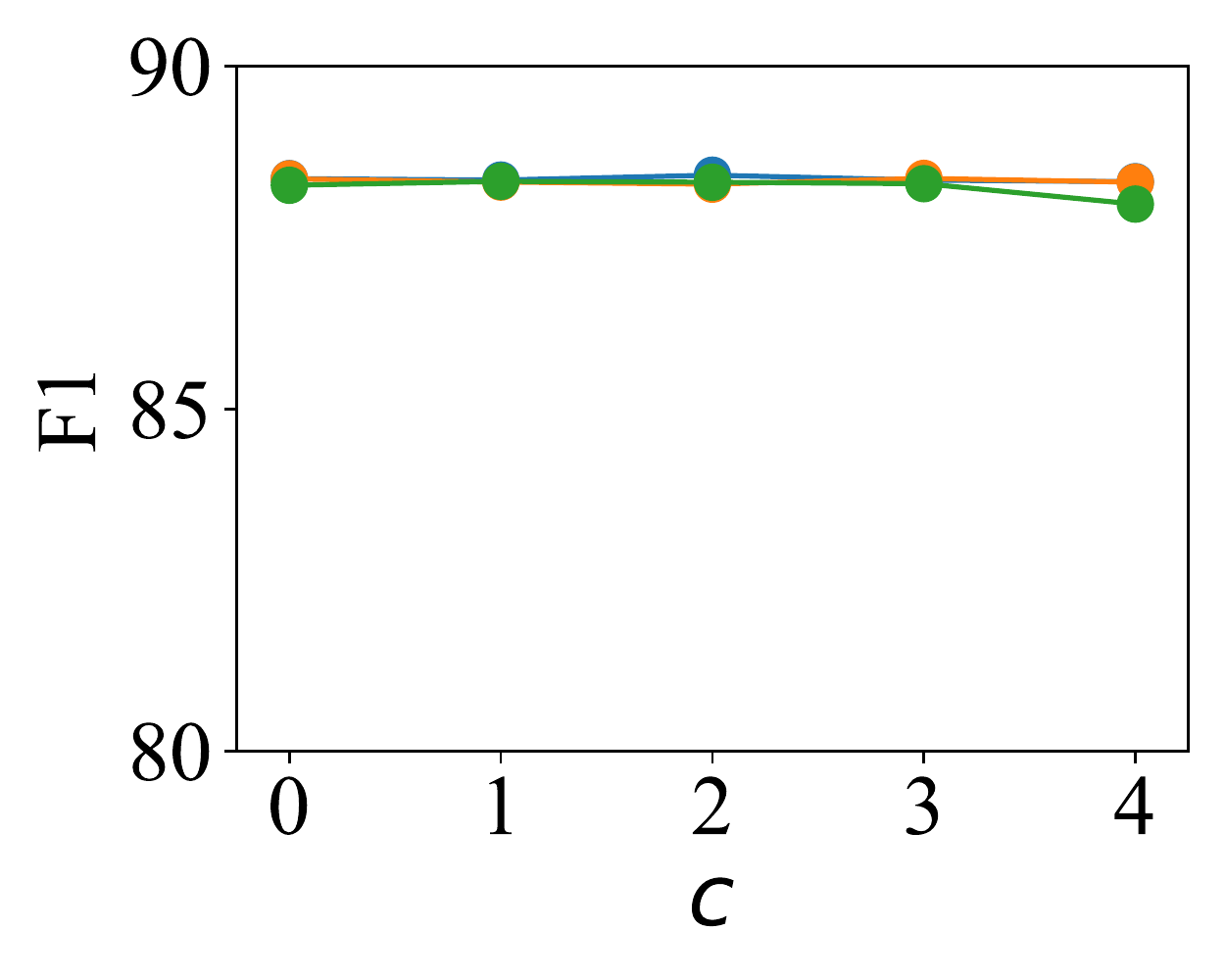}
\label{fig:credit_f1}}
\subfloat[Credit ($\Delta_{DP}$)]{\includegraphics[width=0.180\linewidth]{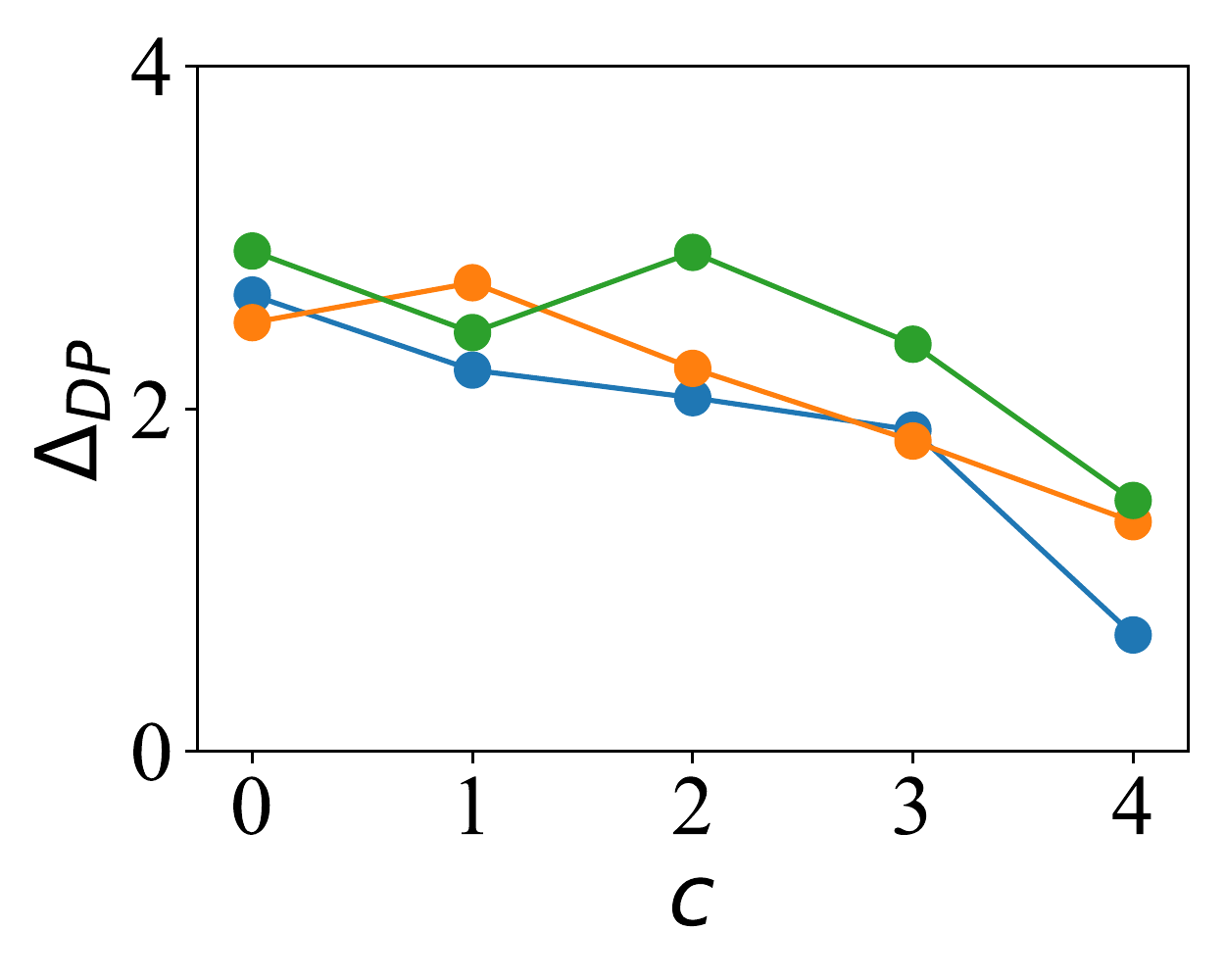}
\label{fig:credit_dp}}
\subfloat[Credit ($\Delta_{EO}$)]{\includegraphics[width=0.180\linewidth]{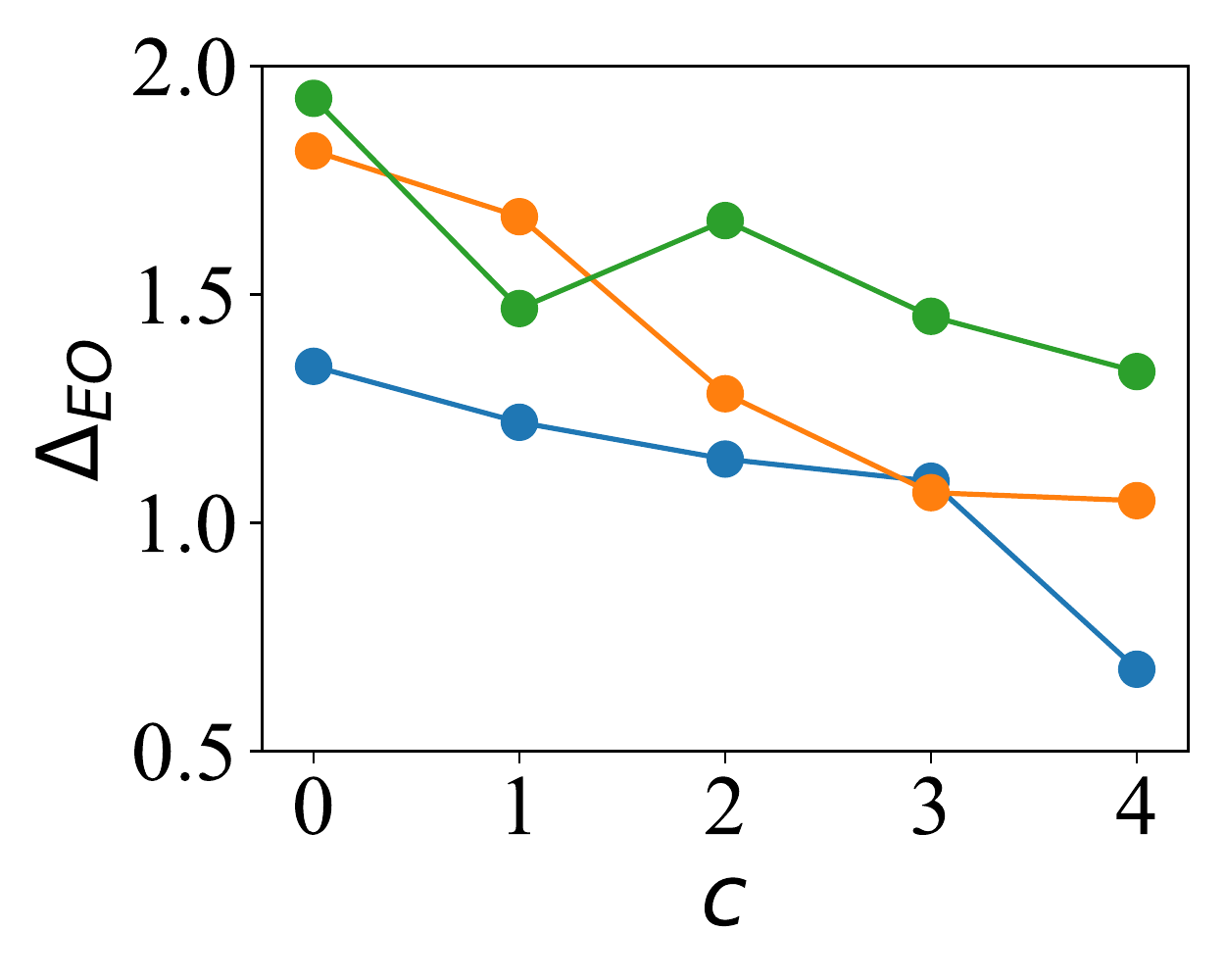}
\label{fig:credit_eo}}
\subfloat[Credit (Time)]{\includegraphics[width=0.180\linewidth]{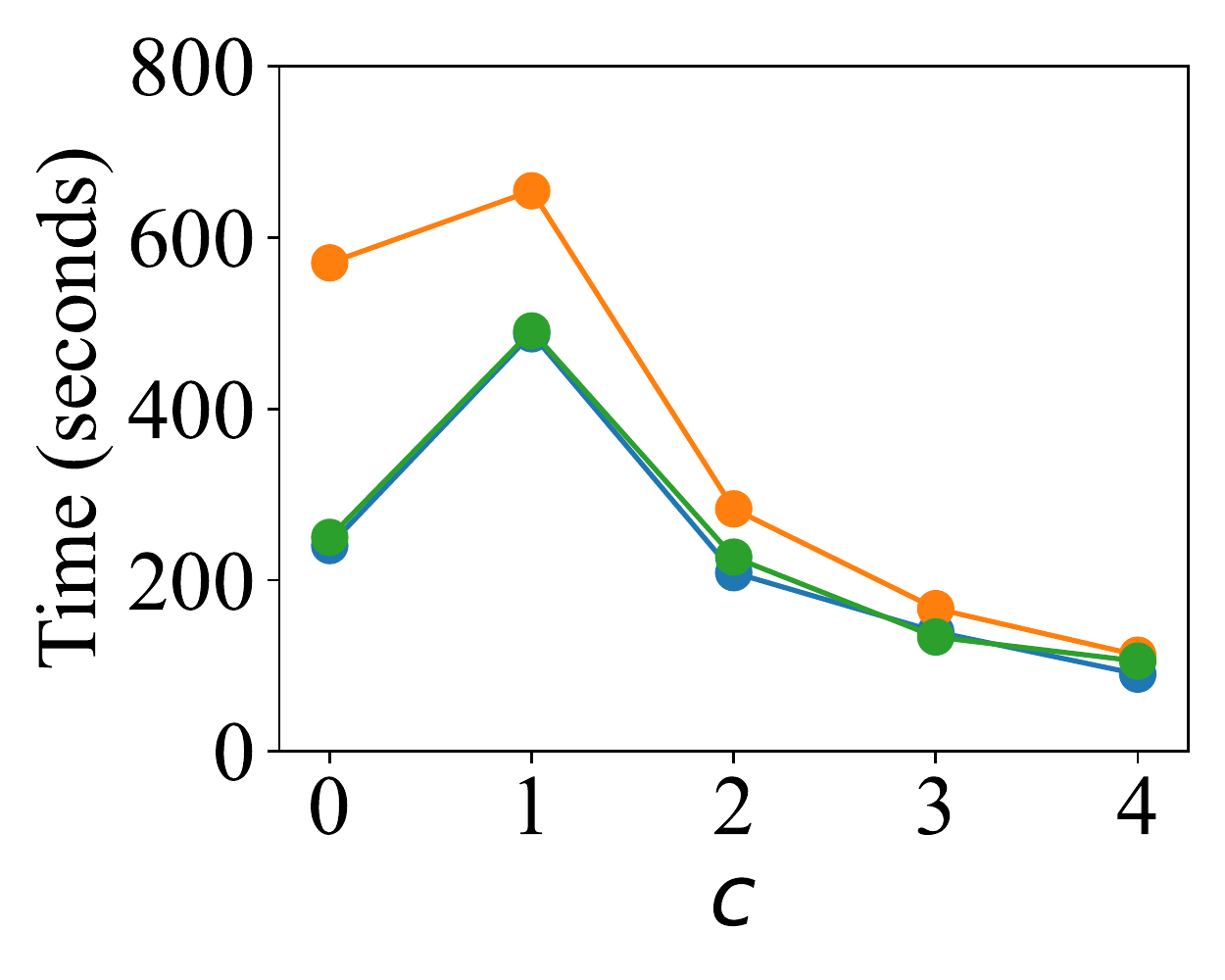}
\label{fig:credit_time}} 
\caption{Impact of coarsen level $c$ on \methodname{}'s utility, fairness, and efficiency.}
\label{fig:tune_cl}
\Description[Impact of coarsen level.]{The impact of coarsen level on utility, fairness, and efficiency.}
\vspace{-.1in}
\end{figure*}

We vary the coarsen level $c$ to observe its impact on utility, fairness, and efficiency. Results are shown in \autoref{fig:tune_cl}. Note that when $c=0$, \methodname{} is performing the base embedding method on the original graph. Generally, increasing $c$ leads to a slight decrease in AUROC and F1 scores. For example, the AUROC score of DeepWalk only decreases by $0.6\%$ after \methodname{} coarsens the graph 4 times. In some cases, \methodname{} achieves a better utility than the base embedding method (e.g., \methodname{}-Node2vec with $c=1$ on German). While the decrease of utility is negligible, increasing $c$ can visibly improve the fairness of representations. For example, vanilla DeepWalk has $\Delta_{DP}=7.22$ and $\Delta_{EO}=7.69$ on German, which is improved to $\Delta_{DP}=0.67$ and $\Delta_{EO}=0.26$ by \methodname{} ($c=2$). Last of all, increasing the coarsen level significantly improves the efficiency. Using a small $c$ may make \methodname{} slower because the time of coarsening and refinement outweighs the saved time of learning embedding when the coarsened graph is not small enough. Examples include $c=1$ on Credit. Given the little cost of utility, we suggest using a large $c$ for the sake of fairness and efficiency. 

\begin{figure*}[!t]
\vskip -.1in
\centering
\subfloat[Varying $\lambda_r$]{\includegraphics[width=0.25\linewidth]{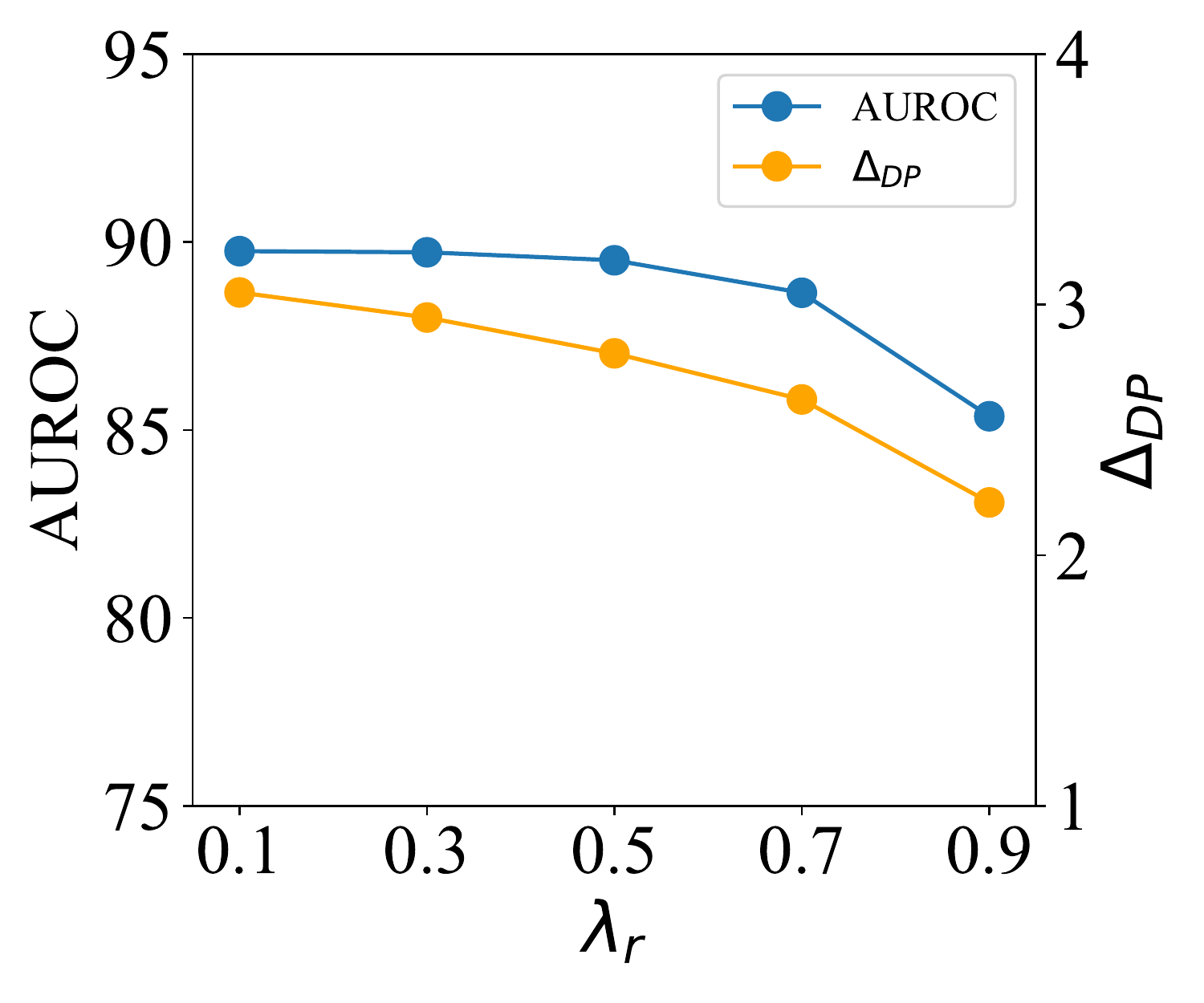}%
\label{fig:bail_fixlc}}
\subfloat[Varying $\lambda_c$]{\includegraphics[width=0.25\linewidth]{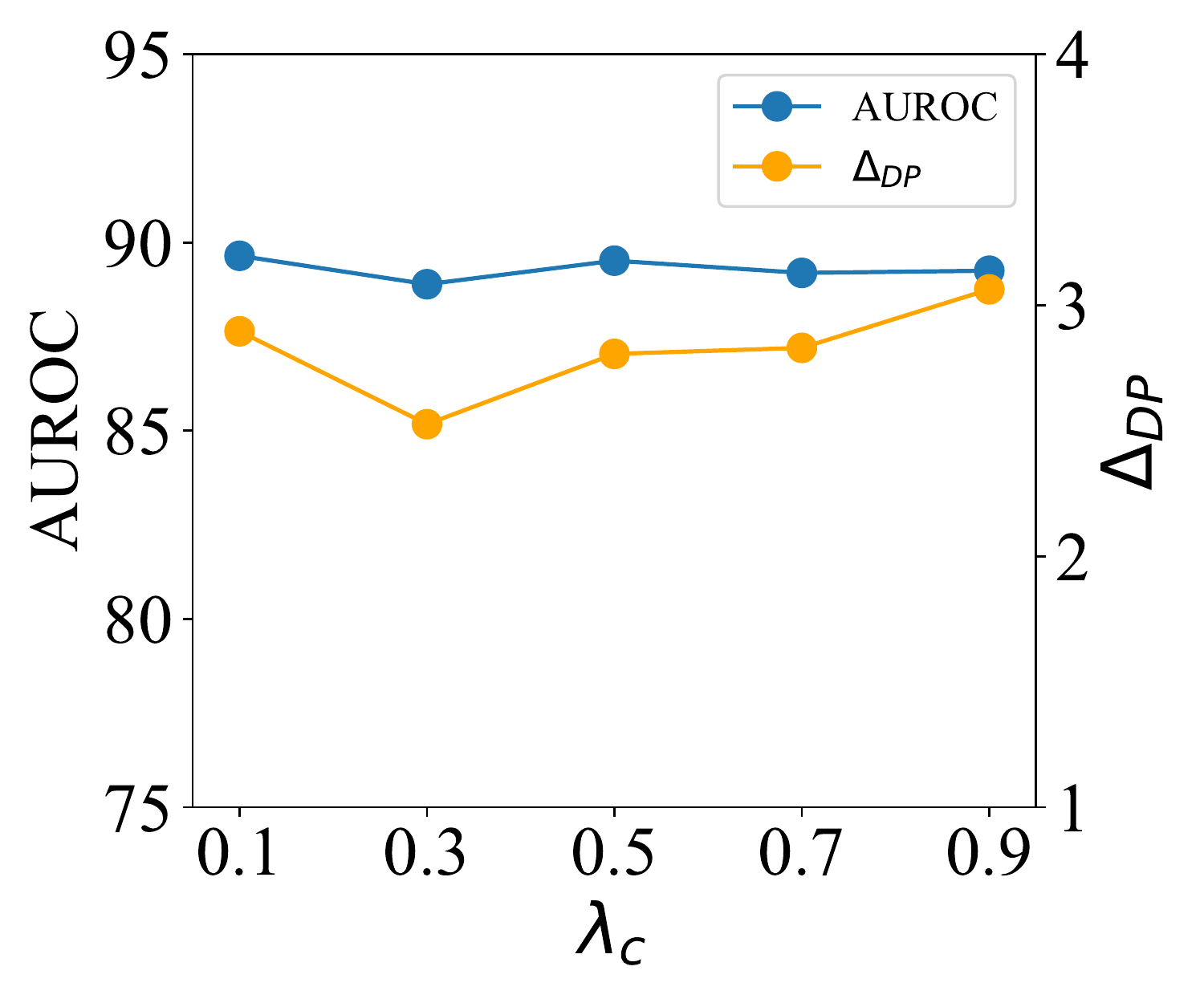}%
\label{fig:bail_fixll}}
\caption{Impact of varying $\lambda_c$ and $\lambda_r$ on utility and fairness on Recidivism dataset.}
\label{fig:lambda}
\Description[Impact of hyperparameters.]{Impact of varying lambdas on utility and fairness on Recidivism dataset.}
\vskip -.1in
\end{figure*}

\subsection{Trade-off between Utility and Fairness}
\label{sec:full_tradeoff}

To further explore the trade-off of \methodname{} between utility and fairness, we choose the values of $\lambda_c$ and $\lambda_r$ from $\{0.1, 0.3, 0.5, 0.7, 0.9\}$ respectively to observe the impact on performance. \autoref{fig:lambda} shows the results of \methodname{}-NetMF on Recidivism with $c=4$ (We only report these results for one dataset since results on other datasets are similar).  We use AUROC and $\Delta_{DP}$ as the metrics for utility and fairness.
It is clear that there is a trade-off between the utility scores and the fairness of learned embeddings on this dataset. 
Increasing fairness (represented by lower $\Delta_{DP}$) often causes a decrease in utility scores. We also observe that $\lambda_r$ has a larger impact on this tradeoff than $\lambda_c$. We also find in general that our choice of $\lambda_c$ = $\lambda_r = 0.5$ achieves a reasonable trade-off (applies to this dataset and the other datasets and tasks in our study). 
We do note of course that for different scenarios the designer may prefer to choose these parameters appropriately.


\begin{table*}[!t]
\caption{Comparison in link prediction between \methodname{} and other baselines on Pubmed dataset.}
\label{table:link_prediction}
\vskip 0.1in
\begin{center}
\begin{small}
\begin{tabular}{c|l|lll|rr|r}
\toprule
Dataset & \makecell[c]{Method} & \makecell[c]{AUROC ($\uparrow$)} & \makecell[c]{AP ($\uparrow$)} & \makecell[c]{Accuracy ($\uparrow$)} & \makecell[c]{$\Delta_{DP, ~\mathrm{LP}}~(\downarrow)$} & \makecell[c]{$\Delta_{EO, ~\mathrm{LP}}~(\downarrow)$} & \makecell[c]{Time $(\downarrow)$}\\
\midrule
\multirow{11}{*}{Pubmed}
 & VGAE & \textbf{95.03 $\pm$ 0.18} & \textbf{94.96 $\pm$ 0.19} & \textbf{87.48 $\pm$ 0.21} & 39.37 $\pm$ 0.88 & 10.28 $\pm$ 1.58 & \textbf{347.80} \\
 & FairAdj$_{\mathrm{T2=2}}$ & 94.29 $\pm$ 0.17 & 94.07 $\pm$ 0.14 & 86.63 $\pm$ 0.27 & 37.12 $\pm$ 0.89 & 7.57 $\pm$ 1.56 & 2218.41 \\
 & FairAdj$_{\mathrm{T2=5}}$ & 93.57 $\pm$ 0.19 & 93.21 $\pm$ 0.13 & 85.69 $\pm$ 0.16 & 35.06 $\pm$ 1.01 & 5.60 $\pm$ 1.55 & 2480.66 \\
 & FairAdj$_{\mathrm{T2=20}}$ & 91.78 $\pm$ 0.12 & 91.27 $\pm$ 0.24 & 83.20 $\pm$ 0.22 & \textbf{30.41 $\pm$ 0.89} & \textbf{2.41 $\pm$ 1.28} & 4532.99 \\
 & CFGE & 91.25 $\pm$ 5.32 & 92.08 $\pm$ 5.22 & 82.78 $\pm$ 5.79 & 33.03 $\pm$ 6.19 & 9.55 $\pm$ 2.35 & 4237.43 \\
 \cmidrule{2-8}
 & NetMF & \textbf{98.43 $\pm$ 0.07} & \textbf{98.26 $\pm$ 0.05} & 93.86 $\pm$ 0.19 & 38.59 $\pm$ 0.14 & \textbf{2.04 $\pm$ 0.15} & 281.35 \\
 & \methodname{}-NetMF & 98.11 $\pm$ 0.12 & 97.29 $\pm$ 0.20 & \textbf{94.84 $\pm$ 0.31} & \textbf{31.97 $\pm$ 0.68} & 2.70 $\pm$ 0.22 & \textbf{126.17} \\
 \cmidrule{2-8}
 & DeepWalk & 98.35 $\pm$ 0.14 & 98.05 $\pm$ 0.17 & 91.77 $\pm$ 0.29 & 35.02 $\pm$ 0.43 & 0.40 $\pm$ 0.12 & 354.27 \\\
 & \methodname{}-DeepWalk & \textbf{99.57 $\pm$ 0.04} & \textbf{99.32 $\pm$ 0.08} & \textbf{97.61 $\pm$ 0.06} & \textbf{27.30 $\pm$ 0.23} & \textbf{0.37 $\pm$ 0.11} & \textbf{201.03} \\
 \cmidrule{2-8}
 & Node2vec & \textbf{99.52 $\pm$ 0.04} & \textbf{99.44 $\pm$ 0.04} & 93.11 $\pm$ 0.21 & 40.28 $\pm$ 0.41 & \textbf{0.21 $\pm$ 0.13} & 249.52 \\
 & Fairwalk & 99.50 $\pm$ 0.05 & 99.43 $\pm$ 0.05 & 92.86 $\pm$ 0.24 & 38.58 $\pm$ 0.35 & 0.65 $\pm$ 0.12 & 225.99 \\
 & \methodname{}-Node2vec & 99.23 $\pm$ 0.07 & 98.68 $\pm$ 0.14 & \textbf{96.43 $\pm$ 0.06} & \textbf{26.51 $\pm$ 0.35} & 0.59 $\pm$ 0.05 & \textbf{143.01} \\
\bottomrule
\end{tabular}
\end{small}
\end{center}
\vskip -0.1in
\end{table*}

\section{Full Results for Link Predictions}
\label{sec:full_link_prediction}

We evaluate \methodname{} in the context of link prediction on three datasets. For \methodname{}, we set $c=2$ on smaller datasets (Cora and Citeseer) and $c=4$ on Pubmed.  \autoref{table:link_prediction} shows the results on Pubmed. For the results on other datasets, please refer to \autoref{table:link_prediction_short} in Section \ref{sec:link_prediction_short}. First, \methodname{} makes fair predictions on all datasets. Our framework has an improvement of up to $45.3\%$ on $\Delta_{DP, ~\mathrm{LP}}$ compared with the base embedding approaches. In terms of $\Delta_{EO, ~\mathrm{LP}}$, while the performance of \methodname{} declines on Pubmed very slightly ($2.70\%$ v.s. $2.04\%$ in NetMF), it greatly reduces the unfair predictions on Cora and Citeseer. Combining the observations on both metrics, \methodname{} successfully enforces fairness in the task of link prediction. When compared with FairWalk, \methodname{}-Node2vec always has a better fairness score (e.g., $12.49\%$ v.s. $23.59\%$ on Citeseer). In addition, we notice that FairAdj is less biased than VGAE, which demonstrates its effectiveness in debiasing. However, its best performance with $T2=20$ is still outperformed by \methodname{} on all datasets. For example, the $\Delta_{DP,~\mathrm{LP}}$ score of \methodname{}-Node2vec on Citeseer is $45.1\%$ lower than that of FairAdj ($T2=20$). Compared with CFGE, \methodname{} on top of Node2vec has a better performance in terms of fairness.

On the other hand, \methodname{} also performs well in terms of utility. In comparison to the standard embedding approaches, \methodname{} achieves a similar or better utility performance. For example, \methodname{} {\bf actually enhances the accuracy} of DeepWalk from $91.77\%$ to $97.61\%$ on Pubmed. Similar results can also be observed on the other metrics and datasets. Compared with VGAE-based methods, \methodname{} outperforms them again on utility. Examples include that AUROC scores of VGAE and \methodname{}-DeepWalk on Pubmed are $95.03\%$ v.s. $99.57\%$, respectively.

Finally, \methodname{} is more efficient than other baselines. For example, on the largest dataset Pubmed, \methodname{}-NetMF takes around 2 minutes, while NetMF needs around 5 minutes, and FairAdj with $T2=20$ even requires more than one hour to finish. In summary, \methodname{} can flexibly generalize to the link prediction task improving over the state of the art on both counts of fairness and efficiency at a marginal cost to utility. 

\end{document}


\title{\methodname{}: Towards an Efficient Framework for Fair Graph Representation Learning}

\author{Yuntian He}
\affiliation{%
  \institution{The Ohio State University}
  \streetaddress{2015 Neil Ave}
  \city{Columbus}
  \state{Ohio}
  \country{USA}
  \postcode{43210}
}
\email{he.1773@osu.edu}

\author{Saket Gurukar}
\authornote{Work done when the author was a graduate student at OSU.}
\affiliation{%
  \institution{Samsung Research America}
  \streetaddress{665 Clyde Ave}
  \city{Mountain View}
  \state{California}
  \country{USA}
  \postcode{94043}
}
\email{saket.gurukar@gmail.com}

\author{Srinivasan Parthasarathy}
\affiliation{%
  \institution{The Ohio State University}
  \streetaddress{2015 Neil Ave}
  \city{Columbus}
  \state{Ohio}
  \country{USA}
  \postcode{43210}
}
\email{srini@cse.ohio-state.edu}

\renewcommand{\shortauthors}{He et al.}






\newpage

\appendix 

\section{Full Theoretical Analysis}
\label{sec:time_analysis}

\begin{corollary}
\label{crl:coarse_results}
Algorithm 1 coarsens a graph $\mathcal{G}_{i} = (\mathcal{V}_{i}, \mathcal{E}_{i})$ into a smaller graph $\mathcal{G}_{i+1} = (\mathcal{V}_{i+1}, \mathcal{E}_{i+1})$ such that $\frac{1}{2}|\mathcal{V}_{i}| \leq |\mathcal{V}_{i+1}| \leq |\mathcal{V}_{i}|$ and $|\mathcal{E}_{i+1}| \leq |\mathcal{E}_{i}|$.
\end{corollary}
\begin{proof}
In the optimal case, all nodes in $\mathcal{G}_{i}$ get matched and therefore $|\mathcal{V}_{i+1}| = \frac{1}{2} |\mathcal{V}_{i}|$. The worst case is that all nodes are isolated so the number of nodes does not decrease.

Our coarsening algorithm adds an edge $(u, v)$ in $\mathcal{G}_{i+1}$ if and only if there exists at least one edge in $\mathcal{G}_{i}$ that connects one of $u$'s child nodes and one of $v$'s child nodes. Therefore $|\mathcal{E}_{i+1}| \leq |\mathcal{E}_{i}|$.
\end{proof}

\begin{lemma}
\label{lem:time_coarsen}
In the phase of graph coarsening, \methodname{} consumes $O\left( M\left( \sum_{i=0}^{c-1} (|\mathcal{V}_{i}| + |\mathcal{E}_{i}|) \right) \right)$ time. 
\end{lemma}

\begin{proof}
Without loss of generality, we assume the input graph is $G_{i}$ ($i < c$). For each edge, Algorithm 1 computes the fairness-aware edge weight in $O(M)$ time. Hence it takes $O\left (M (|\mathcal{E}_{i}| + |\mathcal{V}_{i}|)\right )$ time to match and merge nodes. The time of creating a coarsened graph after node matching is also $O\left (M (|\mathcal{E}_{i}| + |\mathcal{V}_{i}|)\right )$, which is mainly used for computing the attribute distribution of new nodes and the weights of new edges in the coarsened graph. Summing up the $c$ coarsen levels, the total time complexity of graph coarsening is $O\left( M\left( \sum_{i=0}^{c-1} (|\mathcal{V}_{i}| + |\mathcal{E}_{i}|) \right) \right)$.
\end{proof}

\begin{corollary}
\label{crl:time_baseembed}
\methodname{} can reduce the time of graph embedding exponentially in the optimal case since $|\mathcal{V}_{c}| \geq 2^{-c} |\mathcal{V}_{0}|$.
\end{corollary}

\begin{proof}
In Corollary 1, we analyze that the number of nodes can be reduced by up to half at each coarsen level. Thus the time complexity of base embedding can also be reduced exponentially when $c$ increases.
\end{proof}

\begin{lemma}
If the refinement model has $l$ layers, the time complexity of refinement is $O \left ( l(d+M) \left [ \sum_{i=0}^{c-1} \left ( |\mathcal{E}_{i}| + d |\mathcal{V}_{i}| \right ) \right ] \right )$. 
\end{lemma}

\begin{proof}
We again assume the input graph is $\mathcal{G}_{i+1}$ ($i > 0$) without loss of generality. Before applying the model, \methodname{} projects the embeddings from the supernodes to the child nodes of $\mathcal{G}_{i}$ in $O(d|\mathcal{V}_{i}|)$ time. In each layer of the model, \methodname{} needs $O((d + M)|\mathcal{V}_{i}|)$ to concatenate the input with the sensitive attributes. The following message passing process and the matrix multiplication take $O((d+M)|\mathcal{E}_{i}|)$ and $O(d(d+M)|\mathcal{V}_{i}|)$ time, respectively. Therefore the time complexity of each layer is $O\left((d+M) (|\mathcal{E}_{i}| + d|\mathcal{V}_{i}|) \right)$. Finally, the total time complexity of applying the refinement model is $O \left ( l(d+M) \left [ \sum_{i=0}^{c-1} \left ( |\mathcal{E}_{i}| + d |\mathcal{V}_{i}| \right ) \right ] \right )$.
\end{proof}

\noindent \textbf{Theorem 1}. When $L_f$ is minimized, the 2-norm of the difference between the mean embeddings of any two demographic groups regarding a given sensitive attribute is bounded by
\begin{equation}
    \left \| \bm{\mu}_{p} - \bm{\mu}_{q} \right \|_{2} \leq 2 (1 - \min(\beta_{p}, \beta_{q}))
\label{eqn:fairness_theorem}
\end{equation}
where $p, q$ are any two different values of the given sensitive attribute (e.g., gender or race). For $i \in \{p, q\}$, $\bm{\mu}_{i}$ denotes the mean embedding values of nodes from group $i$, and $\beta_{i}$ denotes the ratio of nodes from group $i$ that have at least one inter-group edge.
\label{thm:fairness}

\begin{proof}
    Our fairness loss $L_f$ in Equation (6) is the average negative cosine similarity of embeddings of all connected node pairs $(u, v) \in \mathcal{E}'_{c}$ with diverse sensitive attributes, which means $\phi(u, v) \geq \gamma$. To reach the global minimum of $L_f$, the parameters $\{\Theta_{i}\}_{i=1}^{l}$ are optimized to completely focus on sensitive attributes and generate identical embeddings for the nodes, i.e., the similarity is 1. In general, using a smaller $\gamma$ will have more edges impacted by the fairness objective. $\gamma=0$ is the strictest value that always leads to all nodes with inter-group connections having the same representations (denoted as $\bm{\bar{h}}$). A special case is when $c=1$, $\phi(u, v) = 1$ holds for any $u, v \in \mathcal{V}_0$ such that $s_u \neq s_v$. Therefore any $\gamma$ enforces fairness in the same strength.

    Recall that $p, q$ are any two different values of the given sensitive attribute (e.g., gender). For $i \in \{p, q\}$, let $\mathcal{U}_{i}$ be the set of nodes with attribute value $i$ (e.g., all nodes in $\mathcal{U}_{i}$ share the same gender $i$), Then let $\mathcal{U}_{ii}$ be the subset of $\mathcal{U}_{i}$ in which the nodes are only connected to nodes in $\mathcal{U}_{i}$. Note that $\beta_{i} = 1 - |\mathcal{U}_{ii}| / |\mathcal{U}_{i}|$. The learned embedding of node $u$ is denoted as $\bm{h}_{u}$. When $L_{f}$ is minimized with $\gamma=0$, we have
    \begin{equation}
    \begin{split}
        & ~\left \| \bm{\mu}_{p} - \bm{\mu}_{q} \right \|_{2} \nonumber \\
        = & ~\| \beta_{p} \bm{\bar{h}} + \frac{1 - \beta_{p}}{|\mathcal{U}_{pp}|} \sum_{u \in \mathcal{U}_{pp}} \bm{h}_{u} -~\beta_{q} \bm{\bar{h}} - \frac{1 - \beta_{q}}{|\mathcal{U}_{qq}|} \sum_{v \in \mathcal{U}_{qq}} \bm{h}_{v}\|_{2} \nonumber \\
        \leq & ~|\beta_{p} - \beta_{q}| \| \bm{\bar{h}} \|_{2} + (1 - \beta_{p}) \| \frac{1}{|\mathcal{U}_{pp}|} \sum_{u \in \mathcal{U}_{pp}} \bm{h}_{u} \|_{2} + (1 - \beta_{q}) \| \frac{1}{|\mathcal{U}_{qq}|} \sum_{v \in \mathcal{U}_{qq}} \bm{h}_{v} \|_{2} \nonumber
    \end{split}
    \end{equation}

    Note that the output embeddings of our refinement model are L2-normalized, hence we finally have
    \begin{equation}
    \begin{split}
        \left \| \bm{\mu}_{p} - \bm{\mu}_{q} \right \|_{2} & \leq |\beta_{p} - \beta_{q}| + (1 - \beta_{p}) + (1 - \beta_{q}) = 2(1 - \min(\beta_{p}, \beta_{q})) \nonumber
    \end{split}
    \end{equation}
    Therefore Equation (8) holds when $L_{f}$ is minimized. 
\end{proof}





\section{Dataset Description}
\label{sec:dataset_description}

In \textit{German}~\cite{agarwal2021towards}, each node is a client in a German bank, and two nodes are linked if their attributes are similar. The task is to classify a client's credit risk as good or bad, and the sensitive attribute is the client's gender. 

\textit{Recidivism}~\cite{agarwal2021towards} is a graph created from a set of bail outcomes from US state courts between 1990-2009, where nodes are defendants and edges connect two nodes if they have similar past criminal records and demographic attributes. The task is to predict if a defendant will commit a violent crime or not, while the sensitive attribute is race. 

\textit{Credit}~\cite{agarwal2021towards} consists of credit card applicants with their demographic features and payment patterns. Each node is an individual, and two nodes are connected based on feature similarity. In this dataset, age is used as the sensitive attribute, and the predicted label is whether the applicant will default on an upcoming payment. 

\textit{Pokec-n}~\cite{takac2012data} is collected from a Slovakia social network. We use both region and gender as the sensitive attributes, and choose each user's field of work as the predicted label. Note that Pokec-n has multiple sensitive attributes and a multi-class target, which \methodname{} can handle by design. However, existing research~\cite{dai2021say, dong2022edits, franco2022deep} has only evaluated the use of this data with one sensitive attribute at-a-time with the target label binarized - a key limitation. We discuss how \methodname{} can redress this limitation in Section 5.3.

The remaining three datasets (namely, \textit{Cora}~\cite{sen2008collective}, \textit{Citeseer}~\cite{sen2008collective}, and \textit{Pubmed}~\cite{namata2012query}) are citation networks widely evaluated in the graph representation learning literature~\cite{li2020dyadic, spinelli2021fairdrop, current2022fairmod}. In these data, each node denotes a paper, and each edge links two nodes if one paper cites the other. 
As in prior work~\cite{li2020dyadic, spinelli2021fairdrop, current2022fairmod}, we treat the category of a paper as its sensitive attribute. The task is to predict whether a paper is cited by the other (or vice versa).


\begin{table*}[!ht]
\caption{Comparison in node classification between \methodname{} and other baselines on Recidivism dataset.}
\label{table:overall_results_techreport}
\vskip -0.1in
\begin{center}
\begin{small}
\begin{tabular}{c|l|ll|rr|r}
\toprule
Dataset & \makecell[c]{Method} & \makecell[c]{AUROC ($\uparrow$)} & \makecell[c]{F1 ($\uparrow$)} & \makecell[c]{$\Delta_{DP}~(\downarrow)$} & \makecell[c]{$\Delta_{EO}~(\downarrow)$} & \makecell[c]{Time $(\downarrow)$}\\
\midrule
\multirow{11}{*}{Recidivism}
 & NetMF & \textbf{94.63 $\pm$ 0.17} & \textbf{85.46 $\pm$ 0.29} & 3.41 $\pm$ 0.21 & 1.62 $\pm$ 0.78 & 141.90 \\
 & \methodname{}-NetMF & 89.52 $\pm$ 0.50 & 77.65 $\pm$ 0.47 & \textbf{2.81 $\pm$ 0.50} & \textbf{0.75 $\pm$ 0.55} & \textbf{29.66} \\
\cmidrule{2-7}
 & DeepWalk & \textbf{93.33 $\pm$ 0.35} & \textbf{83.62 $\pm$ 0.42} & 3.47 $\pm$ 0.37 & 1.28 $\pm$ 0.60 & 303.68 \\
 & \methodname{}-DeepWalk & 86.93 $\pm$ 0.74 & 73.50 $\pm$ 0.99 & \textbf{2.71 $\pm$ 0.58} & \textbf{1.08 $\pm$ 0.77} & \textbf{45.93} \\
\cmidrule{2-7}
 & Node2vec & \textbf{92.56 $\pm$ 0.26} & \textbf{83.31 $\pm$ 0.36} & 3.61 $\pm$ 0.56 & 1.57 $\pm$ 0.97 & 136.33 \\
 & Fairwalk & 92.43 $\pm$ 0.43 & 82.99 $\pm$ 0.51 & 3.32 $\pm$ 0.24 & 1.48 $\pm$ 0.66 & 133.62 \\
 & \methodname{}-Node2vec & 87.00 $\pm$ 0.50 & 71.34 $\pm$ 0.86 & \textbf{2.75 $\pm$ 0.35} & \textbf{1.15 $\pm$ 0.65} & \textbf{38.67} \\
\cmidrule{2-7}
 & Vanilla GCN & \textbf{88.16 $\pm$ 1.72} & \textbf{77.68 $\pm$ 1.63} & 3.83 $\pm$ 0.59 & 1.46 $\pm$ 0.71 & \textbf{474.57} \\
 & FairGNN & 67.26 $\pm$ 7.80 & 44.63 $\pm$ 14.87 & \textbf{0.67 $\pm$ 0.45} & 1.24 $\pm$ 0.40 & 1071.39 \\
 & NIFTY & 77.89 $\pm$ 4.21 & 64.44 $\pm$ 6.11 & 1.34 $\pm$ 1.01 & \textbf{0.63 $\pm$ 0.42} & 1651.09 \\
 & EDITS & 79.48 $\pm$ 13.26 & 69.66 $\pm$ 13.28 & 4.39 $\pm$ 2.10 & 2.52 $\pm$ 2.04 & 1311.42 \\
 & CFGE & 60.92 $\pm$ 1.88 & 25.58 $\pm$ 6.45 & 0.81 $\pm$ 0.58 & 1.45 $\pm$ 0.88 & 2498.52 \\
\bottomrule
\end{tabular}
\end{small}
\end{center}
\vskip -0.1in
\end{table*}

\section{Node Classification on Recidivism}
\label{sec:full_result_node_classification}

We conduct the experiments of node classification on another dataset Recidivism and revisit the questions in Section 5.2. Results are shown in \autoref{table:overall_results_techreport}.

\textbf{A1) Fairness:} \methodname{} improves the fairness of all unsupervised graph embedding approaches. In Recidivism, \methodname{} decreases the $\Delta_{DP}$ scores of NetMF and DeepWalk by $17.6\%$ and $21.9\%$, respectively. In terms of $\Delta_{EO}$, \methodname{} improves the fairness of NetMF and DeepWalk by $53.7\%$ and $15.6\%$. On top of Node2vec, \methodname{} outperforms FairWalk in terms of both $\Delta_{DP}$ and $\Delta_{EO}$. Among the specialized methods, FairGNN has the lowest $\Delta_{DP}$ score and NIFTY has the best $\Delta_{EO}$ score which is slightly better than \methodname{}-NetMF ($0.63\%$ v.s. $0.75\%$). However, this is because these models trade too much utility for fairness (For example, in terms of AUROC, NIFTY $77.89\%$ v.s. \methodname{}-NetMF $89.52\%$). 

\textbf{A2) Efficiency:} \methodname{} is more efficient than other baselines. While GNN-based approaches take up to $2498.5$ seconds, \methodname{} on top of NetMF finishes in only $29.7$ seconds, which is $84.2 \times$ faster. Compared with the unsupervised graph embedding approaches, \methodname{} still improves the efficiency of graph embedding. 

\textbf{A3) Utility:} Compared with the base embedding methods, the utility scores of \methodname{} slightly drop which is fairly remarkable given that \methodname{} significantly improves fairness and efficiency. Among the specialized approaches, all approaches except the vanilla GCN are outperformed by \methodname{} in terms of AUROC and F1. This demonstrates that \methodname{} achieves a better tradeoff between utility and fairness than these GNN-based approaches.

In summary, \methodname{} on top of graph embedding approaches can compete or improve on fairness and utility with various specialized methods while outperforming them significantly in terms of efficiency.  

%
%
\section{Full Ablation Study}
\label{sec:full_ablation_study}
\subsection{Tuning the Coarsen Level}
\label{sec:full_tune_cl}


\begin{figure*}[!t]
\centering
\includegraphics[width=.6\linewidth]{images/vary_cl/legend.pdf}\\
\subfloat[German (AUROC)]{\includegraphics[width=0.180\linewidth]{images/vary_cl/german_auroc.pdf}
\label{fig:german_auroc}}
\subfloat[German (F1)]{\includegraphics[width=0.180\linewidth]{images/vary_cl/german_f1.pdf}
\label{fig:german_f1}}
\subfloat[German ($\Delta_{DP}$)]{\includegraphics[width=0.180\linewidth]{images/vary_cl/german_dp.pdf}
\label{fig:german_dp}}
\subfloat[German ($\Delta_{EO}$)]{\includegraphics[width=0.180\linewidth]{images/vary_cl/german_eo.pdf}
\label{fig:german_eo}}
\subfloat[German (Time)]{\includegraphics[width=0.180\linewidth]{images/vary_cl/german_time.pdf}
\label{fig:german_time}} \\

\subfloat[Recidivism (AUROC)]{\includegraphics[width=0.180\linewidth]{images/vary_cl/bail_auroc.pdf}
\label{fig:bail_auroc}}
\subfloat[Recidivism (F1)]{\includegraphics[width=0.180\linewidth]{images/vary_cl/bail_f1.pdf}
\label{fig:bail_f1}}
\subfloat[Recidivism ($\Delta_{DP}$)]{\includegraphics[width=0.180\linewidth]{images/vary_cl/bail_dp.pdf}
\label{fig:bail_dp}}
\subfloat[Recidivism ($\Delta_{EO}$)]{\includegraphics[width=0.180\linewidth]{images/vary_cl/bail_eo.pdf}
\label{fig:bail_eo}}
\subfloat[Recidivism (Time)]{\includegraphics[width=0.180\linewidth]{images/vary_cl/bail_time.pdf}
\label{fig:bail_time}} \\

\subfloat[Credit (AUROC)]{\includegraphics[width=0.180\linewidth]{images/vary_cl/credit_auroc.pdf}
\label{fig:credit_auroc}}
\subfloat[Credit (F1)]{\includegraphics[width=0.180\linewidth]{images/vary_cl/credit_f1.pdf}
\label{fig:credit_f1}}
\subfloat[Credit ($\Delta_{DP}$)]{\includegraphics[width=0.180\linewidth]{images/vary_cl/credit_dp.pdf}
\label{fig:credit_dp}}
\subfloat[Credit ($\Delta_{EO}$)]{\includegraphics[width=0.180\linewidth]{images/vary_cl/credit_eo.pdf}
\label{fig:credit_eo}}
\subfloat[Credit (Time)]{\includegraphics[width=0.180\linewidth]{images/vary_cl/credit_time.pdf}
\label{fig:credit_time}} 
\caption{Impact of coarsen level $c$ on \methodname{}'s utility, fairness, and efficiency.}
\label{fig:tune_cl}
\Description[Impact of coarsen level.]{The impact of coarsen level on utility, fairness, and efficiency.}
\vspace{-.1in}
\end{figure*}



We vary the coarsen level $c$ to observe its impact on utility, fairness, and efficiency. Results are shown in \autoref{fig:tune_cl}. Note that when $c=0$, \methodname{} is performing the base embedding method on the original graph. Generally, increasing $c$ leads to a slight decrease in AUROC and F1 scores. For example, the AUROC score of DeepWalk only decreases by $0.6\%$ after \methodname{} coarsens the graph 4 times. In some cases, \methodname{} achieves a better utility than the base embedding method (e.g., \methodname{}-Node2vec with $c=1$ on German). While the decrease of utility is negligible, increasing $c$ can visibly improve the fairness of representations. For example, vanilla DeepWalk has $\Delta_{DP}=7.22$ and $\Delta_{EO}=7.69$ on German, which is improved to $\Delta_{DP}=0.67$ and $\Delta_{EO}=0.26$ by \methodname{} ($c=2$). Last of all, increasing the coarsen level significantly improves the efficiency. Using a small $c$ may make \methodname{} slower because the time of coarsening and refinement outweighs the saved time of learning embedding when the coarsened graph is not small enough. Examples include $c=1$ on Credit. Given the little cost of utility, we suggest using a large $c$ for the sake of fairness and efficiency. 






\begin{figure}[!t]
\vskip -.1in
\centering
\subfloat[Varying $\lambda_r$]{\includegraphics[width=0.3\linewidth]{images/vary_lambdas/lambda_chart_fixlc.pdf}%
\label{fig:bail_fixlc}}
\subfloat[Varying $\lambda_c$]{\includegraphics[width=0.3\linewidth]{images/vary_lambdas/lambda_chart_fixll.pdf}%
\label{fig:bail_fixll}}
\caption{Impact of varying $\lambda_c$ and $\lambda_r$ on utility and fairness on Recidivism dataset.}
\label{fig:lambda}
\Description[Impact of hyperparameters.]{Impact of varying lambdas on utility and fairness on Recidivism dataset.}
\vskip -.1in
\end{figure}

\subsection{Trade-off between Utility and Fairness}
\label{sec:full_tradeoff}

To further explore the trade-off of \methodname{} between utility and fairness, we choose the values of $\lambda_c$ and $\lambda_r$ from $\{0.1, 0.3, 0.5, 0.7, 0.9\}$ respectively to observe the impact on performance. \autoref{fig:lambda} shows the results of \methodname{}-NetMF on Recidivism with $c=4$ (We only report these results for one dataset since results on other datasets are similar).  We use AUROC and $\Delta_{DP}$ as the metrics for utility and fairness.
It is clear that there is a trade-off between the utility scores and the fairness of learned embeddings on this dataset. 
Increasing fairness (represented by lower $\Delta_{DP}$) often causes a decrease in utility scores. We also observe that $\lambda_r$ has a larger impact on this tradeoff than $\lambda_c$. We also find in general that our choice of $\lambda_c$ = $\lambda_r = 0.5$ achieves a reasonable trade-off (applies to this dataset and the other datasets and tasks in our study). 
We do note of course that for different scenarios the designer may prefer to choose these parameters appropriately.


\begin{table*}[!t]
\caption{Comparison in link prediction between \methodname{} and other baselines on Pubmed dataset.}
\label{table:link_prediction}
\vskip 0.1in
\begin{center}
\begin{small}
\begin{tabular}{c|l|lll|rr|r}
\toprule
Dataset & \makecell[c]{Method} & \makecell[c]{AUROC ($\uparrow$)} & \makecell[c]{AP ($\uparrow$)} & \makecell[c]{Accuracy ($\uparrow$)} & \makecell[c]{$\Delta_{DP, ~\mathrm{LP}}~(\downarrow)$} & \makecell[c]{$\Delta_{EO, ~\mathrm{LP}}~(\downarrow)$} & \makecell[c]{Time $(\downarrow)$}\\
\midrule
\multirow{11}{*}{Pubmed}
 & VGAE & \textbf{95.03 $\pm$ 0.18} & \textbf{94.96 $\pm$ 0.19} & \textbf{87.48 $\pm$ 0.21} & 39.37 $\pm$ 0.88 & 10.28 $\pm$ 1.58 & \textbf{347.80} \\
 & FairAdj$_{\mathrm{T2=2}}$ & 94.29 $\pm$ 0.17 & 94.07 $\pm$ 0.14 & 86.63 $\pm$ 0.27 & 37.12 $\pm$ 0.89 & 7.57 $\pm$ 1.56 & 2218.41 \\
 & FairAdj$_{\mathrm{T2=5}}$ & 93.57 $\pm$ 0.19 & 93.21 $\pm$ 0.13 & 85.69 $\pm$ 0.16 & 35.06 $\pm$ 1.01 & 5.60 $\pm$ 1.55 & 2480.66 \\
 & FairAdj$_{\mathrm{T2=20}}$ & 91.78 $\pm$ 0.12 & 91.27 $\pm$ 0.24 & 83.20 $\pm$ 0.22 & \textbf{30.41 $\pm$ 0.89} & \textbf{2.41 $\pm$ 1.28} & 4532.99 \\
 & CFGE & 91.25 $\pm$ 5.32 & 92.08 $\pm$ 5.22 & 82.78 $\pm$ 5.79 & 33.03 $\pm$ 6.19 & 9.55 $\pm$ 2.35 & 4237.43 \\
 \cmidrule{2-8}
 & NetMF & \textbf{98.43 $\pm$ 0.07} & \textbf{98.26 $\pm$ 0.05} & 93.86 $\pm$ 0.19 & 38.59 $\pm$ 0.14 & \textbf{2.04 $\pm$ 0.15} & 281.35 \\
 & \methodname{}-NetMF & 98.11 $\pm$ 0.12 & 97.29 $\pm$ 0.20 & \textbf{94.84 $\pm$ 0.31} & \textbf{31.97 $\pm$ 0.68} & 2.70 $\pm$ 0.22 & \textbf{126.17} \\
 \cmidrule{2-8}
 & DeepWalk & 98.35 $\pm$ 0.14 & 98.05 $\pm$ 0.17 & 91.77 $\pm$ 0.29 & 35.02 $\pm$ 0.43 & 0.40 $\pm$ 0.12 & 354.27 \\\
 & \methodname{}-DeepWalk & \textbf{99.57 $\pm$ 0.04} & \textbf{99.32 $\pm$ 0.08} & \textbf{97.61 $\pm$ 0.06} & \textbf{27.30 $\pm$ 0.23} & \textbf{0.37 $\pm$ 0.11} & \textbf{201.03} \\
 \cmidrule{2-8}
 & Node2vec & \textbf{99.52 $\pm$ 0.04} & \textbf{99.44 $\pm$ 0.04} & 93.11 $\pm$ 0.21 & 40.28 $\pm$ 0.41 & \textbf{0.21 $\pm$ 0.13} & 249.52 \\
 & Fairwalk & 99.50 $\pm$ 0.05 & 99.43 $\pm$ 0.05 & 92.86 $\pm$ 0.24 & 38.58 $\pm$ 0.35 & 0.65 $\pm$ 0.12 & 225.99 \\
 & \methodname{}-Node2vec & 99.23 $\pm$ 0.07 & 98.68 $\pm$ 0.14 & \textbf{96.43 $\pm$ 0.06} & \textbf{26.51 $\pm$ 0.35} & 0.59 $\pm$ 0.05 & \textbf{143.01} \\
\bottomrule
\end{tabular}
\end{small}
\end{center}
\vskip -0.1in
\end{table*}

\section{Full Results for Link Predictions}
\label{sec:full_link_prediction}

We evaluate \methodname{} in the context of link prediction on three datasets. For \methodname{}, we set $c=2$ on smaller datasets (Cora and Citeseer) and $c=4$ on Pubmed.  \autoref{table:link_prediction} shows the results on Pubmed. For the results on other datasets, please refer to Table 7 in Section 5.5. First, \methodname{} makes fair predictions on all datasets. Our framework has an improvement of up to $45.3\%$ on $\Delta_{DP, ~\mathrm{LP}}$ compared with the base embedding approaches. In terms of $\Delta_{EO, ~\mathrm{LP}}$, while the performance of \methodname{} declines on Pubmed very slightly ($2.70\%$ v.s. $2.04\%$ in NetMF), it greatly reduces the unfair predictions on Cora and Citeseer. Combining the observations on both metrics, \methodname{} successfully enforces fairness in the task of link prediction. When compared with FairWalk, \methodname{}-Node2vec always has a better fairness score (e.g., $12.49\%$ v.s. $23.59\%$ on Citeseer). In addition, we notice that FairAdj is less biased than VGAE, which demonstrates its effectiveness in debiasing. However, its best performance with $T2=20$ is still outperformed by \methodname{} on all datasets. For example, the $\Delta_{DP,~\mathrm{LP}}$ score of \methodname{}-Node2vec on Citeseer is $45.1\%$ lower than that of FairAdj ($T2=20$). Compared with CFGE, \methodname{} on top of Node2vec has a better performance in terms of fairness.

On the other hand, \methodname{} also performs well in terms of utility. In comparison to the standard embedding approaches, \methodname{} achieves a similar or better utility performance. For example, \methodname{} {\bf actually enhances the accuracy} of DeepWalk from $91.77\%$ to $97.61\%$ on Pubmed. Similar results can also be observed on the other metrics and datasets. Compared with VGAE-based methods, \methodname{} outperforms them again on utility. Examples include that AUROC scores of VGAE and \methodname{}-DeepWalk on Pubmed are $95.03\%$ v.s. $99.57\%$, respectively.

Finally, \methodname{} is more efficient than other baselines. For example, on the largest dataset Pubmed, \methodname{}-NetMF takes around 2 minutes, while NetMF needs around 5 minutes, and FairAdj with $T2=20$ even requires more than one hour to finish. In summary, \methodname{} can flexibly generalize to the link prediction task improving over the state of the art on both counts of fairness and efficiency at a marginal cost to utility.


\bibliographystyle{ACM-Reference-Format}
\bibliography{mybib}